\newtheorem{theorem}{Theorem}
\newtheorem{lemma}{Lemma}
\newtheorem{definition}{Definition}
\newtheorem{remark}{Remark}
\newcommand{\coopsecommcost}{\texttt{Coop-SE-Comm-Cost}}
\newcommand{\createclusters}{\texttt{Create-Clusters}}
\newcommand{\clusterroot}{\texttt{cluster-root}}
\newcommand{\clusterboundary}{\texttt{cluster-boundary}}
\newcommand{\playroundrobin}{\texttt{Play-Action-Round-Robin}}
\newcommand{\clustering}{\mathcal{C}}
\newcommand{\shorttau}{S_\tau}
\newcommand{\graphsize}{m}
\newcommand{\numagents}{\graphsize}
\newcommand{\diam}{D}
\newcommand{\actions}{A}
\newcommand{\Actionbb}{\mathbb{A}}
\newcommand{\bbE}{\mathbb{E}}
\newcommand{\bbP}{\mathbb{P}}
\newcommand{\bbI}{\mathbb{I}}
\newcommand{\logterm}{\iota}
\newcommand{\calA}{\mathcal A}
\newcommand{\calT}{\mathcal T}
\newcommand{\calF}{\mathcal F}
\newcommand{\Dist}{\mathcal D}
\renewcommand{\a}{a}
\definecolor{redish}{RGB}{255,150,150}
\newcommand{\coopse}{\texttt{Coop-SE}}
\newcommand{\coopserestricted}{\texttt{Coop-SE-Restricted}}
\newcommand{\coopselowcomclock}{\texttt{Coop-SE-CONGEST}}
\newcommand{\coopserandom}{coop-SE-rand}
\newcommand{\RWD}{\texttt{rwd}}
\newcommand{\ELIM}{\texttt{elim}}
\newcommand{\RWDMANY}{\texttt{rwdMany}}
\newcommand{\indcount}{b}
\newcommand{\ElimStep}{\texttt{Elim-Step}}
\newcommand{\id}{v} 
\newcommand{\agents}{V} 
\newcommand{\neigh}{N}
\newcommand{\sameintervalspecific}[1]{[t_#1+\ceil{\tau_#1/4},t_#1+\floor{\tau_#1/2}]}
\newcommand{\sameinterval}{\sameintervalspecific{j}}
\newcommand{\sameintervalagent}{[t^v_j+\ceil{\tau^v_j/4},t^v_j+\floor{\tau^v_j/2}]}
\newcommand{\sameintervalagentsimple}{[t^v_j+\tau^v_j/4,t^v_j+\tau^v_j/2]}
\newcommand{\sameneighclear}{N_{\leq\tau_j/4}}
\newcommand{\sameneigh}{\sameneighclear}
\newcommand{\sameneighagent}{N^v_{\leq\tau^v_j/4}}
\newcommand{\goodtau}{G_\tau}
\newcommand{\largedelta}{A_{\Delta}}
\newcommand{\Deltai}{\Delta_{i}}
\newcommand{\Graphcomm}{\mathcal{G}}
\newcommand{\logtermval}{\log(3\numagents T \actions)}
\newcommand{\mynorm}[1]{\|#1\|}
\newcommand{\nleqd}[1]{N^{#1}_{\leq d}}
\newcommand{\distance}{d_\Graphcomm} 
\newcommand{\distancetree}{d_\mathcal{T}} 
\newcommand{\sendoneaction}{\texttt{Send-One-Action}}
\newcommand{\updatetreestep}{\texttt{Update-Tree-Step}}
\newcommand{\lowcomsameinterval}{[t_j + \ceil{3\tau_j/8}, t_j+\floor{\tau_j/2}]}
\newcommand{\susact}{\texttt{Sus-Act}}
\newcommand{\actionlowratio}{\mathcal{A}_{\leq 2}}
\newcommand{\I}{\mathbb{I}}
\newcommand{\E}{\mathbb{E}}
\newcommand{\calR}{\mathcal{R}}
\newcommand{\NaturalOne}{\mathbb{N}^+}
\newcommand{\regret}{\mathfrak{R}_{T}}
\newcommand{\ceil}[1]{\lceil #1 \rceil}
\newcommand{\floor}[1]{\lfloor #1 \rfloor}
\newcommand{\abs}[1]{\lvert #1 \rvert}
\newcommand{\cmark}{\ding{51}}%
\newcommand{\xmark}{\ding{55}}%
\title{Individual Regret in Cooperative Stochastic Multi-Armed Bandits}
\author{%
  Idan Barnea \\
  Blavatnik School of Computer Science and AI \\
  Tel Aviv University, Israel \\
  \texttt{idanbarnea1@mail.tau.ac.il} \\
  \And
  Tal Lancewicki \\
  Blavatnik School of Computer Science and AI\\
  Tel Aviv University, Israel \\
  \texttt{lancewicki@mail.tau.ac.il} \\
  \And
  Yishay Mansour \\
  Blavatnik School of Computer Science and AI\\
  Tel Aviv University, Israel \\
  Google Research, Tel Aviv, Israel \\
}
\begin{document}

\maketitle

\begin{abstract}
We study the regret in stochastic Multi-Armed Bandits (MAB) with multiple agents that communicate over an arbitrary connected communication graph.
We analyzed a variant of Cooperative Successive Elimination algorithm, $\coopse$, and show an individual regret bound of ${O}(\mathcal{R} / m + A^2   + A \sqrt{\log T})$ and a nearly matching lower bound.
Here $A$ is the number of actions, $T$ the time horizon, $m$ the number of agents, and $\mathcal{R} = \sum_{\Delta_i > 0}\log(T)/\Delta_i$ is the optimal single agent regret, where $\Delta_i$ is the sub-optimality gap of action $i$.
Our work is the first to show an individual regret bound in cooperative stochastic MAB that is independent of the graph's diameter.

When considering communication networks there are additional considerations beyond regret, such as message size and number of communication rounds.
First, we show that our regret bound holds even if we restrict the messages to be of logarithmic size.
Second, for logarithmic number of
communication rounds, we obtain a regret bound of ${O}(\mathcal{R} / m+A \log T)$.


\end{abstract}

\section{Introduction}

Multi-Armed Bandit (MAB) is a fundamental framework for studying sequential decision making, with an expanding scope of practical applications (see, \citep{lattimore2020bandit,DBLP:journals/ftml/Slivkins19}).
Recent research expanded the classic MAB problem into a cooperative setting, sometimes referred to as cooperative multiplayer or multi-agent MAB, where multiple agents share the same goal and can communicate with each other.

A significant focus of recent research has centered on cooperating agents within a communication graph, often referred to as a communication network. This framework, 
in which all agents address the same problem, 
dates back to \citet{LandgrenSL16a} for stochastic rewards and \citet{Cesa-BianchiGMM16} for the nonstochastic case.
In this setting, agents transmit information to adjacent neighbors, from which it continues to propagate throughout the entire network while encountering a delay at each step.
Communication graphs paired with stochastic Multi-Armed Bandits provide a framework for distributed decision-making under uncertainty.
As an example of our setting, consider computer networks. In large-scale High Performance Computing (HPC) or AI systems, individual machines often have flexible hardware, e.g., tunable cores, caches, or memory controllers, that adapt based on workload pattern. These computers (agents), connected over a network (graph), must quickly choose a hardware configuration (action) to optimize their performance (reward).
Social networks are good examples as well: individuals share experiences directly with friends, forming a natural communication structure for learning to propagate and improve collective decision-making.
In the non-stochastic setting \citet{baron-coop-mab-graph} showed a nearly optimal individual regret bound of $\Tilde{O}(\sqrt{(1+A/N(v))T})$, where $N(v)$ is the number neighbors for the agent $v$. This setting differs from the stochastic case: notably, the optimal bound in the non-stochastic setting includes a $\sqrt{T}$ term, even if $m \approx T$, since the full information lower bound is $\Omega(\sqrt{T})$. On the other hand, in the stochastic setting, we achieve something much stronger. With sufficiently many agents, our worst-case regret can be as small as $O(A^2+A\sqrt{\log(T)})$. Importantly, this is independent of the sub-optimality gaps.

The literature of cooperative stochastic MAB distinguishes between group regret (a.k.a. average regret) \cite{LandgrenSL16a,LandgrenSL16b,LandgrenSL18,LandgrenSL21,ChakrabortyCDJ17,Martinez-RubioK19,Wang20,yang2024cooperative,chen-group-regret} and individual regret \cite{DubeyP20,Wang23-indivual-bound-diameter}, where the latter is much stronger and more challenging to achieve.
Additionally, significant attention is given to minimizing the number of messages each agent sends \citep{Sankararaman19,ChawlaSGS20,Madhushani-other-communication-2020,Madhushani-when-to-call-2021,MadhushaniDLP21,AgarwalAA22, PavlovicDistributedKernel} and reducing message size \cite{AgarwalAA22}



%
%
\subsection{Graph diameter and individual regret bounds}
Let the single-agent regret bound be denoted with $\calR := \sum_i \log(T)/\Delta_i$.
For cooperation to be meaningful, one needs sufficiently large number of agents $m$. 
Our goal is to reduce the individual regret from $\calR$ 
to $\calR/m$.
A potential problem can be if regret bounds include an additive term of the order of $D$,  the graph's diameter, which can be large in practical scenarios. 
A regret of the form $\calR/m+D$ might provide a limited guarantee:
for a cycle graph ($D=\Theta(m)$) it will give $\calR/m+m$ which is at least $\sqrt{\calR}= (\sum_i \frac{\log T}{\Delta_i})^{1/2}$ for any $m$;
for a grid graph ($D=\Theta(\sqrt{m}))$ the regret will be at least $\calR^{1/3} =(\sum_i \frac{\log T}{\Delta_i})^{1/3}$.
Note that in these scenarios, the inverse dependency on $\Delta_i$ remains, even if the number of agents $m$ goes to infinity and the term $\calR/m$ vanishes.
%
This explains our desire
to avoid this additive $D$.

Our diameter-free bound is only $A^2 + A\sqrt{\log(T)}$ for sufficiently large $m$, entirely independent of the gaps.
An interesting case is when the gaps are small, leading to high single-agent regret. For example, when $\Delta_i=\sqrt{A/T}$, we get $\calR \approx \sqrt{AT}$.
In this case, regret bounds with an additive $D$ term may still yield $\calR^{1/3} \approx (AT)^{1/6}$ for grid graphs, whereas our diameter-free bound grows is only $A^2+A\sqrt{\log(T)}$.



To the best of our knowledge, this is the first paper to show a graph-independent individual regret bound. Additionally, we present a similar individual regret bound for scenarios of small message size, as well as for scenarios of limited  number of messages.

\subsection{Key contributions}
\label{sec:key-contributions}
Our key contributions are as follows:
\begin{itemize}[left=5pt]
    \item We prove that $\coopse$ (\Cref{alg:coop-SE-main}) achieves a near-optimal individual regret bound of $O(\calR/m+A^2 + A\sqrt{\log(T)})$, which is independent of the graph's diameter.
    The regret bound in minimax form is $O(\sqrt{TA\log(T)/m} + A^2 + A\sqrt{\log(T)})$. 
    When $\coopse$ is played with random action choices instead of round-robin we get $O(\calR/m+ A\log(T))$ and accordingly, $O(\sqrt{TA\log(T)/m} + A\log(T))$.

   \item We show a lower bound for the individual regret of $\Omega(\sqrt{TA/m}+\sqrt{A})$, which almost matches our upper bound in the minimax form.
    
    \item For settings with restricted message sizes of $O(\log(mA))$, also known of the CONGEST model (see \cite{peleg2000distributed}), we introduce $\coopselowcomclock$, which achieves an individual regret of $O(\calR/m+A^2+A\sqrt{\log(T)})$.
    In minimax form $O(\sqrt{TA\log(T)/m} + A^2 + A\sqrt{\log(T)})$.
    
    \item For scenarios where agents are limited to $O(\log(T))$ communication rounds, we present $\coopsecommcost$, that achieves individual regret of $O(\log(A)\calR/m+A\log(A)\log(T))$, and in the minimax form $O(\sqrt{TA\log(A)\log(T)/m} + A\log(A)\log(T))$.


\end{itemize}

\citet{KollaJG18} raised the question of whether it is feasible to surpass the performance of well-established single-agent policies, such as UCB \cite{AuerCFS02} and SE, when these policies are executed independently across the network.
While this question has also been explored in several prior works (see, for example, \cite{YangCHLT22,yang2024cooperative,zhang2025nearoptimal}), our contribution provides a complementary perspective by analyzing individual regret that is independent of the graph’s diameter.
The combination of our lower bound and the analysis of our $\coopse$ algorithm thus refines the understanding of this question in the cooperative setting.



\begingroup  
\renewcommand{\thefootnote}{\fnsymbol{footnote}}  

\setlength{\tabcolsep}{0.5pt} 
\renewcommand{\arraystretch}{0.9} 


\begin{table*}
    \caption{Performance Comparison of Multi-Armed Bandit Algorithms in Cooperative Settings.
    Notation: Horizon $T$; Number of agents $m$; Actions $A$; Graph's diameter $D$; Graph $\Graphcomm$; $\mathcal{R} = \sum_{\Delta_i > 0}\frac{\log T}{\Delta_i}$ is the optimal single agent instance-dependent regret.}
    \begin{center}
        \begin{tabular}[c]{|l|l|c|c|c|c|}
            \hline
            Algorithm & Regret & \makecell{Indiv.\\ regret} & \makecell{Message\\ size} & \makecell{Comm. \\rounds} & \makecell{Requires  \\ only local  \\ graph info.}
            \\ 
            \hline \hline
            \makecell[l]{\texttt{Coop-UCB2}  \cite{LandgrenSL21}} & $\mathcal{R} / m + A f(\mathcal{G})$\footnotemark[2] & \xmark & $A log(mT)$ & $T$ & \xmark
            \\
            \hline
            \makecell[l]{\texttt{DDUCB}  \cite{Martinez-RubioK19}} & $\mathcal{R} / m + A h(\mathcal{G})$\footnotemark[2] & \xmark & $A log(mT)$ & $T$ & \xmark
            \\
            \hline
            \rule{0pt}{1.2em}
            \makecell[l]{\texttt{UCB-TCOM}  \cite{Wang23-indivual-bound-diameter}} & $\mathcal{R} / m + AD$ & \cmark & $m \log(AT)$ & $D +  \log(\frac{\log T}{\Delta_{min}})$ \rule[-0.6em]{0pt}{0pt}
            & \xmark
            \\
            \hline
            \makecell[l]{$\coopse$ \\ } & \makecell[l]{$\boldsymbol{\mathcal{R} / m }$ \\$\boldsymbol{+ A\min\{A + \sqrt{\log T} , D \} }$} & \cmark & $  m A \log(T)$ & $T$ & \cmark
            \\
            \hline
            \makecell[l]{$\coopselowcomclock$ \\ } & $\boldsymbol{\mathcal{R} / m+ A^2 + A\sqrt{\log T} }$ & \cmark & $  \mathbf{log(mA)}$ & $T$ & \xmark
            \\
            \hline
            \makecell[l]{$\coopsecommcost$ \\ } & \makecell[l]{$\boldsymbol{\mathcal{R\log(A)} / m }$ \\ $\boldsymbol{+ A\log T\log A}$} & \cmark & $  A\log(m)$ & ${\log(T)}$ & \xmark
            \\
            \hline
        \end{tabular}
        \label{table: comparison}
        
    \end{center}
\end{table*}
\endgroup


\subsection{Relation to Prior Algorithms}

We start with a brief explanation of the Successive Elimination (SE) algorithm \cite{Even-DarMM06}. SE is a classical multi-armed bandit algorithm that achieves low regret for a single agent. It progressively eliminates suboptimal actions by repeatedly sampling all active actions, estimating their mean rewards, and removing any action whose confidence interval is clearly worse than that of some other action.
In cooperative multi-agent settings, variants of SE have been studied in which agents exchange rewards and elimination signals and use them to refine their own action sets \citet{YangCHLT22,yang2024cooperative,zhang2025nearoptimal}.
Our proposed algorithm, $\coopse$, builds upon these works: it employs Successive Elimination combined with message passing, and allows each agent to use the elimination signals of others in order to eliminate its own actions.
The main contribution of this paper is a new analysis of the $\coopse$ algorithm.

\subsection{Related work}
\label{sec:related work}
\noindent{\bf Average regret}
was studied by
\citet{LandgrenSL16a,LandgrenSL16b,LandgrenSL18,LandgrenSL21,Martinez-RubioK19,chen-group-regret} who achieved an average regret guarantee of $ O (\calR / m + A\tilde{f}(\mathcal{G}))$, where $\tilde{f}(\mathcal{G})$ is a function of the eigenvalues of the adjacency
matrix of the graph, which is related to expansion properties.
The consensus-based algorithm \texttt{Coop-UCB2} presented in \citet{LandgrenSL21} requires the construction of a matrix based on the graph's structure. This dependency means the algorithm cannot rely solely on local information. The algorithm's regret bound is $\calR / m +A\cdot f(\Graphcomm)$, where $f(\Graphcomm)$ represents a graph-dependent function. For certain graph topologies, such as cycles, this function $f(\Graphcomm)$ may be at least $m$.
\citet{Martinez-RubioK19} presented \texttt{DDUCB}, a consensus-based algorithm that requires knowledge of the graph's topology. Their regret bound is $\calR / m + A\cdot h(\Graphcomm)$, where the function $h$ is defined as $h(\Graphcomm) = \log(m)/\sqrt{\log(1/\abs{\lambda_2})}$. Here, $\lambda_2$ is the second largest eigenvalue (in absolute value) of the communication matrix, also known as the gossip matrix (see \cite{XiaoB04,DuchiAW12,ChawlaSGS20}). The eigenvalue $\lambda_2$ is related to how the graph expands and can be very close to one for some graphs. For instance, in a circle graph, $\lambda_2 = \cos(2\pi/m)\approx 1-1/m^2$, resulting in $h(\Graphcomm) \approx m$. Consequently, the average regret bound of \texttt{DDUCB} becomes $\calR / m + Am$.

Gossip algorithms traditionally operate through networks where nodes communicate along graph edges to achieve consensus, typically by converging to average values across all nodes.
However, the $\coopse$ algorithm takes a different approach. Instead of seeking to synchronize nodes to common average values, we focus on ensuring nodes maintain similar sets of active actions. This key distinction drives the innovation in our approach.

Average regret was also studied by \cite{Wang20,ChakrabortyCDJ17} as well.
\citet{Wang20} has an additive term in the regret that scales with the diameter of the graph.
\citet{ChakrabortyCDJ17} consider a model with non-fresh randomness, where the reward for each action is generated once per timestep, and agents choosing the same action receive the same feedback. Even with full communication, the best attainable regret is that of full information.

\noindent{\bf Small number of messages and small messages.} 
\citet{Wang23-indivual-bound-diameter} show an individual regret guarantee of $O(\calR / m + AD)$, where each agent sends at most $O(D \log(\log(T) / \Delta))$ messages. Their algorithm, \texttt{UCB-TCOM}, needs to know the value of $D$ in advance and uses it to synchronize between agents. \citet{yang2024cooperative} present a SE algorithm achieving a similar regret bound, but their analysis is limited to fully-connected graphs, with each agent sending $O(\log(1 / \Delta))$ messages.
Note that whenever $\Delta=\Theta(T^{-\alpha})$ for $\alpha \in (0,1)$, the number of messages in both of these works is of order of $\log(T)$. \citet{Madhushani-when-to-call-2021} introduces an algorithm with $\log(T)$ communication steps, and their regret scales as $\log (T) \chi(G)/(\Delta m)$, where $\chi(G)$ is the clique cover number of the graph.
For instance, for trees, cycles, and grids, the regret bound is on the order of $\log T/\Delta$, similar to non-cooperative scenarios, while for fully connected graphs, it is $\log (T)/(\Delta m)$. \citet{AgarwalAA22} have $D\log(T)$ communication rounds and only $\log(A)$ bits per message, but their regret scales as $\sqrt{(A/m + deg(\Graphcomm)) D^3 T}$, where $deg(\Graphcomm)$ is the maximum degree of the graph.
Note that their regret bound is at least $\sqrt{T}$.

Other related problems, such as directed communication, cooperation in Markov-Decision-Processes (MDPs) and best-arm identification, have also been studied.
For a more comprehensive discussion, we refer the reader to \Cref{sec:other-related-work}.


\section{Model and problem formulation}\label{sec:model}
\begingroup
\renewcommand{\thefootnote}{\fnsymbol{footnote}}  
\footnotetext[2]{These functions can be as large as $m$, see discussion in related work.
$h(\Graphcomm) = \log(m)/\sqrt{\log(1/\abs{\lambda_2})}$.  Here $f(\Graphcomm)$ as defined from Corollary 2 and Eq. (19) in \cite{LandgrenSL21}; the definition is very complex, but note that it may be as large as $m$.
}
\endgroup
\noindent{\bf Stochastic MAB (SMAB):}
A stochastic Multi-armed bandit problem has $\actions$ actions, denoted by $\Actionbb=\{1, \ldots, \actions\}$.
Each action $a\in\Actionbb$ has a reward distribution $\Dist_a$, whose support is $[0,1]$, and its expectation is $\mu_a=\E_{r\sim \Dist_a}[r]$.
An optimal action is denoted with $a^\star$, where $a^\star\in\arg\max_{a\in \Actionbb}\mu_a$, and $\mu^\star=\mu_{\a^\star}$. The gap of a sub-optimal action $a$ is $\Delta_{a}=\mu^\star-\mu_a$.

\noindent{\bf Multi-agent MAB:}
We have an undirected connected graph $\Graphcomm(V,E)$, where $V$ is the set of vertices and $E$ the set of edges.
Every vertex represents an agent.
An agent $u$ is a neighbor of agent $v$ iff $(v,u)\in E$.
The diameter of the graph is denoted by $D$.
Let $\nleqd{\id}$ be the set of agents at a distance at most $d$ from agent $\id$, 
i.e., $\nleqd{\id} := \{ u\in \agents | \distance(\id,u) \leq d \}$, where $\distance(v,u)$ is the minimal path length (number of edges) from $v$ to $u$ in $\Graphcomm$.
For simplicity, we assume $m,A\leq \text{poly}(T)$. 

There are $T$ rounds of play. Each agent $v\in V$, in each round of play $t\in [T]$ does the following: (1) selects an action $a^v_t\in \Actionbb$ and observes a reward $r^v_t \sim \Dist_{a^v_t}$ (note that the rewards of the different agents are different random variables). (2) sends messages to neighboring agents $u\in N^v_{\leq 1}$. (3) receives messages from neighboring agents $u\in N^v_{\leq 1}$. See the protocol in \Cref{alg:baseprotocol}.

\noindent{\bf Regret definition:}
The individual (pseudo) regret of an agent ${\id}$ is defined by 
$
\regret^v = {\bbE[\sum_{t=1}^T(\mu^*-r^\id_t(\a^{\id}_t))]}.\footnote{The expectation of the pseudo regret is also over the randomness of the algorithm. We will refer to the pseudo regret as the regret for the rest of the article.}
$
In this paper, we focus on minimizing the individual (pseudo) regret of every agent.

\noindent{\bf Events and  Messages:}
Our algorithms will have the agents broadcasting about the progress they make. There will be two types of progress. The first is a new observation of a reward, which will be a \emph{reward event}.
The second is a decision to eliminate a certain action. This will be an \emph{elimination event}. Formally,
an event is a tuple describing reward, or a tuple describing an elimination of an action.
A reward event is $(\RWD, t, v, a, r) $, where $t$ is the timestep, $v$ is the agent's ID, $a=a^{v}_t$ is the action, and $r=r^{v}_t(a^{v}_t)$ is the reward. 
An elimination event is $(\ELIM, v, a)$, where ${v}$ is the agent and $a$ is the eliminated action.
To denote individual elements within an event tuple, we use subscript notation. For example, if we have an event $event = (\RWD, t, v, a, r)$, we denote the action $a$ using $event_a$.
We define a message to be a set of events.

\begin{algorithm}[t]
\caption{Stochastic MAB on Graph. Protocol for agent $\id$} 
\label{alg:baseprotocol}
\begin{algorithmic}[1]
\FOR{$t \in [T]$}
    \STATE Agent $\id$ picks an action $\a^{\id}_t \in \Actionbb$.
    \STATE Environment samples a reward, $r^{\id}_t(\a^{\id}_t) \sim \Dist_{\a^{\id}_t}$.
    \STATE Agent $v$ observes reward $r^v_t(a^v_t)$.
    \STATE Agent $v$ sends messages $m^{v,u}_t$ to each neighbor $u$.
    \STATE Agent $v$ receives messages $m^{u,v}_t$ from each neighbor $u$.
\ENDFOR
\end{algorithmic}
\end{algorithm}

\section{Warm-up: diameter dependency}
\label{sec:warmup}

In this section, we refine the best-known regret bound that depends on the diameter. Specifically, we reduce the additive term from $DA$ to $D\log(A) + A$. Our analysis builds on recent techniques developed for delayed MAB settings, leading to a relatively simple proof (see \Cref{sec:apx:warmup}).  
This section serves as a warm-up, introducing a straightforward algorithm that still relies on the graph’s diameter, and provides a useful reference point for the subsequent sections, where this dependency is removed.

Our algorithm in this section, $\susact$ (\Cref{alg:coop-SE-sus-main}), builds on SE, but utilizes only samples that are already observed by all agents. Specifically, at any  timestep $t$, each agent has already observed the samples collected by every other agent up to time $t-D$, where $D$ is the diameter of the graph. Thus, at time $t$, $\susact$ computes its LCB and UCB (Lower/Upper Confidence Bounds) based on all samples from actions taken up to time $t-D$. Consequently, all agents have exactly the same LCB and UCB, leading them to select identical actions and experience the same individual regret.
Notice that the actions' tie-breaking   is the same across all agents.


Conceptually, the shared information between agents increases the number of observed samples per played action by a factor of $m$, the number of agents. On the other hand, samples from the last $D$ steps are not processed (and will be processed when their delay would be exactly $D$). This is equivalent to an environment with $D$-steps delayed feedback, typically introducing an additive $D$ term to the regret.

\begin{theorem}
    \label{thm:diam}
    When each agent plays \cref{alg:coop-SE-sus-main}, the individual regret of each agent is,
    \[
    \regret^v = O \bigg( \sum_{\Delta_i > 0}\frac{\log(T)}{\graphsize\Delta_{i}} +  \diam \log\actions + \actions \bigg).
    \]
\end{theorem}
The proof of \Cref{thm:diam} is deferred to the supplementary material.
In summary, when the regret bound depends on the diameter, the analysis remains relatively straightforward.  
In the following sections, we move beyond this setting and show how to remove the diameter dependence altogether, while retaining comparable guarantees.

\begin{algorithm}[ht]
\caption{$\susact$: Successive Elimination with Suspended Act (see \Cref{alg:coop-SE-sus-detailed} for detailed version)}
\label{alg:coop-SE-sus-main}
\begin{algorithmic}[1]
\STATE \textbf{Input:} Diameter $D$
\FOR{$t = 1, 2, \dots, T$}
    \STATE The agent plays Successive Elimination using all information available up to time $t-D$.
    \STATE Send and receive rewards: The agent sends her reward from the current round and forwards to all neighbors any previously unsent rewards (message passing).
\ENDFOR
\end{algorithmic}
\end{algorithm}

\section{The \texorpdfstring{$\coopse$}{coop-se} algorithm and individual regret guarantees}

We study $\coopse$, \Cref{alg:coop-SE-main}, which is a particular variant of Cooperative Successive Elimination rather than a new algorithmic concept.
$\coopse$ is fully decentralized, and each agent plays it independently.
In $\coopse$, each agent runs SE with all the information available to it, while exchanging messages with neighbors that contain both locally generated and relayed information, including observed rewards and elimination signals (i.e., message passing).


This increased information significantly reduces the regret compared to the non-cooperative setting.
The formal description of the algorithm is provided in \Cref{alg:coop-SE}.
Our main result is the following theorem.

\begin{algorithm}[t]
\caption{Cooperative Successive Elimination ($\coopse$) - simplified version (see \Cref{alg:coop-SE} for full pseudo-code)}
\label{alg:coop-SE-main}
\begin{algorithmic}[1]
\STATE \textbf{Init:} Set the active actions set to be all actions $\calA = \Actionbb$.
\FOR{$t = 1, ..., T$}
    \STATE Eliminate actions from incoming elimination-messages
    \STATE Calculate counts and empirical mean for each active action based on \underline{all seen} messages
    \STATE Calculate $UCB_t$ and $LCB_t$ based on the above counts and means (see \Cref{def:LCB-UCB})
    \STATE  $E = \{a\in\calA \mid \exists a' \in \calA$ such that $ UCB_t(a) < LCB_t(a')\}$; $\calA = \calA \setminus E$
    \STATE Choose action in round-robin from the active action, $a_t\in \calA$. Play it and get a reward $r_t(a_t)$
    
    \STATE Send eliminations $E$, reward $(a_t,r_t)$, and all messages received at $t-1$ (message passing)
    \STATE Receive messages from the neighboring agents
\ENDFOR
\end{algorithmic}
\end{algorithm}

\begin{theorem}
\label{thm:main}
    When each agent plays $\coopse$ (\Cref{alg:coop-SE-main}) the regret of each agent $v\in V$ is,
    \begin{align*}
        \regret^v =
        O\bigg(\sum_{\Delta_i > 0}\frac{\log(T)}{m\Delta_{i}} + A^2+A\sqrt{\log(T)}\bigg).
    \end{align*}
\end{theorem}

To the best of our knowledge, this is the only individual regret bound that is independent of the graph diameter.  
For comparison with prior work, we also derive an improved variant of the bound that includes the diameter:
$ \regret^v =
        O\bigg(\bigg(\sum_{\Delta_i > 0}\frac{\log(T)}{m\Delta_{i}}\bigg) + A\cdot\min\{A+\sqrt{\log(T)},D\}\bigg).$
The diameter term arises from a refined analysis but is not required by the algorithm; agents do not need to know $D$.  
The full proof is provided in the appendix.

In \Cref{sec:lower-bound}, we present a lower bound of $\Omega(\sqrt{TA/\graphsize}+\sqrt{A})$, which almost matches the upper bound of the individual minimax regret of $\coopse$.
We note that a slight variant of the algorithm, which samples actions uniformly from the set of active arms rather than using round-robin selection, incurs an additive term of $A \log T$ instead of $A^2 + A\sqrt{\log T}$ (see $\Cref{thm:_main}$).
This is tighter whenever $A \gg \log T$—see \Cref{sec:_apx:proof-main} for more details. However, the precise dependence on $A$ in this additive term remains an open question.

An important insight that follows from these theorems is that for a sufficiently large number of agents, e.g., when $\graphsize = \calR$, we achieve an individual regret bound of ${O}(A^2 +A\sqrt{\log T})$ that does not depends on the sub-optimality gaps.
%


In the following section, we present the key ideas employed in the analysis of the individual regret.


\section{Individual regret analysis}
In this section, we provide a proof sketch that outlines the key steps in our analysis.
We analyze the regret of an arbitrary agent $v$, and all the definitions are referenced to this agent unless explicitly stated otherwise.

Our proof heavily relies on a notion we call \textit{stages}. These are the time intervals between the eliminations of agent $v$. Formally, a stage $j \in [A]$ is the interval $[t_j,t_{j+1})$ where $t_1 = 1$, and $t_{j+1}$ is the timestep of the $j$'th elimination. We'll also denote by $\tau_j$ the length of the $j$'th stage and the number of active actions in that stage by $A_j:= A - j + 1$. 

The stages is one of our core ideas, and they allow us to do the following.
We bound the agent's regret in terms of stage length, and we bound the stage length as a function of the number of samples.
Finally, we bound the number of samples in the standard approach.
By combining these results, we obtain our main theorem.

\paragraph{Bounding the regret in term of stage length}

\label{sec:regret}
We start by bounding agent $v$'s regret in terms of stage lengths. Fix a sub-optimal action $a$ and assume that $i$ is the last stage in which $a$ was active. Since $v$ chooses active actions in round-robin, in each stage $j\leq i$, she samples $a$ approximately $\tau_j / A_j$ times. Thus, the total number of times $v$ plays $a$ is approximately $\sum_{j=1}^i \frac{\tau_j}{A_j}$ and we can roughly bound the regret with,
\begin{align}
    \label{eq:regret-main}
    \regret^v \lesssim \sum_{i=1}^A \sum_{j=1}^i \frac{\tau_j}{A_j}\Delta_i,
\end{align}
where we slightly abuse notation and let $\Delta_i$ be the sub-optimality gap of the action that was eliminated at the end of stage $i$. 

\paragraph{Number of samples in terms of stage length}\label{sec:num-samples}

Consider the $j$'th stage and an action $a$ which is still active in that stage. For the sake of intuition, assume that the agents are completely synchronized, i.e., have the same set of active actions.
In the first quarter of the stage, each agent who is close to $v$ contributes to $v$'s information approximately $\tau_j/(4 A_j)$ samples of action $a$. Since there were $\tau_j/4$ timesteps and each agent chooses action out of $A_j$.
Moreover, these samples are observed by $v$ with a delay of at most $\tau_j /4$ and thus will reach $v$ before the end of the stage.


If action $a$ is active in the first $i$ stages, we would expect that the number of samples that reaches $v$ for that action $a$ from the first $i$ stages is at least of order of 
$
\sum_{j=1}^{i} \frac{\tau_j}{A_j} \cdot \abs{\sameneigh
},
$
where $\sameneigh$ is $v$'s neighborhood of radius $\tau_j / 4$. 

The above result implies that the amount of observed feedback from each stage is boosted by a factor $\abs{\sameneighclear}$ compared to the number of times that $v$ itself chooses the action.


However, in general, the agent's policies are \textit{not} completely synchronized, and thus, we need a stronger argument to rigorously establish the above claim.
In the next subsection, we show that under $\coopse$, the agents implicitly synchronize with each other.

\paragraph{Implicit synchronization of neighborhoods over intervals}\label{sec:sync}

We now outline another core idea of our work: how agents implicitly synchronize under our algorithm, a key component for proving individual regret.

\begin{lemma}\label{lem:same-policy}
Consider an agent $v$.
Let $j$ be a stage index such that $\tau^v_j > 16$.
Then every agent $u\in \sameneighagent$ plays the same policy (i.e., has the same set of active actions) at time interval $\sameintervalagent$.
\end{lemma}

\begin{proof}[Proof sketch]
Let us denote the active set of actions of $v$ at stage $j$ with $\calA^v_j$.
Since $\distance(u,v)\leq\tau^v_j/4$, agent $u$ receives all eliminations of $\Actionbb\setminus \calA^v_j$ from $v$ no later than $t^v_j+\tau^v_j/4$.
Hence, after this timestep $u$'s active actions in $\sameintervalagentsimple$ must be contained in $\calA^v_j$.
For the reverse direction, let $a\in\calA^v_j$. Assume by contradiction that $u$ encounters an elimination of $a$ before $t^v_j+\tau^v_j/2$. Since $\distance(u,v)\leq \tau^v_j/4$, this elimination reaches $v$ within no more than $\tau^v_j/4$ additional steps. 
Therefore, $v$ gets the elimination at $t^v_j + 3/4 \tau^v_j < t^v_{j+1}$, contradicting the stage definition which requires the stage to end precisely when an active action is eliminated.
Thus, all agents in $\sameneighagent$ maintain exactly $\calA^v_j$ as  active actions throughout $\sameintervalagentsimple$.
See \Cref{prf:same-policy} for the detailed proof.
\end{proof}

\paragraph{Combining the results}
With the above result we get that in each round $j$ that the action $a$ was active, the agent $v$ gets at least $\tau_j/A_j \abs{\neigh_{\leq \tau_j / 4}}$ samples.
Hence, the number of samples of an action $a$ that was active in the end of stage $i$ can be bounded from below.
Let us denote the last round in stage $i$  with $t'_i := t_{i+1}-1$.
We get $n_{t'_i}(\a) \gtrsim \sum_{j=1}^{i} \tau_j/A_j \abs{\neigh_{\leq \tau_j / 4}}$. Using standard concentration bounds, we show that the number of samples $v$ can see from a sub-optimal action $a$, without eliminating it, is approximately $1/\Delta_a^2$. Hence,
$
    n_{t'_i}(\a) \lesssim 1/\Deltai^2
$

On the other hand, we can bound $\sameneighclear$ with the stages and the number of agents $m$: $\sameneighclear \geq \min\{\graphsize,\frac{\tau_j}{4}\}$.

Using the lower bound on the number of samples we get,
\begin{align*}\label{eq:N-A-Delta}
    \frac{1}{\Delta_i^2}
    \geq n_{t'_i}(\a) \geq
    \sum_{j=1}^{i}\abs{\sameneighclear}\frac{\tau_j}{16A_j} 
        \geq
        \sum_{j=1}^{i}\min\{\graphsize,\frac{\tau_j}{4}\}\frac{\tau_j}{16A_j} .
\end{align*}

We split the analysis into stages where $m < \tau_j/4$ and stages where $m \geq \tau_j/4$.
The first case is simpler, while the second is bounded using Cauchy–Schwarz.
These stages are used only for analysis, we ultimately bound $\sum_{j=1}^{i} \frac{\tau_j}{A_j} \Delta_i$ using only $m$, $\Delta_i$, $A$, and $T$.
Full details are deferred to the appendix (see \Cref{prf:split analysis} for this part).

\section{Lower bound}\label{sec:lower-bound}
In this section, we present a lower bound, demonstrating that our algorithm achieves near-optimal individual regret.

The problem we study in this paper is obtaining an upper bound on individual regret which is independent of the graph's diameter or other graph properties. This means the bound should hold for \textit{any} communication graph, and since we focus on individual regret, it must hold for \textit{every} agent. Consequently, when proving a lower bound, we can consider any graph, including the worst-case one, and we only need to identify at least one agent that incurs this level of regret.

Note that the lower bound is stated in the minimax form, where the sub-optimality gaps are on the order of $\sqrt{1/T}$. We believe this formulation captures the essence of the problem more clearly, though it can be equivalently expressed in a problem-specific form. For consistency, we also provide the minimax forms of the regret in \Cref{sec:key-contributions} and in the appendix.




\begin{theorem}
    \label{thm:lower-bound-main}
    For every algorithm, and for every $T,A,\graphsize$, there exists a problem instance of the cooperative stochastic MAB over a communication graph such that there exists an agent for which the individual minimax regret is at least,
    $
        \Omega(\sqrt{{AT}/{\graphsize}} + \sqrt{A} ).
    $
\end{theorem}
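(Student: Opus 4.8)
The plan is to establish the lower bound by combining two separate arguments, one for each additive term, and then noting that the maximum of the two is within a constant factor of their sum. The $\Omega(\sqrt{AT/\graphsize})$ term should come from a cooperative reduction to the classical single-agent bandit lower bound, while the $\Omega(\sqrt{A})$ term should come from an information-theoretic obstruction that persists even with unlimited agents and instantaneous communication.

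For the $\Omega(\sqrt{AT/\graphsize})$ term, I would exploit the fact that even with perfect, instantaneous sharing of all rewards, the \emph{combined} system observes at most $\graphsize T$ samples total, arranged as $\graphsize$ samples per round. The standard approach is to take a fully-connected graph (the most favorable case, which only strengthens a lower bound stated for ``there exists a graph''), so that the $\graphsize$ agents in a single round behave like one super-agent pulling $\graphsize$ arms and observing $\graphsize$ rewards per step over $T$ steps. I would construct the usual hard instance of $A$ near-optimal arms with gaps tuned to $\Delta \sim \sqrt{A\graphsize/T}$ (a family of instances differing in which arm is optimal) and apply a KL-divergence / Pinsker argument (e.g. the divergence decomposition of \citet{lattimore2020bandit}) to the effective horizon of $\graphsize T$ pulls. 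This yields a total regret of $\Omega(\sqrt{A \graphsize T})$ across the system, hence an \emph{average} individual regret of $\Omega(\sqrt{AT/\graphsize})$; since the construction is symmetric, some agent must incur at least the average, giving the individual bound.

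For the $\Omega(\sqrt{A})$ term, the key observation is that this should be a ``per-agent'' obstruction independent of $\graphsize$, so I would reduce to a single agent (or equivalently let communication and $\graphsize$ be irrelevant) and use a short-horizon instance where the number of actions itself forces exploration cost. Concretely, take the regime $T = \Theta(A)$ and gaps of constant order: an agent must play each of the $A$ arms enough to distinguish the optimal one, and in $\Theta(A)$ rounds it cannot identify the best arm with high enough confidence, forcing $\Omega(\sqrt{A})$ regret. This mirrors the additive $\sqrt{A}$ that appears in finite-horizon single-agent lower bounds when $T$ is not large relative to $A$; cooperation cannot help here because the bottleneck is the agent's own round budget rather than the total number of samples.

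The main obstacle I anticipate is the first argument: making the ``super-agent'' reduction fully rigorous when agents may adaptively choose which arms to pull and when the communication graph is general. I must ensure that no algorithm can beat $\Omega(\sqrt{A\graphsize T})$ total regret even exploiting the graph structure, which is cleanest on the complete graph; I would therefore explicitly state the instance on the complete graph and argue that a lower bound there implies the ``there exists a graph'' statement of \Cref{thm:lower-bound-main}. A secondary technical point is the symmetrization step converting the average-regret bound into an individual one—here I would average over a symmetric family of instances (cyclic shifts of which arm is optimal) and over agents, so that by an averaging argument there is always an instance and an agent whose individual regret meets the bound. Finally, I would stitch the two regimes together by observing $\sqrt{AT/\graphsize} + \sqrt{A} = \Theta(\max\{\sqrt{AT/\graphsize},\sqrt{A}\})$ and selecting whichever construction dominates for the given $(T,A,\graphsize)$.
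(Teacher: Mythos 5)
Your first argument (the $\Omega(\sqrt{AT/\graphsize})$ term) is essentially the paper's: take the complete graph, observe that the $\graphsize$ cooperating agents over $T$ rounds constitute a single learner with an effective horizon of $\graphsize T$ pulls, invoke the standard single-agent lower bound $\Omega(\sqrt{A\graphsize T})$ for the group regret, and average over agents so that some agent has individual regret $\Omega(\sqrt{AT/\graphsize})$. (Minor slip there: the hard gaps should scale as $\Delta \sim \sqrt{A/(\graphsize T)}$ for horizon $\graphsize T$, not $\sqrt{A\graphsize/T}$.)

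The genuine gap is in your $\Omega(\sqrt{A})$ argument. First, you cannot ``take the regime $T=\Theta(A)$'': the theorem quantifies over \emph{every} $T, A, \graphsize$, so the bound must hold in particular when $T$ and $\graphsize$ are both enormous relative to $A$ --- which is exactly the regime where the $\sqrt{A}$ term is the entire content of the statement. Second, and more fundamentally, your claim that ``cooperation cannot help here because the bottleneck is the agent's own round budget'' is false: an agent's regret is governed by what it has \emph{observed}, not by what it has \emph{played}. On the complete graph with $\graphsize \geq A$ agents and constant gaps, the agents can cover all $A$ arms in round one (e.g., agent $i$ plays arm $(i \bmod A)+1$), broadcast the outcomes, and play optimally from round two onward, giving $O(1)$ individual regret --- so no instance that ignores the communication topology can force $\Omega(\sqrt{A})$. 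The missing idea, which is how the paper proves this term (\Cref{thm:lower-bound-sqrt-a}), is to choose a graph that throttles information flow: on a \emph{line} graph, information from an agent at distance $d$ arrives with delay $d$, so after $\tau$ rounds an agent has received at most $O(\tau^2)$ observations in total. Taking deterministic rewards (one arm of reward $1$, the rest $0$) with the optimal arm chosen uniformly at random, after $\tau = \lfloor \sqrt{A}/20 \rfloor$ rounds the agent has, with probability $0.99$, observed nothing about the optimal arm and has therefore already accumulated regret $\Omega(\sqrt{A})$ --- for every $T$ and every $\graphsize$. Without this quadratic counting argument tied to a high-diameter graph, the $\sqrt{A}$ term simply is not true.
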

    Note that the statement specifies \textit{"there exists an agent"}, and cannot be improved to \textit{"for every agent"}. This is because, with at least $A$ agents, it is always possible for one agent to have zero regret by assigning each agent to select a distinct action for the entire horizon.
    
    The primary implication of the lower bound is that even if $m \to \infty$, the individual regret still scales with the number of actions. The main gap from our upper bound is the exact dependency in $A$ in the additive term as well as the logarithmic dependency; these gaps still remain open questions.
    
    The lower bound combines two separate lower bounds, $\Omega(\sqrt{AT/\graphsize})$ and $\Omega(\sqrt{A})$.  The $\Omega(\sqrt{AT/\graphsize})$ bound holds even in a fully connected network and follows directly from the lower bound established by \citet{Ito}. We also remark that an instance-dependent variation of this lower bound, specifically $\Omega((1-\mu^\star) \mu^\star \sum_{\Delta_i > 0} \log(T) / (m\Delta_i))$, where $\mu^\star$ is the expectation of the optimal action, can be obtained using the same technique.
    
    Recall that our upper bound holds for any graph and does not depend on the diameter. Thus, in order to show that it cannot be improved in general, it is sufficient to show a lower bound for a specific graph. Hence, to obtain the $\Omega(\sqrt{A})$ we focus on a line graph.
    The intuition of the proof is the following. We consider a deterministic MAB where one action has reward $1$ and all other actions have reward $0$.
    An agent, during the first $\tau$ timesteps receives $\Theta(\tau^2)$ observations. 
    Therefore, if $\tau \lesssim \sqrt{A}/10$, then an agent receives information about at most $A/100$ of the actions.
    We construct a probability function where each optimal action has equal probability of being selected.
    Under this distribution, with probability $0.99$, the agent fails to observe the optimal action, resulting in an individual regret of at least $0.99\cdot\sqrt{A}/10$. Therefore, there must exist at least one specific problem instance that induces this regret.
    For the formal proof see \Cref{thm:lower-bound-sqrt-a} in the appendix.
    Note that the technique employed for this lower bound can extend to other graph structures; for instance, a grid graph can yield a lower bound of $A^{1/4}$.

\section{Communication results}\label{sec:low-com}

In practical distributed systems, communication constraints can significantly impact the performance of cooperative learning algorithms.
We examined two restricted communication settings: The first limits messages to $O(\log(Am))$ bits, which corresponds to the well-known CONGEST model in distributed systems (see \citet{peleg2000distributed}). The second allows agents to send messages in only $O(\log(T))$ timesteps throughout the entire horizon, a constraint sometimes referred to as communication cost in the literature.
Our results show that effective cooperative learning remains possible even under these constraints, with agents maintaining strong individual regret guarantees that do not depend on the diameter.




\subsection{The CONGEST model}

To establish our communication-efficient results, we begin by showing that our base algorithm $\coopse$, when operating on a spanning tree, can function effectively with reduced message size of $O(A \log(Am))$. This initial compression serves as a stepping stone toward our full CONGEST model analysis, where we further reduce communication by having agents share information about only single actions at a time.

The key insight is that we can aggregate information about each action without losing accuracy. Instead of transmitting individual reward observations and elimination events, we can simply maintain running sums of rewards and a single elimination flag per action. This compression requires only $O(A \log(Am))$ bits per message, representing each action's new information: observation count, cumulative reward, and elimination status that reached to the agent in the previous round or produced by the agent in the current round.
However, this aggregation is only valid when agents don't receive duplicate information. We achieve this by restricting communication to a spanning tree of the network, where messages are forwarded along the tree nodes. This simple modification eliminates redundant transmissions while preserving $\coopse$'s regret guarantees, as our bounds are independent of the graph structure. Algorithm $\coopserestricted$ (\Cref{alg:coop-SE-restricted} in the appendix) implements these ideas.




For completeness, we note that our analysis assumes all agents share the same spanning tree, which can be computed in a preprocessing phase prior to the execution of the algorithm.
Otherwise, we can use one of the many distributed algorithms for this purpose (see \cite{peleg2000distributed}).



%
Moving to the full CONGEST, let us now consider the case that the size of the messages is limited to $O(\log(A\graphsize))$ bits — this is done using our algorithm $\coopselowcomclock$ (\Cref{alg:coop-SE-clock-low} in the appendix).
Here is the high-level idea of the algorithm.
As before, we construct a tree from the original graph,
and aggregate messages and avoid duplicates.
But rather than sending $A$ messages each of size $O(\log(\graphsize A))$, the agent sends in each round only one message regarding only one action.
The action for which the agent sends the information is chosen in a round robin, without considering if the action is active or not.
The key idea is that the round robin scheduling starts at a different action for each agent.
This mechanism ensures that when a message travels from any node $v$ to the root node $w$, apart from the distance between nodes it will not encounter any delay after it has been sent from its originating node.
Similarly, messages from $v$ outward from the root will not encounter delay.


Let us denote the spanning tree with $\calT$, and the distance on the tree from $v$ to $w$ with $\distancetree(v,w)$. The mechanism works as follows: agent $v$, located at distance $d := \distancetree(v,w)$ from the root $w$, sends messages for action $a$ to its parent, i.e., toward the root, at timesteps $t$ such that $ a \equiv t + d \pmod{A} $, and to its children whenever $a \equiv t - d \pmod{A}$. 

Here is an example of how the idea works. For the up-stream,
assume that at timestep $t$ message about action $a$ was sent from $v$ to its parent $\hat{v}$.
The message reaches $\hat{v}$, which is at a distance $d-1$ from the root, at timestep $t+1$.
At timestep $t+1$ the agent $\hat{v}$ sends this message to $\hat{v}$'s parent since $t + 1 + (d-1) = t + d \equiv a \pmod{A}$, and the message continues up-stream with no delay.
Similarly for the down-stream.
Additionally, a message that travels from $v$ to $u$ might wait at most $A$ timesteps at their common ancestor until the round-robin reaches this action. We get the following theorem.
\begin{theorem}
    \label{thm:clock-main}
    When all the agents play \coopselowcomclock (\Cref{alg:coop-SE-clock-low} in the appendix) the individual regret of each agent $v\in V$ is bounded by,
    \begin{align*}
        \regret^v =
         O \bigg(
        & \sum_{i\in[A], \Delta_i > 0}\frac{\log(mTA)}{\graphsize \Delta_i}
        + A^2
        +A\sqrt{\log(mTA)} \bigg).
    \end{align*}
\end{theorem}



\subsection{Small number of messages}

In this section, we present $\coopsecommcost$, a variant of Successive Elimination that requires only $O(\log(T))$ communication rounds per agent. The algorithm operates in phases and communicates along a spanning tree, clustering agents into groups of size at least $\min\{2^i, m\}$ in phase $i$. Within each cluster, the maximum distance between the cluster root and any descendant is at most $2^{i+1}$. The existence and a computation of such clustering is shown in \Cref{lem:cluster-size} and \Cref{alg:clusters} in the appendix.

Each phase consists of three $2^{i+1}$-length steps: first, agents send information upward to their cluster root; next, the root determines the active set of actions and broadcasts it downward; finally, agents synchronously sample active actions. During the sampling step, no communication occurs. This structure allows each cluster to collect $\Omega(2^i \cdot \min\{2^i, m\})$ samples while sending messages in only $O(1)$ timesteps per phase.

The analysis's complexity arises because agents determine their current phase's active action set using information from the previous phase rather than the current one.
Let $A_i$ be the number of active actions in phase $i$ (for agent $v$).
Unlike single-agent phased algorithms, we do not require agents to sample each costly action for $2^i$ steps, since this would underuse cooperation.
In our phasing algorithm, the phase lasts $2^i$ steps, and each action is sampled $2^i / A_i$ times and the regret scales as $(2^i/A_i)\Delta_i$.
To illustrate the problem, notice that the number of samples used for the elimination at the start of phase $i$ is inversely related to $A_{i-1}$, not to $A_i$, in contrast to the regret.
If $A_{i-1} \gg A_i$, it might be that easy-to-eliminate actions were eliminated at the beginning of phase $i$, leaving only the costly ones in phase $i$.

We addressed this challenge through amortized analysis: phases with a low ratio of previous-to-current active actions ($A_{i-1}/A_i\leq 2)$ effectively subsidize phases with a high ratio. The complete analysis can be found in the appendix, specifically in \Cref{lem:comm-cost-regret-amortizing-bi-bj}.
Formally, we get,
\begin{theorem}
\label{thm:comm-cost-regret}
    When all agents play $\coopsecommcost$ (\Cref{alg:coop-SE-comm-cost} in the appendix) the individual regret of each agent is,
    \begin{align*}
        \regret^v =
        O\bigg(
        &\sum_{a\in\calA}\frac{\log(mTA)\log(A)}{\Delta_a\cdot m}
        +  
        A \log(mTA)\log(A) \bigg).
    \end{align*}
\end{theorem}



\section{Future Work}
\label{sec:future work}

Our work leaves several interesting directions for future works. First, our algorithms can either handle logarithmic message size or logarithmic number of communication rounds. An interesting future work would be to achieve our near-optimal regret bounds while simultaneously maintaining both logarithmic message size and logarithmic number of communication rounds. 
Second, extending our results to other MAB algorithms, specifically, Upper Confidence Bound (UCB) and Thompson Sampling, would be very interesting. 
The technical challenge is that in UCB (or Thompson sampling) there are actions which are selected very rarely. Such actions can cause different agents to behave differently. Our methodology builds on having the different agents behave similarly (as is shown through the implicit or explicit synchronization).
One can implement explicit synchronization at the cost of the diameter, which will result in a significantly inferior regret.
%

Another interesting direction for future work is to more explicitly leverage the graph structure to improve cooperation efficiency. In particular, characterizing how the regret depends on the topology of the communication graph may enable reducing the additive term in our bound.

Finally, a natural open question is closing the gap in the additive term between our upper and lower regret bounds.

\section*{Acknowledgments}
This project is supported by the European Research Council (ERC) under the European Union’s Horizon 2020 research and innovation program (grant agreement No. 882396), by the Israel Science Foundation and the Yandex Initiative for Machine Learning at Tel Aviv University and by a grant from the Tel Aviv University Center for AI and Data Science (TAD).

\newpage
\bibliographystyle{plainnat}
\bibliography{coop_graph}

\newpage
\section*{NeurIPS Paper Checklist}

\begin{enumerate}

\item {\bf Claims}
    \item[] Question: Do the main claims made in the abstract and introduction accurately reflect the paper's contributions and scope?
    \item[] Answer: \answerYes{} 
    \item[] Justification: The abstract reflect the paper's content and the paper contains proofs for the claims.
    \item[] Guidelines:
    \begin{itemize}
        \item The answer NA means that the abstract and introduction do not include the claims made in the paper.
        \item The abstract and/or introduction should clearly state the claims made, including the contributions made in the paper and important assumptions and limitations. A No or NA answer to this question will not be perceived well by the reviewers. 
        \item The claims made should match theoretical and experimental results, and reflect how much the results can be expected to generalize to other settings. 
        \item It is fine to include aspirational goals as motivation as long as it is clear that these goals are not attained by the paper. 
    \end{itemize}

\item {\bf Limitations}
    \item[] Question: Does the paper discuss the limitations of the work performed by the authors?
    \item[] Answer: \answerYes{} 
    \item[] Justification: We discussed the limitations and our assumptions throughout the paper. 
    \item[] Guidelines:
    \begin{itemize}
        \item The answer NA means that the paper has no limitation while the answer No means that the paper has limitations, but those are not discussed in the paper. 
        \item The authors are encouraged to create a separate "Limitations" section in their paper.
        \item The paper should point out any strong assumptions and how robust the results are to violations of these assumptions (e.g., independence assumptions, noiseless settings, model well-specification, asymptotic approximations only holding locally). The authors should reflect on how these assumptions might be violated in practice and what the implications would be.
        \item The authors should reflect on the scope of the claims made, e.g., if the approach was only tested on a few datasets or with a few runs. In general, empirical results often depend on implicit assumptions, which should be articulated.
        \item The authors should reflect on the factors that influence the performance of the approach. For example, a facial recognition algorithm may perform poorly when image resolution is low or images are taken in low lighting. Or a speech-to-text system might not be used reliably to provide closed captions for online lectures because it fails to handle technical jargon.
        \item The authors should discuss the computational efficiency of the proposed algorithms and how they scale with dataset size.
        \item If applicable, the authors should discuss possible limitations of their approach to address problems of privacy and fairness.
        \item While the authors might fear that complete honesty about limitations might be used by reviewers as grounds for rejection, a worse outcome might be that reviewers discover limitations that aren't acknowledged in the paper. The authors should use their best judgment and recognize that individual actions in favor of transparency play an important role in developing norms that preserve the integrity of the community. Reviewers will be specifically instructed to not penalize honesty concerning limitations.
    \end{itemize}

\item {\bf Theory assumptions and proofs}
    \item[] Question: For each theoretical result, does the paper provide the full set of assumptions and a complete (and correct) proof?
    \item[] Answer: \answerYes{} 
    \item[] Justification: The precise setting and assumptions are given in \Cref{sec:model}. All the theorems and lemmas are rigorously proved in the appendix.
    \item[] Guidelines:
    \begin{itemize}
        \item The answer NA means that the paper does not include theoretical results. 
        \item All the theorems, formulas, and proofs in the paper should be numbered and cross-referenced.
        \item All assumptions should be clearly stated or referenced in the statement of any theorems.
        \item The proofs can either appear in the main paper or the supplemental material, but if they appear in the supplemental material, the authors are encouraged to provide a short proof sketch to provide intuition. 
        \item Inversely, any informal proof provided in the core of the paper should be complemented by formal proofs provided in appendix or supplemental material.
        \item Theorems and Lemmas that the proof relies upon should be properly referenced. 
    \end{itemize}

    \item {\bf Experimental result reproducibility}
    \item[] Question: Does the paper fully disclose all the information needed to reproduce the main experimental results of the paper to the extent that it affects the main claims and/or conclusions of the paper (regardless of whether the code and data are provided or not)?
    \item[] Answer: \answerNA{} 
    \item[] Justification: The paper does not include experiments.
    \item[] Guidelines:
    \begin{itemize}
        \item The answer NA means that the paper does not include experiments.
        \item If the paper includes experiments, a No answer to this question will not be perceived well by the reviewers: Making the paper reproducible is important, regardless of whether the code and data are provided or not.
        \item If the contribution is a dataset and/or model, the authors should describe the steps taken to make their results reproducible or verifiable. 
        \item Depending on the contribution, reproducibility can be accomplished in various ways. For example, if the contribution is a novel architecture, describing the architecture fully might suffice, or if the contribution is a specific model and empirical evaluation, it may be necessary to either make it possible for others to replicate the model with the same dataset, or provide access to the model. In general. releasing code and data is often one good way to accomplish this, but reproducibility can also be provided via detailed instructions for how to replicate the results, access to a hosted model (e.g., in the case of a large language model), releasing of a model checkpoint, or other means that are appropriate to the research performed.
        \item While NeurIPS does not require releasing code, the conference does require all submissions to provide some reasonable avenue for reproducibility, which may depend on the nature of the contribution. For example
        \begin{enumerate}
            \item If the contribution is primarily a new algorithm, the paper should make it clear how to reproduce that algorithm.
            \item If the contribution is primarily a new model architecture, the paper should describe the architecture clearly and fully.
            \item If the contribution is a new model (e.g., a large language model), then there should either be a way to access this model for reproducing the results or a way to reproduce the model (e.g., with an open-source dataset or instructions for how to construct the dataset).
            \item We recognize that reproducibility may be tricky in some cases, in which case authors are welcome to describe the particular way they provide for reproducibility. In the case of closed-source models, it may be that access to the model is limited in some way (e.g., to registered users), but it should be possible for other researchers to have some path to reproducing or verifying the results.
        \end{enumerate}
    \end{itemize}

\item {\bf Open access to data and code}
    \item[] Question: Does the paper provide open access to the data and code, with sufficient instructions to faithfully reproduce the main experimental results, as described in supplemental material?
    \item[] Answer: \answerNA{} 
    \item[] Justification: The paper does not include experiments.
    \item[] Guidelines:
    \begin{itemize}
        \item The answer NA means that paper does not include experiments requiring code.
        \item Please see the NeurIPS code and data submission guidelines (\url{https://nips.cc/public/guides/CodeSubmissionPolicy}) for more details.
        \item While we encourage the release of code and data, we understand that this might not be possible, so “No” is an acceptable answer. Papers cannot be rejected simply for not including code, unless this is central to the contribution (e.g., for a new open-source benchmark).
        \item The instructions should contain the exact command and environment needed to run to reproduce the results. See the NeurIPS code and data submission guidelines (\url{https://nips.cc/public/guides/CodeSubmissionPolicy}) for more details.
        \item The authors should provide instructions on data access and preparation, including how to access the raw data, preprocessed data, intermediate data, and generated data, etc.
        \item The authors should provide scripts to reproduce all experimental results for the new proposed method and baselines. If only a subset of experiments are reproducible, they should state which ones are omitted from the script and why.
        \item At submission time, to preserve anonymity, the authors should release anonymized versions (if applicable).
        \item Providing as much information as possible in supplemental material (appended to the paper) is recommended, but including URLs to data and code is permitted.
    \end{itemize}

\item {\bf Experimental setting/details}
    \item[] Question: Does the paper specify all the training and test details (e.g., data splits, hyperparameters, how they were chosen, type of optimizer, etc.) necessary to understand the results?
    \item[] Answer: \answerNA{} 
    \item[] Justification:  The paper does not include experiments.
    \item[] Guidelines:
    \begin{itemize}
        \item The answer NA means that the paper does not include experiments.
        \item The experimental setting should be presented in the core of the paper to a level of detail that is necessary to appreciate the results and make sense of them.
        \item The full details can be provided either with the code, in appendix, or as supplemental material.
    \end{itemize}

\item {\bf Experiment statistical significance}
    \item[] Question: Does the paper report error bars suitably and correctly defined or other appropriate information about the statistical significance of the experiments?
    \item[] Answer: \answerNA{} 
    \item[] Justification: The paper does not include experiments.
    \item[] Guidelines:
    \begin{itemize}
        \item The answer NA means that the paper does not include experiments.
        \item The authors should answer "Yes" if the results are accompanied by error bars, confidence intervals, or statistical significance tests, at least for the experiments that support the main claims of the paper.
        \item The factors of variability that the error bars are capturing should be clearly stated (for example, train/test split, initialization, random drawing of some parameter, or overall run with given experimental conditions).
        \item The method for calculating the error bars should be explained (closed form formula, call to a library function, bootstrap, etc.)
        \item The assumptions made should be given (e.g., Normally distributed errors).
        \item It should be clear whether the error bar is the standard deviation or the standard error of the mean.
        \item It is OK to report 1-sigma error bars, but one should state it. The authors should preferably report a 2-sigma error bar than state that they have a 96\% CI, if the hypothesis of Normality of errors is not verified.
        \item For asymmetric distributions, the authors should be careful not to show in tables or figures symmetric error bars that would yield results that are out of range (e.g. negative error rates).
        \item If error bars are reported in tables or plots, The authors should explain in the text how they were calculated and reference the corresponding figures or tables in the text.
    \end{itemize}

\item {\bf Experiments compute resources}
    \item[] Question: For each experiment, does the paper provide sufficient information on the computer resources (type of compute workers, memory, time of execution) needed to reproduce the experiments?
    \item[] Answer: \answerNA{} 
    \item[] Justification: The paper does not include experiments.
    \item[] Guidelines:
    \begin{itemize}
        \item The answer NA means that the paper does not include experiments.
        \item The paper should indicate the type of compute workers CPU or GPU, internal cluster, or cloud provider, including relevant memory and storage.
        \item The paper should provide the amount of compute required for each of the individual experimental runs as well as estimate the total compute. 
        \item The paper should disclose whether the full research project required more compute than the experiments reported in the paper (e.g., preliminary or failed experiments that didn't make it into the paper). 
    \end{itemize}
    
\item {\bf Code of ethics}
    \item[] Question: Does the research conducted in the paper conform, in every respect, with the NeurIPS Code of Ethics \url{https://neurips.cc/public/EthicsGuidelines}?
    \item[] Answer: \answerYes{} 
    \item[] Justification: We followed NeurIPS Code of Ethics.
    \item[] Guidelines:
    \begin{itemize}
        \item The answer NA means that the authors have not reviewed the NeurIPS Code of Ethics.
        \item If the authors answer No, they should explain the special circumstances that require a deviation from the Code of Ethics.
        \item The authors should make sure to preserve anonymity (e.g., if there is a special consideration due to laws or regulations in their jurisdiction).
    \end{itemize}

\item {\bf Broader impacts}
    \item[] Question: Does the paper discuss both potential positive societal impacts and negative societal impacts of the work performed?
    \item[] Answer: \answerNA{} 
    \item[] Justification: We did not find any direct societal impact of the work performed.
    \item[] Guidelines:
    \begin{itemize}
        \item The answer NA means that there is no societal impact of the work performed.
        \item If the authors answer NA or No, they should explain why their work has no societal impact or why the paper does not address societal impact.
        \item Examples of negative societal impacts include potential malicious or unintended uses (e.g., disinformation, generating fake profiles, surveillance), fairness considerations (e.g., deployment of technologies that could make decisions that unfairly impact specific groups), privacy considerations, and security considerations.
        \item The conference expects that many papers will be foundational research and not tied to particular applications, let alone deployments. However, if there is a direct path to any negative applications, the authors should point it out. For example, it is legitimate to point out that an improvement in the quality of generative models could be used to generate deepfakes for disinformation. On the other hand, it is not needed to point out that a generic algorithm for optimizing neural networks could enable people to train models that generate Deepfakes faster.
        \item The authors should consider possible harms that could arise when the technology is being used as intended and functioning correctly, harms that could arise when the technology is being used as intended but gives incorrect results, and harms following from (intentional or unintentional) misuse of the technology.
        \item If there are negative societal impacts, the authors could also discuss possible mitigation strategies (e.g., gated release of models, providing defenses in addition to attacks, mechanisms for monitoring misuse, mechanisms to monitor how a system learns from feedback over time, improving the efficiency and accessibility of ML).
    \end{itemize}
    
\item {\bf Safeguards}
    \item[] Question: Does the paper describe safeguards that have been put in place for responsible release of data or models that have a high risk for misuse (e.g., pretrained language models, image generators, or scraped datasets)?
    \item[] Answer: \answerNA{} 
    \item[] Justification: The paper poses no such risks.
    \item[] Guidelines:
    \begin{itemize}
        \item The answer NA means that the paper poses no such risks.
        \item Released models that have a high risk for misuse or dual-use should be released with necessary safeguards to allow for controlled use of the model, for example by requiring that users adhere to usage guidelines or restrictions to access the model or implementing safety filters. 
        \item Datasets that have been scraped from the Internet could pose safety risks. The authors should describe how they avoided releasing unsafe images.
        \item We recognize that providing effective safeguards is challenging, and many papers do not require this, but we encourage authors to take this into account and make a best faith effort.
    \end{itemize}

\item {\bf Licenses for existing assets}
    \item[] Question: Are the creators or original owners of assets (e.g., code, data, models), used in the paper, properly credited and are the license and terms of use explicitly mentioned and properly respected?
    \item[] Answer: \answerNA{} 
    \item[] Justification: The paper does not use existing assets.
    \item[] Guidelines:
    \begin{itemize}
        \item The answer NA means that the paper does not use existing assets.
        \item The authors should cite the original paper that produced the code package or dataset.
        \item The authors should state which version of the asset is used and, if possible, include a URL.
        \item The name of the license (e.g., CC-BY 4.0) should be included for each asset.
        \item For scraped data from a particular source (e.g., website), the copyright and terms of service of that source should be provided.
        \item If assets are released, the license, copyright information, and terms of use in the package should be provided. For popular datasets, \url{paperswithcode.com/datasets} has curated licenses for some datasets. Their licensing guide can help determine the license of a dataset.
        \item For existing datasets that are re-packaged, both the original license and the license of the derived asset (if it has changed) should be provided.
        \item If this information is not available online, the authors are encouraged to reach out to the asset's creators.
    \end{itemize}

\item {\bf New assets}
    \item[] Question: Are new assets introduced in the paper well documented and is the documentation provided alongside the assets?
    \item[] Answer: \answerNA{} 
    \item[] Justification: The paper does not release new assets.
    \item[] Guidelines:
    \begin{itemize}
        \item The answer NA means that the paper does not release new assets.
        \item Researchers should communicate the details of the dataset/code/model as part of their submissions via structured templates. This includes details about training, license, limitations, etc. 
        \item The paper should discuss whether and how consent was obtained from people whose asset is used.
        \item At submission time, remember to anonymize your assets (if applicable). You can either create an anonymized URL or include an anonymized zip file.
    \end{itemize}

\item {\bf Crowdsourcing and research with human subjects}
    \item[] Question: For crowdsourcing experiments and research with human subjects, does the paper include the full text of instructions given to participants and screenshots, if applicable, as well as details about compensation (if any)? 
    \item[] Answer: \answerNA{} 
    \item[] Justification: The paper does not involve crowdsourcing nor research with human subjects.
    \item[] Guidelines:
    \begin{itemize}
        \item The answer NA means that the paper does not involve crowdsourcing nor research with human subjects.
        \item Including this information in the supplemental material is fine, but if the main contribution of the paper involves human subjects, then as much detail as possible should be included in the main paper. 
        \item According to the NeurIPS Code of Ethics, workers involved in data collection, curation, or other labor should be paid at least the minimum wage in the country of the data collector. 
    \end{itemize}

\item {\bf Institutional review board (IRB) approvals or equivalent for research with human subjects}
    \item[] Question: Does the paper describe potential risks incurred by study participants, whether such risks were disclosed to the subjects, and whether Institutional Review Board (IRB) approvals (or an equivalent approval/review based on the requirements of your country or institution) were obtained?
    \item[] Answer: \answerNA{} 
    \item[] Justification: The paper does not involve crowdsourcing nor research with human subjects.
    \item[] Guidelines:
    \begin{itemize}
        \item The answer NA means that the paper does not involve crowdsourcing nor research with human subjects.
        \item Depending on the country in which research is conducted, IRB approval (or equivalent) may be required for any human subjects research. If you obtained IRB approval, you should clearly state this in the paper. 
        \item We recognize that the procedures for this may vary significantly between institutions and locations, and we expect authors to adhere to the NeurIPS Code of Ethics and the guidelines for their institution. 
        \item For initial submissions, do not include any information that would break anonymity (if applicable), such as the institution conducting the review.
    \end{itemize}

\item {\bf Declaration of LLM usage}
    \item[] Question: Does the paper describe the usage of LLMs if it is an important, original, or non-standard component of the core methods in this research? Note that if the LLM is used only for writing, editing, or formatting purposes and does not impact the core methodology, scientific rigorousness, or originality of the research, declaration is not required.
    \item[] Answer: \answerNA{} 
    \item[] Justification: The core method development in this research does not involve LLMs as any important, original, or non-standard components.
    \item[] Guidelines: 
    \begin{itemize}
        \item The answer NA means that the core method development in this research does not involve LLMs as any important, original, or non-standard components.
        \item Please refer to our LLM policy (\url{https://neurips.cc/Conferences/2025/LLM}) for what should or should not be described.
    \end{itemize}

\end{enumerate}

\newpage
\appendix

\section{Other Related Work}
\label{sec:other-related-work}

\noindent{\bf Cooperative MAB with heavy tail distributions} was in \cite{DubeyP20}. They show individual regret bound that scales inversely with the number of neighbors, as opposed to the total number of agents, as in our regret bounds.
\noindent{\bf Asynchronous model} was considered by
\cite{Sankararaman19,ChawlaSGS20}. In this model, agents do not have a shared global system clock, and thus, it is a harder setting than that considered in this work. Consequently, the regret bounds they achieve are significantly weaker, scaling as $\sum_{i=1}^{\lceil A/m \rceil + 2}\log(T)/\Delta_i$ where $\Delta_1 \leq \Delta_2 \leq \ldots \leq \Delta_{A-1}$.
\noindent{\bf Directed communication} graphs and random graphs was studied as well. In \cite{zhu2021decentralized,Zhu021,ZhuL23} they considered directed communication graphs.
Their instance-dependent regret has an additive term that is linear in the number of agents. In \cite{MadhushaniDLP21} they considered a setting where the communication graph is stochastic, such that messages have random delays and adversarial corruptions. Their regret has a multiplicative factor that can be as large as the clique cover number. 
\noindent{\bf Best action identification} using cooperation was studied in \cite{Hillel13,Tao0019} where the network is fully connected and they also minimize the number of messages.
\noindent{\bf Heterogeneous agents} which observe their neighbors with some probability and minimize the group regret were also addressed by \cite{MadhushaniL19,MadhushaniL21}.
In \cite{MadhushaniL19} they derived a group regret based on various properties of the graph and in \cite{MadhushaniL21} they studied group regret in multi-star networks.
The case of each agent having a subset of actions that are relevant to them was studied in \cite{YangCHLT22}, and the group regret bound was derived.
\noindent{\bf Linear contextual  MAB} with a network of users of similar linear utility was analyzed in \cite{Cesa-BianchiGZ13}.
\noindent{\bf Cooperation in Markov-Decision-Processes (MDPs)} has been studied in \cite{lidard2022provably}, who have shown group regret guarantees in cooperative stochastic MDPs over a general network. In \cite{lancewicki2022-coop-mdp} they considered both the stochastic and non-stochastic cases in cooperative MDPs but only over a fully connected graph.

\section{Summery of Notations}
For convenience, the table below summarizes most of the notation used throughout the paper.
\begin{table}[ht]
    \centering
    \begin{tabular}{c|l}
    $\Dist_a$ & The reward distribution of action $a$
    \\
    $\mu_a$ & The expected reward of action $a$ 
    \\
    $\mu^\star$ & The maximal expected reward
    \\
    $a^\star$ & An optimal action
    \\
    $\Delta_{a}$ & The sub-optimality gap of action $a$ 
    \\
    $N^u_{\leq d}$ &  The set of agents at distance at most $d$ from agent $u$
    \\
    $N_{\leq d}$ & For ease of notation $N_{\leq d}:=N^v_{\leq d}$; see \Cref{rem:agent-v}
    \\
    $\distance(v,u)$ &  The minimal path length (number of edges) from $v$ to $u$ 
    \\
    $t_j$ & The beginning of stage $j$ of agent $v$; see \Cref{rem:agent-v}
    \\
    $\tau_j$ & The length of stage $j$ of agent $v$; see \Cref{rem:agent-v}
    \\
    $A_j$ & The number of active actions in the $j$'th step of agent $v$; see \Cref{rem:agent-v}
    \\
    $\logterm$ & $\log(3mTA)$
    \\
    $n_t^u(a)$ &  The number of samples that $u$ observed by the beginning of time $t$
    \\
    $n_t(a)$ & For ease of notation $n_t(a) := n_t^v(a)$; see \Cref{rem:agent-v}
    \\
    $\indcount_{t}^{u}(\a)$ & The number of times agent $u$ played action $\a$ until the beginning of round $t$.
    \\
    $\indcount_{t}(\a)$ & For ease of notation $\indcount_{t}(\a):=\indcount_{t}^{v}(\a)$; see \Cref{rem:agent-v}
    \\
    $p_k^u$ & The policy of agent $u$ at time $k$
    \\
    $\largedelta$ & The set of elimination indices (with respect to agent $\id$) with gaps larger than 
    $\sqrt{\frac{A \logterm}{T\graphsize}}$
    \\
    $a_i$ & The $i$'th action being eliminaed by agent $v$
    \\
    $\Delta_i$ & The sub-optimality gap of $a_i$
    \\
    $\goodtau$ & The set of "Good Intervals": $\{j | \tau_j > 16 \}$.
    \\
    $\shorttau$ & The set of "Short Intervals": $\{j|j\in\goodtau \And \tau_j/4<\graphsize\}$
    \\
    $a\vee b $ & The maximum between the elements. $a\vee b :=\max \{a,b\}$

    \end{tabular}

    \label{tab:notation}
\end{table}
\newpage

\section{Instance independent Bounds}
    The instance independent bounds follow immediately from the instance dependent bounds.
    It works generally as follows.
    The analysis divides the gaps into two.
    The first group of gaps are the small gaps, where $\Delta_i \leq \sqrt{\frac{\log(T) A}{T m}}$. This group contributes no more than $T \cdot \sqrt{\frac{\log(T) A}{T m}} =  \sqrt{\frac{\log(T) AT}{m}} $ to the regret.
    The gaps from second group appear as inverse in the bounds, and we get
    $ \log(T)/(m \Delta_i) \leq  \sqrt{\frac{\log(T)T}{Am}}$.
    Summing over all the actions we get  $\sqrt{\frac{\log(T)AT}{m}}$.
    
\section{Omitted Proof from \texorpdfstring{\Cref{sec:warmup}}{sec:warmup}}
\label{sec:apx:warmup}

\begin{remark}
    The problem independent bound for $\susact$ is
    \[
        \regret^v = O\bigg( \log(mTA)\sqrt{\frac{\actions T}{\graphsize}} +  \diam \log\actions +\actions \bigg).
    \]
\end{remark}

\begin{proof}[Proof of \Cref{thm:diam}]
Without loss of generality, we assume that $D\geq 1$.
    Let us define the good event as the event in which in every timestep the mean of the action is in the confidence interval. It is described in detailed in the appendix (see \Cref{def:good-event-1}). From \Cref{lem:good-event-1}, the complementary event occurs only with probability of at most $1/T^2$, and thus, adds no more than $1$ to the regret. For the rest of the proof will assume that the good event holds.
    Therefore, for all $\a$,
    \[
        UCB_t(\a^\star) \geq \mu^\star\geq \mu_\a\geq LCB_t(\a).
    \]
    Where $UCB_t(a)$ is the upper confidence bound that was calculated at the timestep $t$, and similarly $LCB_t(a)$.
    Therefore $\a^\star$ is never eliminated.
    
    Let $n_t(\a)$ be the number of suspended counts until the beginning of timestep $t$, i.e., $n_t(\a) = \sum_{\tau = 1}^{t - \diam} \sum_{v} \I\{\a_{\tau}^v = \a\}$.
    Denote by $B_t(\a)$ the total number of times that $a$ was played by all agents until the beginning of round $t$. I.e., $B_t(\a) = n_{t + \diam - 1}(\a) = \sum_{\tau = 1}^{t - 1} \sum_{\id\in\agents} \I\{\a_{\tau}^v = a\}$.
    Let $t_\a$ be the last elimination step which $\a$ was not yet eliminated. By definition, since $a$ was not eliminated,
    \[
        LCB_{t_\a}(\a^\star) \leq UCB_{t_a}(\a).
    \]
    Under the good event,
    \begin{align*}
        LCB_{t_{\a}}(a^{\star}) 
        & = \hat{\mu}_{t_{\a}}(a^{\star}) - \sqrt{\frac{2\logtermval}{n_{t_{\a}}(a^{\star})\vee 1}} 
        \geq \mu(a^{\star}) - 2\sqrt{\frac{2\logtermval}{n_{t_{\a}}(a^{\star})\vee 1}}
        \\
        UCB_{t_{\a}}(a) 
        & = \hat{\mu}_{t_{\a}}(\a) + \sqrt{\frac{2\logtermval}{n_{t_{\a}}(\a)\vee 1}} 
        \leq \mu(\a) + 2\sqrt{\frac{2\logtermval}{n_{t_{\a}}(\a) \vee 1}},
    \end{align*}
    where $x \vee y := \max \{x,y\}$.
    Combining with the last display we get,
    \begin{align*}
        & \Delta_{\a} \leq 2\sqrt{\frac{\logtermval}{n_{t_{\a}}(\a)\vee 1}} + 2\sqrt{\frac{\logtermval}{n_{t_{\a}}(\a^{\star}) \vee 1}}
        \leq \sqrt{\frac{16(\logtermval)}{(n_{t_{\a}}(\a)-m)\vee 1}}
        \\
        & \Longrightarrow
        n_{t_{\a}}(\a) \leq \frac{16 \logtermval}{\Delta_\a^2} + m
    \end{align*}
    where we've used the fact that active actions are played at the same rate, and thus the number of suspended counts of two active actions differs by at most $m$. Sicne $t_\a$ is the last elimination step in which $a$ was not eliminated, $\a$ was played is no more than $B_{t_\a}(\a) + \graphsize$ times (in $t_\a$ the agents still didn't eliminate $\a$). Thus, the total sum of regret form action $a$ is bounded by,
    \begin{align}
    \nonumber
        (\graphsize+B_{t_{\a}}(\a)) \Delta_{\a} & = \graphsize\cdot\Delta_{\a} + n_{t_{\a}}(\a) \Delta_{\a} 
        + (B_{t_{\a}}(\a) - n_{t_{\a}}(\a)) \Delta_{\a} 
        \\
        \nonumber
         & \leq 2\graphsize\cdot\Delta_{\a} + \frac{16 \logtermval}{\Delta_{\a}} +
         (B_{t_{\a}}(a)-n_{t_{\a}}(\a)) \Delta_{\a}.
         \label{eq:waction-up regret a}
    \end{align}
    Denote by $\sigma(\a)$ the number of active actions at time $t_\a$.
    Notice that every agents waits at least $\diam$ timetsteps before the first elimination, hence $t_a \geq \diam$.
    We can bound the last term above by,
    \begin{align}
        B_{t_{\a}}(\a)-n_{t_{\a}}(\a) & = \sum_{\tau=t_{\a}-\diam+1}^{t_{\a}-1} \sum_{v} \I\{\a_{t}^{v}=a\} \\
        \nonumber
         & =\sum_{v} \sum_{\tau=t_{a}-\diam+1}^{t_{\a}-1} \I\{a_{t}^{v}=a\} \\
         & \leq \sum_{v} \left( \frac{\diam}{\sigma(\a)}+1 \right) 
         = \frac{\graphsize\diam}{\sigma(\a)} + \graphsize.
         \label{eq:waction-up drift a}
    \end{align}
    where the inequality holds since there are at most $\diam - 1$ timesteps, and the agnet chooses the actions in round robin.
    By combining \eqref{eq:waction-up regret a} and \eqref{eq:waction-up drift a}, summing over the actions, and noting that $\sum_{\a\ne \a^\star} \frac{1}{\sigma(\a)} \leq \log\actions + 1$ (regardless of the elimination order, see \Cref{lem:sum 1 over Aj}), we get that the total regret is bounded by,
    \begin{align*}
        \sum_{\a\ne \a^\star}\left( 2\graphsize\cdot\Delta_\a + \frac{16\logtermval}{\Delta_{\a}}\right) + \graphsize \diam (\log\actions + 1) + \graphsize \actions.
    \end{align*}
    Finally, since all agents play the exact same actions we get the the individual regret of each agent is bounded by,
    \begin{align*}
        \regret &\leq \sum_{\a\ne \a^\star} \left( 2\Delta_\a + \frac{16\logtermval}{\graphsize\Delta_{\a}} \right) +  \diam (\log\actions + 1) + \actions
        \\&\leq  \sum_{\a\ne \a^\star}\frac{16\logtermval}{\graphsize\Delta_{\a}} +  \diam (\log\actions + 1) + 3\actions,
    \end{align*}
    where the last inequality is since $\sum_a\Delta_a\leq A$.
    This finishes the proof for the gaps dependent bound.
    
    Now we will reach the problem-independent bound.
    All actions with small gaps, $\{\a \in \actions | \Delta_\a \leq \sqrt{\frac{\actions}{T\graphsize}} \}$, contribute no more than $\sqrt{\frac{\actions T}{\graphsize}}$ to the regret.
    There are at most $T$ round in which the agent chooses actions with small gaps, so their contribution is bounded by $T\sqrt{\frac{\actions}{T \graphsize}} = \sqrt{\frac{\actions T}{\graphsize}}$.
    For large gaps, i.e., $\Delta_\a > \sqrt{\frac{\actions}{T \graphsize}}$ we get
    \begin{align*}
    & \sum_{\a\ne \a^\star, \Delta_\a > \sqrt{\frac{\actions}{T \graphsize}} }\frac{16\logtermval}{\graphsize}\sqrt{\frac{T \graphsize}{\actions}}
    \leq 16\logtermval\sqrt{\frac{\actions T}{\graphsize}}
    \end{align*}
    Putting it all together with the small gaps and with the good event, we get
    \[
    \regret \leq 17(\logtermval)\sqrt{\frac{\actions T}{\graphsize}} +  \diam (\log\actions + 1) + 3\actions + 1
    \]

\end{proof}

\section{Proof of the Main Theorem}
\label{sec:apx:proof-main}

\begin{remark}\label{rem:agent-v}
    For the ease of notation, the following proof and definitions focus on a specific agent, named $v$.
\end{remark}

\subsection{Definitions}

\begin{definition}\label{def:stage}
    A stage is a timestep-interval when its boundaries are the eliminations.
    The stage's index is usually denoted by $j$.
    The time interval is split into $\actions$ different stages.
    Assume that the elimination timesteps are $s_1,s_2,\dots$.
    The first stage starts at $t=1$ and ends with the first elimination.
    I.e., it is the timesteps that are in time interval $[1,s_1)$.
    The second stage is $[s_1,s_2)$, etc.
    Denote $t_j$ to be the timestep in which the agent started the $j$'th stage, where
     $t_1=1$ and $t_{A+1}=T+1$.
\end{definition}

\begin{definition}\label{def:stage-length}
    Denote $\tau_j$ to be the length of the $j$'th stage (for agent $v$).
\end{definition}

\begin{definition}\label{def:num-remained-actions}
    Denote $\actions_j := \actions - j + 1$ to be the number of remained actions in the $j$'th stage.
\end{definition}

\begin{definition}\label{def:elimination-index}
    Elimination index $i$ of the action $a$ is the stage index in which in its end the action is eliminated.
    Every action has a unique elimination index.
    
    If some actions are eliminated in the same timestep, then the stage is of zero length and the elimination index are chosen arbitrary.
    The elimination index of $a$ is denoted by $i_a$, and the appropriate action for elimination index $i$ is denoted by $a_i$.
\end{definition}

\begin{definition}\label{def:large-delta}
    Denote with $\largedelta$ the set of elimination indices of large gaps.
    $\largedelta = \{i| \Delta_{a_i} \geq \sqrt{\frac{A \logterm}{T\graphsize}}\}$.
\end{definition}

\begin{definition}
    For the ease of notation, denote $\Delta_i := \Delta_{a_i}$.
\end{definition}

\begin{definition}
Define the set of "Good Intervals" to be the set of long enough intervals:
$\goodtau = \{j | \tau_j > 16 A_j \}$.
These are the intervals we will focus in the proofs.
\end{definition}

\begin{definition}
    Denote the group of indices of short stages with $\shorttau$.
    Specifically,
    \begin{equation*}
        \shorttau:=\{j|j\in\goodtau \And \tau_j/4<\graphsize\}
    \end{equation*}
\end{definition}

\begin{definition}
    Denote the number of samples an agent $u$ sees for action $a$ until the beginning of timestep $t$ with $n_t^u(a)$.
    For the ease of notation, denote $n_t(a) := n_t^v(a)$.
\end{definition}

\begin{definition}
    Denote by $\indcount_{t}^{u}(\a)$ the number of times agent $u$ played action $\a$ until the beginning of round $t$.
\end{definition}

\begin{definition}
    Denote the maximum of elements with $\vee$, i.e., $a\vee b := \max \{a,b\}$
\end{definition}

\begin{definition}
\label{def:LCB-UCB}
    Denote the upper confidence bound for agent $u$ for action $a$ with $UCB^u_{n(a)}(\a) = \hat{\mu}(a) + \sqrt{\frac{2\logtermval}{n(a) \vee 1}} $,
    where $n(a)$ is the number of times agent $u$ observed action $a$, and $\hat{\mu}(a)$ is the empirical mean calculated by $u$ for action $a$.
    Similarly, let $LCB^u_{n(a)}(a) = \hat{\mu}(a) - \sqrt{\frac{2\logtermval}{n(a) \vee 1}}$ denote the corresponding lower confidence bound.
    In other words, $UCB^u_{n(a)}(a)$ and $LCB^u_{n(a)}(a)$ are the confidence bounds calculated in \Cref{alg:elimstep} \Cref{alg:elimstep:cb} when agent $u$ calls this algorithm with parameters $n$, a vector containing the number of observations for each action, and $\hat{\mu}$, the vector of empirical means.
\end{definition}


\begin{definition}
    Denote the logarithmic term used in \Cref{alg:elimstep} with $\logterm$, i.e., $\logterm = \logtermval$
\end{definition}

\subsection{The Good Event}
\label{sec:good-event}

The good event $G^1$ captures the intuition that the true expectation of each action is between the UCB and the LCB.
\begin{definition}\label{def:good-event-1}
Define the good event, $G^1$, to be the event in which for every agent $u$, for every action $\a$ and for every $\RWD$-event that was received, the empirical mean is in the confidence interval, i.e.,
\begin{equation*}
    \mu_\a\in [LCB^u_{n(\a)}(\a),UCB^u_{n(\a)}(\a),]
\end{equation*}
where $n(\a)$ is the number of $\RWD$-events that were received for this action by the agent $u$.
\end{definition}

\begin{lemma}\label{lem:good-event-1}
    The good event $G^1$ happens with high probability.
    Specifically,
    \begin{equation*}
        \bbP(\lnot G^1) \leq \frac{1}{3\graphsize T^2 A^2} \leq \frac{1}{3 T^2}
    \end{equation*}
\end{lemma}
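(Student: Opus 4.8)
The plan is to prove that $G^1$ fails with probability at most $\frac{1}{3mT^2A^2}$ by a union bound over all the elementary concentration events, combined with a standard anytime (uniform-over-sample-size) Hoeffding argument. First I would fix a single agent $u$ and a single action $a$, and think of the rewards observed for $a$ as a sequence of i.i.d. samples $r_1, r_2, \dots$ drawn from $\Dist_a$, which has support $[0,1]$ and mean $\mu_a$. By \Cref{lem:get-info}, the empirical mean $\hat\mu$ that $u$ computes after receiving $n$ reward-events for $a$ is exactly the average of the first $n$ of these samples, so the event $\mu_a \in [LCB^u_{n}(a), UCB^u_{n}(a)]$ is equivalent to the deviation bound $|\hat\mu_n - \mu_a| \le \lambda = \sqrt{2\logterm/(n\vee 1)}$.

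The key step is to control, for fixed $u$ and $a$, the probability that this deviation bound fails for \emph{some} value of $n \in \{1, \dots, mT\}$ (the number of received reward-events can be at most $mT$). For each fixed $n$, Hoeffding's inequality for i.i.d. $[0,1]$-valued variables gives $\bbP(|\hat\mu_n - \mu_a| > \lambda) \le 2\exp(-2n\lambda^2) = 2\exp(-4\logterm) = 2(3mTA)^{-4}$, where I substitute $\logterm = \log(3mTA)$. Taking a union bound over the at most $mT$ possible values of $n$ yields a failure probability of at most $2mT \cdot (3mTA)^{-4}$ for this fixed $(u,a)$ pair.

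Finally I would union bound over all agents $u \in V$ (there are $m$ of them) and all actions $a$ (there are $A$ of them), giving a total failure probability of at most $2 m^2 T A \cdot (3mTA)^{-4}$. A routine check confirms that $2m^2TA \cdot (3mTA)^{-4} \le \frac{1}{3mT^2A^2}$, since $(3mTA)^4 = 81 m^4 T^4 A^4 \ge 6 m^3 T^3 A^3$ whenever $mTA \ge 1$, which establishes the claimed bound; the second inequality $\frac{1}{3mT^2A^2} \le \frac{1}{3T^2}$ is immediate because $mA \ge 1$. The main obstacle, and the only point requiring genuine care rather than mechanical bookkeeping, is justifying the anytime guarantee correctly: because $n$ (the number of reward-events received by $u$ for $a$) is itself a random quantity that depends on the algorithm's actions and on the communication delays, I must argue that the samples observed for $a$ are nonetheless an i.i.d. stream from $\Dist_a$ independent of the indices at which they arrive, so that a union bound over the deterministic range $\{1,\dots,mT\}$ of possible counts is legitimate; this is exactly the role the constant $4$ in the exponent (from $\logterm$ rather than $\frac{1}{2}\logterm$) plays in absorbing the $mT$ factor from the union over sample sizes.
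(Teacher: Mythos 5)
Your proposal is correct and follows essentially the same route as the paper: a per-count concentration bound with deviation $\sqrt{2\logterm/n}$, a union bound over the at most $\graphsize T$ possible values of the received-sample count $n$, and a further union bound over all agents and actions, with the arithmetic closing exactly as you check. The one point of difference is the concentration tool, and it is precisely the point you flag at the end but leave unresolved: you treat the rewards received by agent $u$ for action $a$ as an i.i.d.\ stream, which requires an optional-skipping-type argument because the arrival order (and which generated sample becomes the $k$'th received one) is itself determined by the algorithm's random choices and the message routing. The paper sidesteps this entirely by defining $M_a^u(k)$ as the $k$'th $\RWD$-event received and observing that the centered partial sums $X_n^u(a)=\sum_{k=1}^n\bigl(M_a^u(k)-\mu_a\bigr)$ form a martingale, then applying Azuma's inequality; since the increments lie in an interval of length $1$, Azuma--Hoeffding yields the same exponent $2\exp(-2n\lambda_n^2)$ you obtain from Hoeffding, so nothing is lost quantitatively. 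Also, a small correction to your closing remark: the factor $4$ in the exponent (coming from $\lambda_n^2 = 2\logterm/n$) is what absorbs the union over the $\graphsize T$ sample sizes and the $\graphsize A$ (agent, action) pairs; it plays no role in legitimizing the treatment of random $n$ --- that is handled by the union bound over all deterministic counts, which is valid regardless, once the per-$n$ bound is established via the martingale property. Rephrasing your fixed-$n$ step in the martingale language closes the gap and makes your proof complete.
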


\begin{proof}
    Event $\boldsymbol{\lnot G^1}$:
    Denote $M_\a^u(k)$ to be the $k$'th $\RWD$ event agent $u$ received for action $\a$. Define $X_n^u(\a) := \sum_{k=1}^{n}(M_\a^u(k)-\mu_\a)$ and $\lambda_n := \sqrt{\frac{2 \logterm}{n}}$. Note that $X_n^u(\a)$ is a martingale. From Azuma's inequality we get
    \[
        Pr\left(\left|\frac{X_n^u(\a)}{n}-\mu_\a\right| \geq \lambda_n \right) \leq \frac{1}{3 m^3 T^3 A^3}
    \]
    There are at most $\graphsize\cdot T$ $\RWD$ events the agent can get. The same holds for every action and for every message.
    The upper confidence bound ($UCB_n^u(\a)$) is defined as $X_n^u(\a) + \lambda_n$ and the lower confidence bound ($LCB_n^u(\a)$) is defined as $X_n^u(\a) - \lambda_n$.
    From the union bound we get that with high probability for every agent, for every timestep, for every action and for every $\RWD$ event message the agent get, the actual mean of the action would be inside the confidence bound. Specifically
    \begin{equation*}
        G^1 := \forall u\in V,\forall \a \in \actions, \forall n \in [\graphsize\cdot T] (\mu_\a \in [LCB_n^u(\a), UCB_n^u(\a)])
    \end{equation*}
    \begin{equation*}
        \bbP(\lnot G^1) \leq \frac{1}{3\graphsize T^2 A^2} \leq \frac{1}{3 T^2}
    \end{equation*}
\end{proof}

\subsection{Proof of \texorpdfstring{\Cref{thm:main}}{thm:main}}

\begin{lemma}\label{lem:complementary-good-event}
     The complementary event of the good event adds no more than $1$ to the regret of each agent.
     \begin{proof}
         From \Cref{lem:good-event-1}, the complementary event of the good event happens in probability lower than $\frac{1}{T^2}$.
         Every agent plays $T$ timesteps, and the gaps are bounded by 1, i.e., for every action $a$ we have $\Delta_a\leq 1$. Hence, in expectation, this adds at most  $\frac{1}{T}\leq 1$ to the regret.
     \end{proof}
\end{lemma}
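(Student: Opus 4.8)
The plan is to split the expected regret according to whether the good event $G$ holds, and to control the contribution of the complementary event $\lnot G$ by a crude ``probability times worst-case regret'' estimate. First I would invoke \Cref{lem:good-event}, which guarantees that $\bbP(\lnot G)\leq 1/T^2$ whenever all agents run \Cref{alg:coop-SE}. This is the only quantitative input the argument needs; the remaining steps are bookkeeping.

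Next I would bound the instantaneous regret of a single round. Since every reward lies in $[0,1]$ and therefore $\mu^\star\leq 1$, the per-round term $\mu^\star - r^v_t(a^v_t)$ is at most $1$ regardless of which action is selected and which reward is realized. Consequently, on every sample path the cumulative quantity $\sum_{t=1}^T(\mu^\star - r^v_t(a^v_t))$ is at most $T$.

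Finally I would decompose $\regret^v$ by conditioning on $G$ versus $\lnot G$ and isolate the second term. The contribution of the bad event is $\bbE[\sum_{t=1}^T(\mu^\star - r^v_t(a^v_t))\,\bbI\{\lnot G\}] \leq T\cdot\bbP(\lnot G) \leq T\cdot \tfrac{1}{T^2} = \tfrac{1}{T}\leq 1$, which is precisely the claimed additive term. The rest of the proof of \Cref{thm:main} may then proceed under the assumption that $G$ holds.

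There is essentially no hard step here: this is the standard ``a rare bad event contributes negligibly'' argument, and the only thing worth verifying is that the crude product $T\cdot\bbP(\lnot G)$ is already at most $1$, which it is by the $1/T^2$ bound. The substantive work---namely establishing that $\bbP(\lnot G)$ is as small as $1/T^2$---has already been carried out in \Cref{lem:good-event}, so the present lemma merely repackages that probability estimate as a regret statement.
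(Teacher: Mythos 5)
Your proposal is correct and follows essentially the same argument as the paper: invoke \Cref{lem:good-event} to get $\bbP(\lnot G)\leq 1/T^2$, bound the worst-case cumulative regret by $T$ (per-round regret at most $1$), and multiply to get a contribution of at most $1/T\leq 1$ from the bad event. The only cosmetic difference is that you bound the per-round term via $\mu^\star\leq 1$ and $r^v_t\geq 0$ while the paper phrases it via the gaps $\Delta_a\leq 1$; both are the same standard ``rare event times worst case'' argument.
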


In the proof from now on, we assume the good event holds.

\begin{proof}[\bf Proof of \Cref{lem:same-policy}]
\label{prf:same-policy}
Let $t\in\sameintervalagent$ and let $u\in\sameneighagent$.
Denote the set active actions of $v$ in the $j$'th stage as $\calA^v_j$.
We will show that an action $a$ is active for $u$ at $t$ iff $a\in\calA^v_j$.

Let $a$ be an active action of $u$ at time $t$, where $t\in\sameinterval^v$.
Since $u\in\sameneighagent$, we have $\distance(u,v)\leq\tau^v_j/4$.
The distance $\distance(u,v)$ is a natural number, so it is at most $\floor{\tau^v_j/4}$.
Therefore $u$ gets all $v$'s eliminations (the first $j-1$ eliminated actions) until the beginning of round $t^v_j+\floor{\tau^v_j/4}$.
By the stage's definition, the agent $v$ doesn't encounter any new elimination.
Therefore, along the stage, she doesn't send any new elimination event regarding her active actions.
Hence, for any $t'\geq t^v_j+\ceil{\tau^v_j/4}$, $u$ does not have any active action which is not in $\calA^v_j$.
Hence, $a\in\calA^v_j$.

Let $a$ be an action in $\calA^v_j$.
We will show that $a$ is an active action of $u$ at time $t $, where $t\in\sameintervalagent$.
Assume for contradiction that $u$, at timestep $t^v_j+\floor{\tau^v_j/2}$ or before, encounters an elimination of $a$.
The elimination event should arrive to $v$ in no more than $\floor{\tau^v_j/4}$ timesteps, so $v$ should get the elimination event at most at timestep $t^v_j+\floor{\tau^v_j/2}+\floor{\tau^v_j/4} \leq t^v_j+\frac{3\tau^v_j}{4}$.
But $\tau^v_j > 16 A_j > 16$, then $\tau^v_j/4 > 4$, so $t^v_j+\frac{3\tau^v_j}{4} < t^v_{j+1} - 4$. Therefore, the elimination event about an action in $\calA^v_j$ should arrive to $v$ at least 5 timesteps before stage $j+1$ begins.
This is a contradiction to the definition of stage: a stage ends when an active action in this stage is eliminated, and not before.
Therefore, $a$ is an active action of $u$ at $t$.

We get that for every $t\in\sameintervalagent$ and for every $u\in\sameneighagent$, the active actions of $u$ at $t$ are exactly $\calA^v_j$.
In other words, we get that in time interval $\sameintervalagent$ all agents in $\sameneighagent$ play the same policy, i.e., choosing randomly from $\calA^v_j$.
\end{proof}

\begin{lemma}
    \label{lem:n geq tauN}
    For every action $\a$ that was not eliminated before the end of stage $i$, we have
    \begin{align*}
        n_{t_{i+1}-1}(\a) \geq 
        \sum_{j=1,j\in\goodtau}^{i} \frac{\tau_j }{8A_j} \abs{\neigh_{\leq \tau_j / 4}}.
    \end{align*}
\end{lemma}

\begin{proof}
    In each stage $j\in \goodtau$, all the samples that each agent in $\sameneighclear$ produces reach agent $v$ before the end of the stage.
    Therefore, each agent contibutes at least $\floor{\tau_j/(4 A_j)}$ samples.
    Since $j\in\goodtau$, $\floor{\tau_j/(4 A_j)} \geq \tau_j/(8 A_j)$.
\end{proof}

\begin{lemma}
    \label{lem:expected obs leq delta-2}
    For every action $\a$ that was not eliminated before the end of stage $i$,
    \begin{align*}
        \sum_{j=1,j\in\goodtau}^{i}\frac{\tau_{j}}{A_{j}} \abs{\neigh_{\leq\tau_{j}/4}} 
        & \leq\frac{256 \logterm}{\Delta_{a}^{2}}.
    \end{align*}
\end{lemma}
\begin{proof}
    Fix an action $\a$ that was not eliminated before the end of stage $i$. Denote $t' = t_{i+1} - 1$.
    The action $\a$ is still active by agent $v$ at time $t'$, and thus,  $UCB_{t'}^v(\a) \geq LCB_{t'}^v(\a^\star)$.
    Note the slightly abuse of notation, when $UCB_{t'}^v(\a)$ is actually $UCB_{n_{t'}(a)}^v(\a)$, and the same for $LCB$.
    Under the good event $G^1$,
    \begin{align*}
        \mu_a+2\lambda^v_{t'}(a) \geq UCB^v_{t'}(\a) \geq LCB^v_{t'}(\a^\star) \geq \mu_{a^\star}-2\lambda^v_{t'}(\a^\star).
    \end{align*}
    Rearranging it  we get,
    \begin{align*}
        \Delta_a &\leq 2\sqrt{\frac{2\logterm}{n_{t'}(\a)}} +2\sqrt{\frac{2\logterm}{n_{t'}(\a^\star)}}.
    \end{align*}
    Recall that under the good event, $\a^{\star}$ is never eliminated. Thus, we can apply \Cref{lem:n geq tauN} on both $\a$ and $\a^\star$ and further bound $\Delta_\a$ by,
    \begin{align*}
        \Delta_{a} & \leq 4\sqrt{\frac{2\logterm}{\sum_{j=1,j\in\goodtau}^{i} \frac{\tau_{j}}{8A_{j}}\abs{\neigh_{\leq\tau_{j}/4}}}},
    \end{align*}
    then
    \begin{align*}
        \Delta_{a}^2 & \leq 16\frac{2\logterm}{\sum_{j=1,j\in\goodtau}^{i} \frac{\tau_{j}}{8A_{j}}\abs{\neigh_{\leq\tau_{j}/4}}},
    \end{align*}
    we get
    \begin{align*}
        \sum_{j=1,j\in\goodtau}^{i} \frac{\tau_{j}}{8A_{j}}\abs{\neigh_{\leq\tau_{j}/4}} \leq \frac{32\logterm}{\Delta_a^2},
    \end{align*}
    and,
        \begin{align*}
        \sum_{j=1,j\in\goodtau}^{i} \frac{\tau_{j}}{8A_{j}}\abs{\neigh_{\leq\tau_{j}/4}} 
        \leq \frac{32\logterm}{\Delta_a^2}
    \end{align*}
    By rearranging terms we get the Lemma's statement.
\end{proof}

\begin{lemma}\label{lem:min-neigh}
    For any $\tau\geq0$
    \begin{equation*}
        \min\{\tau,\graphsize\} \leq \abs{N_{\leq \tau}}
    \end{equation*}
    \begin{proof}
        The graph is connected, so either there exists an agent $u$ at distance $\floor{\tau}$ from $v$, in which case $N_{\leq \tau}\geq \ceil{\tau} \geq \tau$, or all the agents are at distance at most $\tau  $ from $v$, in which case $N_{\leq \tau}=\graphsize$.
    \end{proof}
\end{lemma}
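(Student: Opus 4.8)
The plan is to prove the inequality by a simple case analysis on whether the ball $N_{\leq\tau}$ already exhausts the vertex set. If $N_{\leq\tau} = \agents$, then $\abs{N_{\leq\tau}} = \graphsize \geq \min\{\tau,\graphsize\}$ and we are done immediately. So the only case requiring an argument is when some agent lies at distance strictly greater than $\tau$ from $v$.

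In that case I would exploit connectedness of $\Graphcomm$ directly. Pick any agent $w$ with $\distance(v,w) > \tau$ and fix a shortest path from $v$ to $w$, say $v = u_0, u_1, \dots, u_\ell = w$, where $\ell = \distance(v,w) > \tau$. The key structural fact is that every prefix of a shortest path is again a shortest path, so $\distance(v,u_k) = k$ for each $0 \le k \le \ell$. Consequently the vertices $u_0, u_1, \dots, u_{\floor{\tau}}$ are pairwise distinct (their distances from $v$ are the distinct integers $0,1,\dots,\floor{\tau}$) and each lies at distance at most $\tau$ from $v$, hence all belong to $N_{\leq\tau}$. This exhibits $\floor{\tau}+1$ distinct members of $N_{\leq\tau}$, and since $\floor{\tau}+1 > \tau \geq \min\{\tau,\graphsize\}$ the claim follows in this case as well.

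The argument is elementary and I expect no genuine obstacle; the only points requiring a little care are (i) ensuring the intermediate vertices are genuinely distinct, which is guaranteed precisely because their distances from $v$ are distinct integers, and (ii) the floor/ceiling bookkeeping, namely that the existence of an agent beyond distance $\tau$ forces a full chain of vertices at every distance up to $\floor{\tau}$, yielding $\floor{\tau}+1 \geq \tau$ vertices inside the ball. I note that the paper's own phrasing organizes the dichotomy slightly differently (around the existence of an agent at distance $\floor{\tau}$ versus the ball covering everything), but the underlying mechanism is identical: connectedness guarantees a geodesic chain realizing every integer distance up to the radius.
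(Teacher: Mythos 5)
Your proof is correct and takes essentially the same route as the paper's: a dichotomy between the ball $N_{\leq\tau}$ exhausting all of $\agents$ (giving $\graphsize$) and the existence of an agent beyond distance $\tau$, where connectedness supplies a geodesic whose prefix vertices realize every integer distance $0,1,\dots,\floor{\tau}$ and hence give at least $\floor{\tau}+1 \geq \tau$ members of the ball. Your write-up merely makes explicit the distinctness of the path vertices, which the paper's terse argument leaves implicit.
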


\begin{lemma}
    \label{lem:sum 1 over Aj}
    $\sum_{j=1}^A \frac{1}{A_j} \leq \log A + 1$
\end{lemma}
\begin{proof}
    \begin{align*}
     \sum_{j=1}^{A} \frac{1}{A_{j}} & = \sum_{j=1}^{A} \frac{1}{A-j+1}\\
                                    & = \sum_{i=1}^{A} \frac{1}{i}\\
                                    & = 1 + \sum_{i=2}^{A} \frac{1}{i}\\
                                    & \leq 1 + \intop_{1}^{A} \frac{1}{x}dx\\
                                    & = 1 + \log A                         
\end{align*}
\end{proof}

\begin{lemma}\label{lem:regret-by-tau}
    The regret of agent $v$ (under the good event) of action $a_i$ from stages $j\in\goodtau$ is bounded by
    \begin{equation}
        \Delta_i \sum_{j=1}^{i} \frac{2\tau_j}{A_j}
    \end{equation}
\end{lemma}

\begin{proof}
    In each round that action $a_i$ is active the agent plays it at most $\ceil{\tau_j/A_j}$ times.
    Since we count here only the regret from the "good stages", i.e., $j\in\goodtau$, $\ceil{\tau_j/A_j} \leq 2\tau_j/A_j$.
\end{proof}

\begin{lemma}\label{lem: regret of neglectable stages}
    The regret for all actions from the stages $j\notin \goodtau$ is at most $16A^2$.
\end{lemma}

\begin{proof}
    There are at most $A$ such stages.
    Each stage is at most of $16A$ length.
    The gaps are bound by $1$, and the result follows.
\end{proof}

\begin{lemma}
    \label{lem:true-tau-hold-constrains}
    For every action elimination index $i$, it holds that
    
    \begin{align*}
        \sum_{j=1,j\in\shorttau}^{i} \frac{\tau_{j}^{2}}{4A_{j}} + \sum_{j=1,j\in\goodtau\setminus\shorttau}^{i} \frac{\tau_{j}}{A_{j}} m 
        \leq \frac{256 \logterm}{\Delta_{i}^{2}}
    \end{align*}
    where $\shorttau:=\{j|j\in\goodtau \And \tau_j/4<\graphsize\}$, and $\{\tau_j|j\in [A]\}$ are the stage lengths.
\end{lemma}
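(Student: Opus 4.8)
The plan is to derive this constraint directly from \Cref{lem:expected obs leq delta-2} by lower-bounding each neighborhood size and then splitting the sum according to the definition of $\shorttau$. First I would fix an elimination index $i \in \largedelta$ and recall that the associated action $a_i$ is, by definition, the action eliminated at the \emph{end} of stage $i$; in particular it was active (not yet eliminated) throughout all stages $1, \dots, i$. This is precisely the hypothesis required to invoke \Cref{lem:expected obs leq delta-2} with $\a = a_i$, which yields
\[
\sum_{j=1,\,j\in\goodtau}^{i}\frac{\tau_{j}}{A_{j}}\,\abs{\neigh_{\leq\tau_{j}/4}} \leq \frac{544\logterm}{\Delta_i^2}.
\]
The restriction $i \in \largedelta$ guarantees $\Delta_i > 0$, so the right-hand side is well defined.

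Next I would replace the purely geometric quantity $\abs{\neigh_{\leq\tau_j/4}}$ by a lower bound depending only on $\tau_j$ and $m$. By \Cref{lem:min-neigh} applied with $\tau = \tau_j/4$ we have $\abs{\neigh_{\leq\tau_j/4}} \geq \min\{\tau_j/4,\, m\}$, and since every summand is nonnegative this substitution can be carried out term by term to obtain
\[
\sum_{j=1,\,j\in\goodtau}^{i}\frac{\tau_{j}}{A_{j}}\,\min\Bigl\{\tfrac{\tau_j}{4},\, m\Bigr\} \leq \frac{544\logterm}{\Delta_i^2}.
\]

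Finally I would split the (restricted) index set $\{j \le i : j \in \goodtau\}$ into its two disjoint pieces $\shorttau$ and $\goodtau\setminus\shorttau$ and evaluate the minimum on each. By definition $j\in\shorttau$ means $\tau_j/4 < m$, so $\min\{\tau_j/4,m\} = \tau_j/4$ and the corresponding summand becomes $\tau_j^2/(4A_j)$; for $j\in\goodtau\setminus\shorttau$ we have $\tau_j/4 \ge m$, so $\min\{\tau_j/4,m\} = m$ and the summand becomes $m\,\tau_j/A_j$. Substituting both into the previous display gives exactly the claimed inequality. I do not anticipate a genuine obstacle: this lemma is essentially a repackaging of \Cref{lem:expected obs leq delta-2} and \Cref{lem:min-neigh}, and the only points needing care are confirming that $a_i$ satisfies the activeness hypothesis of \Cref{lem:expected obs leq delta-2} and that the case split over the $\min$ matches the definition of $\shorttau$ verbatim.
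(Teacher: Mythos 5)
Your proposal is correct and follows essentially the same route as the paper's own proof: invoke \Cref{lem:expected obs leq delta-2} for $a_i$ (valid since $a_i$ is active through the end of stage $i$), lower-bound $\abs{\neigh_{\leq\tau_j/4}}$ by $\min\{\tau_j/4, m\}$ via \Cref{lem:min-neigh}, and split the sum over $\shorttau$ versus $\goodtau\setminus\shorttau$. The only difference is presentational — you spell out the activeness hypothesis and the case analysis on the minimum more explicitly than the paper does.
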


\begin{proof}
    From \Cref{lem:expected obs leq delta-2},
    \begin{align*}
        \sum_{j=1,j\in\goodtau}^{i}\frac{\tau_{j}}{A_{j}} \abs{\neigh_{\leq\tau_{j}/4}} 
        & \leq\frac{256 \logterm}{\Delta_{a_i}^{2}}.
    \end{align*}
    On the other hand, using \Cref{lem:min-neigh}
    \begin{align*}
        \sum_{j=1,j\in\goodtau}^{i} \frac{\tau_{j}}{A_{j}} \abs{\neigh_{\leq\tau_{j}/4}} 
            & \geq\sum_{j=1,j\in\goodtau}^{i} \frac{\tau_{j}}{A_{j}}\min\{m,\tau_{j}/4\}
            \\
            & =\sum_{j=1,j\in\shorttau}^{i} \frac{\tau_{j}^{2}}{4A_{j}} + \sum_{j=1,j\in\goodtau\setminus\shorttau}^{i} \frac{\tau_{j}}{A_{j}} m
    \end{align*}
\end{proof}

        

\begin{lemma}
    The regret for action $a_i$ from stages $j\in\goodtau$ is at most
    \begin{equation}
        \frac{512 \logterm}{m\Delta_{i}} + 64\sqrt{\logterm} \sqrt{\sum_{j=1}^{i} \frac{1}{A_{j}}}.
        \label{eq:regret per action}
    \end{equation}
\end{lemma}

\begin{proof}
\label{prf:split analysis}
    We'll break the the regret per action into "short stages" and "long stages", where both are "good stages".
    Specifically, we define $\shorttau=\{j: j \in \goodtau \And \tau_j/4<m\}$ and break the regret per action into two:
    \begin{align}
        2(\sum_{j=1,j\in\shorttau}^{i} \frac{\tau_{j}}{A_{j}} \Delta_{i}
        + \sum_{j=1,j\in\goodtau\setminus\shorttau}^{i} \frac{\tau_{j}}{A_{j}} \Delta_{i}).
        \label{eq: one action two sides}
    \end{align}
    For the first term above, using  \Cref{lem:true-tau-hold-constrains}
    \begin{align}
    \label{eq:bound by delta}
    \nonumber
    	\sum_{j=1,j\in\goodtau\setminus\shorttau}^{i}
            \frac{\tau_{j}}{A_{j}}\Delta_{i} & = \frac{\Delta_{i}}{m} \sum_{j=1,\tau_{j} \in \goodtau\setminus\shorttau}^{i} \frac{\tau_{j}}{A_{j}} m \\
                                             & \leq \frac{256 \logterm}{m\Delta_{i}}.
    \end{align}
    
    For the second term, using Cauchy–Schwarz inequality
    \begin{align*}
        \sum_{j=1,j\in\shorttau}^{i}
        \frac{\tau_{j}}{A_{j}}\Delta_{i} & \leq \Delta_{i}\sqrt{\sum_{j=1,j\in\shorttau}^{i} \frac{\tau_{j}^{2}}{A_{j}}} \sqrt{\sum_{j=1,j\in\shorttau}^{i} \frac{1}{A_{j}}}\\
                                         & \leq \Delta_{i} \sqrt{\frac{4\cdot 256 \logterm}{\Delta_{i}^{2}} } \sqrt{\sum_{j=1}^{i} \frac{1}{A_{j}}}\\
                                         & \leq \sqrt{1024\logterm} \sqrt{\sum_{j=1}^{i} \frac{1}{A_{j}}}\\
                                         & = 32 \sqrt{\logterm} \sqrt{\sum_{j=1}^{i} \frac{1}{A_{j}}}.
    \end{align*}
    where the second inequality is from \Cref{lem:true-tau-hold-constrains}.
\end{proof}

\begin{lemma}
    \label{lem:sum 1 over sqrt Aj}
    $\sum_{i=1}^{A}\sqrt{\sum_{j=1}^{i}
        \frac{1}{A_{j}}} \leq A$
\begin{proof}
    Using Cauchy–Schwarz inequality
    \begin{align*}
        \sum_{i=1}^{A}\sqrt{\sum_{j=1}^{i}
        \frac{1}{A_{j}}} & \leq \sqrt{A} \sqrt{\sum_{i=1}^{A} \sum_{j=1}^{i}\frac{1}{A_{j}}}\\
                                & = \sqrt{A} \sqrt{\sum_{j=1}^{A} \sum_{i=j}^{A} \frac{1}{A_{j}}}\\
                                & =\sqrt{A} \sqrt{\sum_{j=1}^{A} \frac{A-j+1}{A_{j}}}\\
                                & = \sqrt{A} \sqrt{\sum_{j=1}^{A} 1}\\
                                & = A.
    \end{align*}
\end{proof}
\end{lemma}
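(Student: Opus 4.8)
The plan is to bound the sum of square roots by a single square root via Cauchy--Schwarz, and then exploit the explicit structure $A_j = A-j+1$ to evaluate the resulting double sum exactly. First I would write each summand as $1\cdot\sqrt{\sum_{j=1}^i 1/A_j}$ and apply Cauchy--Schwarz against the all-ones vector of length $A$, which gives
\[
\sum_{i=1}^{A}\sqrt{\sum_{j=1}^{i}\frac{1}{A_{j}}}
\;\leq\;
\sqrt{A}\,\sqrt{\sum_{i=1}^{A}\sum_{j=1}^{i}\frac{1}{A_{j}}}.
\]
This reduces the problem to controlling the unnested double sum on the right-hand side, which no longer involves any square roots.

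The key step is then to interchange the order of summation. For a fixed $j$, the term $1/A_j$ is counted once for every $i$ with $j\le i\le A$, that is, exactly $A-j+1 = A_j$ times. Hence the double sum collapses:
\[
\sum_{i=1}^{A}\sum_{j=1}^{i}\frac{1}{A_{j}}
=\sum_{j=1}^{A}\frac{A-j+1}{A_{j}}
=\sum_{j=1}^{A}\frac{A_{j}}{A_{j}}
=\sum_{j=1}^{A}1
=A.
\]
Substituting this back into the Cauchy--Schwarz bound yields $\sqrt{A}\cdot\sqrt{A}=A$, which is exactly the claimed inequality.

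Since this is essentially a two-line computation, there is no deep obstacle here; the only thing to get right is the swap of summation order together with the observation that the multiplicity of $1/A_j$ is precisely $A_j$, so that each inner term contributes exactly $1$. It is worth emphasizing that this exact cancellation relies entirely on the definition $A_j = A-j+1$ (so that the number of stages in which action count equals $A_j$ matches $A_j$ itself), and not on any concentration or probabilistic reasoning. This perfect collapse is also what makes the resulting $O(A)$ additive term essentially unavoidable within this analysis.
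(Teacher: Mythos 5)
Your proof is correct and follows exactly the same route as the paper's: Cauchy--Schwarz against the all-ones vector, followed by swapping the order of summation and using that the multiplicity of $1/A_j$ is $A-j+1 = A_j$, so the double sum collapses to $A$. There is nothing to add; your argument matches the paper's line for line.
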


\begin{theorem}\label{thm:instance-dependant-regret}
    When all the agents play \coopse, i.e., \Cref{alg:coop-SE}, the individual regret of each agent $v\in V$ is
    \begin{align*}
        \regret \leq
        (\sum_{i\in A}\frac{512 \logtermval}{m\Delta_{i}}) + 64\cdot A \sqrt{\logtermval} + 16 A^2
         +1.
    \end{align*}
\end{theorem}

\begin{proof}
    From \Cref{lem:complementary-good-event}, the complementary event of the good event adds no more than $1$ to the regret.
    Let's assume that the good event hold.

    From \Cref{lem: regret of neglectable stages}, the regret of all stages $j\notin \goodtau$ is at most $16A^2$.
    Using \Cref{lem:sum 1 over sqrt Aj}, summing over all actions we get the second term in \Cref{eq:regret per action} is bounded by $64 A \sqrt{\logterm}$. Combining this with the other terms in \Cref{eq:regret per action} yields the part of the bound corresponding to the good event.
    We get,
    \begin{align*}
        \regret \leq
        (\sum_{i\in A}\frac{512 \logtermval}{m\Delta_{i}}) + 64\cdot A \sqrt{\logtermval} + 16 A^2
         +1.
    \end{align*}
\end{proof}

\begin{theorem}
    \label{thm: main with D}
    When each agent plays $\coopse$ the regret of each agent is also bounded by
    \begin{align*}
        \regret \leq
        (\sum_{i\in A}\frac{512 \logtermval}{m\Delta_{i}}) + 16 AD + 1.
    \end{align*}
\end{theorem}

\begin{proof}
    When we take into account only stages that are longer than $16 D$ the other stages adds no more than $16 A D$.
    The analysis of the regret that stems from the stages $\{j\mid \tau_j \geq 16 D \}$ can be simplified, compared to when the stages are $\{j\mid \tau_j \geq 16 A_j \}$.
    In each such stage $j$ in which $\tau_j \geq 16 D$, $\sameneighclear$ is the entire graph.
    So \Cref{lem:expected obs leq delta-2} becomes 
    \begin{align*}
         \sum_{j=1,\tau_j \geq 16 D}^{i}\frac{\tau_{j}}{A_{j}} m = \sum_{j=1,\tau_j \geq 16 D}^{i}\frac{\tau_{j}}{A_{j}} \abs{\neigh_{\leq\tau_{j}/4}} 
        & \leq\frac{256 \logterm}{\Delta_{a}^{2}}.
    \end{align*}
     And the regret in \Cref{eq: one action two sides} becomes only the first part. I.e., \Cref{eq:bound by delta} is the term that left. The term that was solved with Cauchy-Schwartz does not appear when the neighborhood is the entire graph.
\end{proof}

\begin{proof}[\bf Proof of \Cref{thm:main}]
 The proof follows immediately from the two regret bounds, \Cref{thm:instance-dependant-regret} and \Cref{thm: main with D}.  
\end{proof}

\section{Proofs for Random Choices in \texorpdfstring{$\coopse$}{coopse}}
\label{sec:_apx:proof-main}

\begin{theorem}
    \label{thm:_main}
    When all the agents play \coopserandom, i.e., \Cref{alg:coop-SE} with random choices, the individual regret of each agent $v\in V$ is, 
    \begin{align*}
        \regret^v = O \bigg(\sqrt{\frac{TA\log(mTA)}{m}} +  A \log(mTA)\bigg).
    \end{align*}
\end{theorem}

A problem-specific flavor of an individual regret bound can also be established:

\begin{theorem}\label{thm:_instance-dependant-regret}
    When all the agents play \coopserandom, i.e., \Cref{alg:coop-SE} with random choices, the individual regret of each agent $v\in V$ is
    \begin{align*}
        \regret^v
        =
        O\bigg(\sum_{\Delta_a > 0}\frac{\log(mTA)}{\graphsize \Delta_a}
        +  A\log(mTA) \bigg)
        .
    \end{align*}
\end{theorem}

\subsection{Definitions}

\begin{definition}\label{def:_large-delta}
    Denote with $\largedelta$ the set of elimination indices of large gaps.
    $\largedelta = \{i| \Delta_{a_i} \geq \sqrt{\frac{A \logterm}{T\graphsize}}\}$.
\end{definition}

\begin{definition}
Define the set of "Good Intervals" to be the set of long enough intervals:
$\goodtau = \{j | \tau_j > 16 \}$.
These are the intervals we will focus in the proofs.
\end{definition}

\begin{definition}
    Denote the group of indices of short stages with $\shorttau$.
    Specifically,
    \begin{equation*}
        \shorttau:=\{j|j\in\goodtau \And \tau_j/4<\graphsize\}
    \end{equation*}
\end{definition}

\begin{definition}
    Denote the number of samples an agent $u$ sees for action $a$ until the beginning of timestep $t$ with $n_t^u(a)$.
    For the ease of notation, denote $n_t(a) := n_t^v(a)$.
\end{definition}

\begin{definition}
    Denote by $\indcount_{t}^{u}(\a)$ the number of times agent $u$ played action $\a$ until the beginning of round $t$.
\end{definition}

\subsection{The Good Event}
\label{sec:_good-event}

The first good event $G^1$ captures is the same as the one that was defined earlier.

\begin{lemma}\label{lem:_get-info}
    Let $w$ be an agent and let $X^w_t(a) := \bbI(a^w_t = a)$ be the indicator that $w$ plays action $a$ at timestep $t$. Then for any agent $u$, timestep $t$, and action $a$,
    \begin{equation*}
        n^{u}_{t}(a) = \sum_{k=1}^{t-1}\sum_{w\in N^u_{\leq t-k}}X^w_k(a)
    \end{equation*}
\end{lemma}
\begin{proof}
    Let $w$ be an agent such that $w\neq u$ and $\distance(w,u)=d$.
    Every $X^w_k(a)$ reaches $u$ at the end of round $k+d-1$.
    Therefore, it contributes to $n^u_{t'}(a)$ at timestep $t'=k+d$.
    We get that for $w\neq u$, $w\in N^u_{\leq t-k}$, $X^w_k(a)$ reaches $u$ until the beginning of timestep $t$.
    
    Now, let $w=u$ and $k<t$.
    An agent $u$ uses the information she creates only at the next timestep.
    Since we do not sum the information for the current timestep $t$, i.e., $t-k\geq 1$, the information $u$ creates is summed only for timesteps that passed.
    In other words, for $w=u$, $X^w_k(a)$ is summed only at timesteps $t' < t$, for them the information reaches $u$ until the beginning of $t$.
    Therefore, we get that for all $k<t$, $w\in N^u_{\leq t-k}$, $X^w_k(a)$ reaches $u$ until the beginning of timestep $t$.
    Summing over all the timesteps at which information on action $a$ can be produced and we obtain the result.
\end{proof}

The second good event $G^2$ requires that the number of observations of an action is not much less than the expectation of the number of observations.
\begin{definition}
    Define the good event $G^2$ to be the event in which for all $u \in \agents$, action $\a$ and timestep $t \in T$ simultaneously, 
    \begin{align*}
        n_{t}^{u}(a)
        \geq \frac{1}{2} \sum_{k=1}^{t-1} \sum_{w \in \neigh^u_{\leq t-k}} p_{k}^{w}(a) - 2\logterm.
    \end{align*}
\end{definition}
The third good event $G^3$ requires that the number of plays of an action is not much more than the expectation of the number of plays.
\begin{definition}
    Define the good event $G^3$ to be the event in which     
    for all $u \in \agents$, action $a$ and timestep $t \in T$ simultaneously,
    \begin{align*}
        \indcount_{t}^{u}(a)
        \leq 2 \sum_{k=1}^{t-1} p_{k}^{u}(a) + 12\logterm.
    \end{align*}
\end{definition}

\begin{definition}\label{def:_good-event}
    The good event is the event in which all the previous sub-good-events happen. I.e.,
    \[
    G := G^1 \cup G^2 \cup G^3
    \]
\end{definition}

The following lemma show that with high probability all the good events hold.
\begin{lemma}\label{lem:_good-event}
    When all agents play \Cref{alg:coop-SE} with random choices the good event, $G:= G^1 \cup G^2 \cup G^3$, happens with probability of at least $1-\frac{1}{T^2}$.
\end{lemma}

\begin{proof}
    We will show that each of the events $\lnot G^1, \lnot G^2$ and $\lnot G^3$ happens with probability of at most $\frac{1}{3 T^2}$. Thus, by the union bound, $G$ occur with probability of at least $1-\frac{1}{T^2}$.


\paragraph{Event $\boldsymbol{\lnot G^2}$:}
    Fix an action $a$ and agent $u$.
    Let $X_{k,w} = \bbI\{a_k^w = a\}$ and $\calF_{t,w}$ be the sigma algebra induced by the first $t-1$ rounds; and the actions chosen by the first $w-1$ agents in round $t$ (where we assume a linear order on the agents - for example the alphabetic order induced by their IDs).
    Notice that $\calF_{t,1}$ is induced simply by the first $t-1$ rounds.
    Note that $p_k^w(\a)$ is $\calF_{k,w}$-measurable, $\bbE[X_{k,w} \mid \calF_{k,w}] = p_k^w(\a)$ and that $X_{k,w}$ is $\calF_{k,w+1}$-measurable (or if $w$ is the last agent, $X_{k,w}$ is $\calF_{k+1,1}$-measurable). By applying \Cref{lem:dann}, with probability $1 - \frac{1}{9 T^2 A^2 m^2}$ for all $t\in [T]$ simultaneously we have,
    \begin{align*}
        n_{t}^{u}(a)
        = \sum_{k=1}^{t-1}\sum_{w\in\neigh^u_{\leq t-k}} X_{k}^{w}(a)
        \geq \frac{1}{2}\sum_{k=1}^{t-1}\sum_{w\in\neigh^u_{\leq t-k}} p_{k}^{w}(a) - 2 \logterm,
    \end{align*}
    where the equality is from \Cref{lem:_get-info}.
    By taking the union bound over all actions, $a$, and agents $u$ we get that $\bbP(\lnot G^2) \leq 1/(9\graphsize A T^2) \leq 1/(3 T^2)$.

\paragraph{Event $\boldsymbol{\lnot G^3}$:}
    Fix an action $a$, agent $u$ and timestep $t$.
    Let $X_{k} = \bbI\{a_k^u = a\}$ and $\calF_{t}$ be the sigma algebra induced by the first $t-1$ rounds. Note that $p_k^w(\a)$ is $\calF_{k}$-measurable, $\bbE[X_{k} \mid \calF_{k}] = p_k^u(\a)$ and that $X_{k}$ is $\calF_{k+1}$-measurable. By applying \Cref{lem:cons-freedman}, with probability $1 - \frac{1}{27 T^3 A^3 m^3}$,
    \begin{align*}
        \indcount_{t}^{u}(a) = \sum_{k=1}^{t-1} X_{k}(a)
        \leq 2\sum_{k=1}^{t-1} p_{k}^{u}(a) + 12 \logterm.
    \end{align*}
    By taking the union bound over all time steps $t$, actions $a$, and agents $u$ we have $\bbP(\lnot G^3) \leq \frac{1}{27T^2A^2m^2}\leq \frac{1}{3 T^2}$.
    
    Taking the union bound over $\lnot G^1\cup\lnot G^2\cup\lnot G^3$, and from \Cref{lem:good-event-1}, we complete the proof.
\end{proof}

\begin{lemma}\label{lem:_complementary-good-event}
     The complementary event of the good event adds no more than $1$ to the regret of each agent.
\end{lemma}
\begin{proof}
         From \Cref{lem:good-event-1}, the complementary event of the good event happens in probability lower than $\frac{1}{T^2}$.
         Every agent plays $T$ timesteps, and the gaps are bounded by 1, i.e., for every action $a$ we have $\Delta_a\leq 1$. Hence, in expectation, this adds at most  $\frac{1}{T}\leq 1$ to the regret.
     \end{proof}

\subsection{Proof of \texorpdfstring{\Cref{thm:_main}}{thm:main random}}

In the proof from now on, we assume the good event $G := G^1 \cup G^2 \cup G^3$ holds.

\begin{remark}
    Note that in the proof of \Cref{lem:same-policy} we used $\tau^v_j > 16$ and not $\tau^v_j > 16 A_j$.
    Hence, the results follows immediately here as well.
    We will use the same lemma, \Cref{lem:same-policy}, here as well.
\end{remark}

\begin{lemma}
    \label{lem:_n geq tauN}
    For every action $\a$ that was not eliminated before the end of stage $i$, we have
    \begin{align*}
        n_{t_{i+1}-1}(\a) \geq 
        \sum_{j=1,j\in\goodtau}^{i} \frac{\tau_j }{16A_j} \abs{\neigh_{\leq \tau_j / 4}} - 2\logterm.
    \end{align*}
\end{lemma}

\begin{proof}
    Under the good event $G^2$,
    \begin{align*}
        n_{t_{i+1}-1}(a) & \geq\frac{1}{2}\sum_{t=1}^{t_{i+1}-2}
        \sum_{u\in\neigh_{\leq t_{i+1}-t -2}}p_{t}^{u}(a)-2 \logterm
            \\
             & \geq \frac{1}{2}\sum_{j=1}^{i} \sum_{t=t_{j}}^{t_{j}+\tau_{j}-2}\sum_{u\in\neigh_{\leq t_{i+1}-t-1}}p_{t}^{u}(a)-2 \logterm\\
             & \geq\frac{1}{2}\sum_{j=1, j\in\goodtau}^{i} \sum_{t=t_{j} + \lceil\tau_{j}/4\rceil}^{t_{j}+\lfloor\tau_{j}/2\rfloor} \sum_{u\in\neigh_{\leq t_{i+1}-t-2}}p_{t}^{u}(a) - 2 \logterm
             \\
             & \geq\frac{1}{2}\sum_{j=1, j\in\goodtau}^{i} \sum_{t=t_{j} + \lceil\tau_{j}/4\rceil}^{t_{j} + \lfloor\tau_{j}/2\rfloor} \sum_{u\in\neigh_{\leq\tau_{j}/4}}p_{t}^{u}(a) - 2 \logterm.
    \end{align*}
    The second inequality is by splitting the rounds to stages and summing partially.
    The third inequality is by summing partially over $j\in\goodtau$ ($\floor{\tau_j/2} \leq \tau_j/2 - 1 \leq \tau_j -2$).
    The last inequality is since $ \neigh_{\leq\tau_{j}/4} \subseteq \neigh_{\leq t_{i+1}-t-2}$ as for all $ j \in [i]\cap\goodtau$ and $t\leq t_j+\floor{\tau_j/2}$, 
    $$ t_{i+1}-t - 2\geq t_{j+1} - t_{j} - \floor{\tau_{j}/2} - 2 \geq \tau_{j} - \tau_{j}/2 -3  = \tau_{j}/2 -3 \geq \tau_{j}/4.$$
    Finally, by \Cref{lem:same-policy},  all agents $u\in \sameneigh$ play the same policy at time steps $t\in\sameinterval$ which is uniform over the active actions. I.e., $p_t^u(a) = \frac{1}{A_j}$ for active actions in $\sameinterval$.
    The interval $\sameinterval$ is of size at least $\tau_j/8$, since $t_j+\ceil{\tau/4} - t_j+\floor{\tau/2} \geq \frac{\tau_j}{2} - \frac{\tau_j}{4} - 2 = \frac{\tau_j}{4} - 2 \geq \frac{\tau_j}{8}$, when the last inequality follows from the that for every $j\in\goodtau, \tau_j > 16$.
    Thus, 
    \begin{align*}
        \frac{1}{2}\sum_{j=1, j\in\goodtau}^{i} \sum_{t=t_{j} + \lceil\tau_{j}/4\rceil}^{t_{j} + \lfloor\tau_{j}/2\rfloor} \sum_{u\in\neigh_{\leq\tau_{j}/4}}p_{t}^{u}(a)
        \geq
        \sum_{j=1,j\in\goodtau}^{i} \frac{\tau_j }{16A_j} \abs{\neigh_{\leq \tau_j / 4}},
    \end{align*}
    as desired.
\end{proof}

\begin{lemma}
    \label{lem:_expected obs leq delta-2}
    For every action $\a$ that was not eliminated before the end of stage $i$,
    \begin{align*}
        \sum_{j=1,j\in\goodtau}^{i}\frac{\tau_{j}}{A_{j}} \abs{\neigh_{\leq\tau_{j}/4}} 
        & \leq\frac{544 \logterm}{\Delta_{a}^{2}}.
    \end{align*}
\end{lemma}
\begin{proof}
    Fix an action $\a$ that was not eliminated before the end of stage $i$. Denote $t' = t_{i+1} - 1$.
    The action $\a$ is still active by agent $v$ at time $t'$, and thus,  $UCB_{t'}^v(\a) \geq LCB_{t'}^v(\a^\star)$.
    Note the slightly abuse of notation, when $UCB_{t'}^v(\a)$ is actually $UCB_{n_{t'}(a)}^v(\a)$, and the same for $LCB$.
    Under the good event $G^1$,
    \begin{align*}
        \mu_a+2\lambda^v_{t'}(a) \geq UCB^v_{t'}(\a) \geq LCB^v_{t'}(\a^\star) \geq \mu_{a^\star}-2\lambda^v_{t'}(\a^\star).
    \end{align*}
    Rearranging it  we get,
    \begin{align*}
        \Delta_a &\leq 2\sqrt{\frac{2\logterm}{n_{t'}(\a)}} +2\sqrt{\frac{2\logterm}{n_{t'}(\a^\star)}}.
    \end{align*}
    Recall that under the good event, $\a^{\star}$ is never eliminated. Thus, we can apply \Cref{lem:_n geq tauN} on both $\a$ and $\a^\star$ and further bound $\Delta_\a$ by,
    \begin{align*}
        \Delta_{a} & \leq 4\sqrt{\frac{2\logterm}{\sum_{j=1,j\in\goodtau}^{i} \frac{\tau_{j}}{16A_{j}}\abs{\neigh_{\leq\tau_{j}/4}} - 2 \logterm}},
    \end{align*}
    then
    \begin{align*}
        \Delta_{a}^2 & \leq 16\frac{2\logterm}{\sum_{j=1,j\in\goodtau}^{i} \frac{\tau_{j}}{16A_{j}}\abs{\neigh_{\leq\tau_{j}/4}} - 2 \logterm},
    \end{align*}
    we get
    \begin{align*}
        \sum_{j=1,j\in\goodtau}^{i} \frac{\tau_{j}}{16A_{j}}\abs{\neigh_{\leq\tau_{j}/4}} - 2 \logterm
        \leq \frac{32\logterm}{\Delta_a^2},
    \end{align*}
    and,
        \begin{align*}
        \sum_{j=1,j\in\goodtau}^{i} \frac{\tau_{j}}{16A_{j}}\abs{\neigh_{\leq\tau_{j}/4}} 
        \leq \frac{32\logterm}{\Delta_a^2}+2\logterm
        \leq \frac{34\logterm}{\Delta_a^2}.
    \end{align*}
    By rearranging terms we get the Lemma's statement.
\end{proof}

\begin{lemma}\label{lem:_regret-by-tau}
    When all agents plays $\coopse$ with random choices, the regret of agent $v$ (under the good event) is bounded by
    
    \begin{equation}
        \regret \leq 2 \sum_{i\in\largedelta} \sum_{j=1,j\in\goodtau}^{i} \frac{\tau_{j}}{A_{j}} \Delta_{i}
        +\sqrt{\frac{TA\logterm}{\graphsize}}
         + 44 A \logterm
         \label{eq:_regret-by-tau}
    \end{equation}
    \begin{proof}
        Under the good event,
        \begin{align*}
            \indcount_{t_{i+1}}(a_{i}) 
            & \leq 2\sum_{t=1}^{t_{i+1}-1}p_{k}^{\id}(a_{i}) + 12 \logterm\\
             & = 2\sum_{j=1}^{i} \sum_{t=t_{j}}^{t_{j} + \tau_{j}-1}p_{k}^{\id}(a_{i}) + 12 \logterm\\
             & = 2\sum_{j=1}^{i} \frac{\tau_{j}}{A_{j}} + 12 \logterm
        \end{align*}
        Now the regret can be bounded by,
        \begin{align}
        \label{eq:_split-by-delta}
        \nonumber
        \regret & =\sum_{i\in[A]} \indcount_{t_{i+1}}(a_{i})\Delta_{i}\\
        \nonumber
        & \leq 2 \sum_{i\in[A]} \sum_{j=1}^{i}\frac{\tau_{j}}{A_{j}}\Delta_{i} + 12 A \logterm\\
                & \leq 2 \sum_{i\in\largedelta} \sum_{j=1}^{i}\frac{\tau_{j}}{A_{j}}\Delta_{i} + \sum_{i\notin\largedelta} \indcount_{t_{i+1}}(a_{i})\sqrt{\frac{A\logterm}{Tm}} + 12 A \logterm\\
                \nonumber
                & \leq 2 \sum_{i\in\largedelta} \sum_{j=1}^{i}\frac{\tau_{j}}{A_{j}}\Delta_{i}+T\sqrt{\frac{A\logterm}{Tm}} + 12 A \logterm\\
                \nonumber
                & \leq 2 \sum_{i\in\largedelta} \sum_{j=1,j\in\goodtau}^{i} \frac{\tau_{j}}{A_{j}}\Delta_{i} + \sum_{i\in\largedelta}\sum_{j=1,j\notin\goodtau}^{i}\frac{\tau_{j}}{A_{j}}\Delta_{i} + \sqrt{\frac{TA\logterm}{m}} + 12 A \logterm \\
                \nonumber
                    &\leq 2 \sum_{i\in\largedelta} \sum_{j=1,j\in\goodtau}^{i} \frac{\tau_{j}}{A_{j}} \Delta_{i} + \sqrt{\frac{TA\logterm}{m}} + 44 A\logterm,
        \end{align}
        where the last is since,
        \begin{align*}
            \sum_{i\in\largedelta}\sum_{j=1,j\notin\goodtau}^{i}\frac{\tau_{j}}{A_{j}}\Delta_{i} 
            \leq \sum_{i\in\largedelta} \sum_{j=1}^{i}\frac{16}{A_{j}} 
            \leq A\sum_{j=1}^{A}\frac{16}{A_{j}}
            \leq 32 A\log A.
        \end{align*}
        as $\sum_{j=1}^A \frac{1}{A_j} \leq \log A + 1$ by \Cref{lem:sum 1 over Aj}.
    \end{proof}
    
\end{lemma}

\begin{lemma}
    \label{lem:_true-tau-hold-constrains}
    For every action elimination index $i\in \largedelta$, it holds that
    
    \begin{align*}
        \sum_{j=1,j\in\shorttau}^{i} \frac{\tau_{j}^{2}}{4A_{j}} + \sum_{j=1,j\in\goodtau\setminus\shorttau}^{i} \frac{\tau_{j}}{A_{j}} m 
        \leq \frac{544 \logterm}{\Delta_{i}^{2}}
    \end{align*}
    where $\shorttau:=\{j|j\in\goodtau \And \tau_j/4<\graphsize\}$, and $\{\tau_j|j\in [A]\}$ are the stage lengths.
\end{lemma}

\begin{proof}
    From \Cref{lem:_expected obs leq delta-2},
    \begin{align*}
        \sum_{j=1,j\in\goodtau}^{i}\frac{\tau_{j}}{A_{j}} \abs{\neigh_{\leq\tau_{j}/4}} 
        & \leq\frac{544 \logterm}{\Delta_{a_i}^{2}}.
    \end{align*}
    On the other hand, using \Cref{lem:min-neigh}
    \begin{align*}
        \sum_{j=1,j\in\goodtau}^{i} \frac{\tau_{j}}{A_{j}} \abs{\neigh_{\leq\tau_{j}/4}} 
            & \geq\sum_{j=1,j\in\goodtau}^{i} \frac{\tau_{j}}{A_{j}}\min\{m,\tau_{j}/4\}
            \\
            & =\sum_{j=1,j\in\shorttau}^{i} \frac{\tau_{j}^{2}}{4A_{j}} + \sum_{j=1,j\in\goodtau\setminus\shorttau}^{i} \frac{\tau_{j}}{A_{j}} m
    \end{align*}
\end{proof}

\begin{proof}[\bf Proof of \Cref{thm:_main}]
    Let us write again the Right-Hand-Side of \Cref{eq:_regret-by-tau}
    \begin{equation*}
        2 \sum_{i\in\largedelta} \sum_{j=1,j\in\goodtau}^{i} \frac{\tau_{j}}{A_{j}} \Delta_{i}
        +\sqrt{\frac{TA\logterm}{\graphsize}}
         + 44 A \logterm.
    \end{equation*}
    Note that the bound on the regret that is depicted in \Cref{eq:_regret-by-tau} assumes that the good event holds, and we later will remove this assumption.
    Let's assume that the good event hold.
    We'll break the first sum in the Right-Hand-Side of \Cref{eq:_regret-by-tau} as
    \begin{align}
        \sum_{i\in\largedelta} \sum_{j=1,j\in\shorttau}^{i} \frac{\tau_{j}}{A_{j}} \Delta_{i}
        + \sum_{i\in\largedelta} \sum_{j=1,j\in\goodtau\setminus\shorttau}^{i} \frac{\tau_{j}}{A_{j}} \Delta_{i}
        \label{eq:_regret-by-tau-short-long}.
    \end{align}
    and we remind that $\shorttau=\{j: j \in \goodtau \And \tau_j/4<m\}$.
    For the first term above, using  \Cref{lem:_true-tau-hold-constrains}, for every $i\in\largedelta$,
    \begin{align}
    \label{eq:_bound by delta}
    \nonumber
    	\sum_{j=1,j\in\goodtau\setminus\shorttau}^{i}
            \frac{\tau_{j}}{A_{j}}\Delta_{i} & = \frac{\Delta_{i}}{m} \sum_{j=1,\tau_{j} \in \goodtau\setminus\shorttau}^{i} \frac{\tau_{j}}{A_{j}} m \\
                                             & \leq \frac{544 \logterm}{m\Delta_{i}}  
                                             \\
                                             & \leq 544 \sqrt{\frac{T\logterm}{mA}} 
                                             \nonumber.
    \end{align}
    where the second inequality is since $i \in \largedelta$.
    Summing over all elimination indices in $\largedelta$ we get that the first term in \Cref{eq:_regret-by-tau-short-long} is bounded by $544 \sqrt{\frac{TA\logterm}{m}}$.
    
    For the second term, for every $i$, using Cauchy–Schwarz inequality
    \begin{align*}
        \sum_{j=1,j\in\shorttau}^{i}
        \frac{\tau_{j}}{A_{j}}\Delta_{i} & \leq \Delta_{i}\sqrt{\sum_{j=1,j\in\shorttau}^{i} \frac{\tau_{j}^{2}}{A_{j}}} \sqrt{\sum_{j=1,j\in\shorttau}^{i} \frac{1}{A_{j}}}\\
                                         & \leq \Delta_{i} \sqrt{\frac{4\cdot 544 \logterm}{\Delta_{i}^{2}} } \sqrt{\sum_{j=1}^{i} \frac{1}{A_{j}}}\\
                                         & \leq \sqrt{2176\logterm} \sqrt{\sum_{j=1}^{i} \frac{1}{A_{j}}},
    \end{align*}
    where the second inequality is from \Cref{lem:_true-tau-hold-constrains}.
    Using \Cref{lem:sum 1 over sqrt Aj}, summing over all actions we get the second term in \Cref{eq:_regret-by-tau-short-long} is bounded by $47 A \sqrt{\logterm}$. Combining this with the other terms in \Cref{eq:_regret-by-tau} yields the part of the bound corresponding to the good event.
    From \Cref{lem:_complementary-good-event}, the complementary event of the good events adds no more than $1$ to the regret.
    We get,
    \begin{align*}
        \regret
        &\leq 2\cdot 544\sqrt{\frac{TA\logterm}{\graphsize}} + 2 \cdot 47 A \sqrt{\logterm}
        +\sqrt{\frac{TA\logterm}{\graphsize}}
         + 44 A \logterm
         +1
         \\&\leq
         1088\sqrt{\frac{TA\logterm}{\graphsize}} + 94 A \logterm
        +\sqrt{\frac{TA\logterm}{\graphsize}}
         + 44 A \logterm
         +1
         \\&= 1089\sqrt{\frac{TA\logterm}{\graphsize}}
         + 138 A \logterm
         +1
         \\&= 1089\sqrt{\frac{TA\logtermval}{\graphsize}}
         + 138 A \logtermval
         +1.
    \end{align*}
\end{proof}

\subsection{Instance Dependent Bound}
    It is important to note that when the analysis is not split into large and small gaps, a bound specific to the problem instance can also be derived. We can conclude that the individual regret is bounded by,
    \begin{equation*}
        \Tilde{O}(\sum_{a:\Delta_a>0}\frac{1}{\graphsize \Delta_a})
    \end{equation*}
    as depicted in \Cref{thm:_instance-dependant-regret}.
    
    Despite being a suitable bound for various scenarios, there are cases where it fails to provide a good approximation.
    For example, two action and the gap is $\Delta_a=1/T\cdot\graphsize$.
    We will get regret which is linear in $T$.
    We have made this distinction between large and short gaps to be problem independent.

    Although the changes that yield the instance dependent bound are simple, we provide for clarity the relevant parts where the proof changes.

\begin{lemma}\label{lem:_instance-dependant-regret-by-tau}
    Under the good event, the regret of agent $v$ is bounded by
    
    \begin{equation}
        \regret
        \leq 2 \sum_{i\in [A]} \sum_{j=1,j\in\goodtau}^{i} \frac{\tau_{j}}{A_{j}} \Delta_{i}
         + 44 A \logterm.
    \end{equation}
    \begin{proof}
        The proof follows the same steps as \Cref{lem:_regret-by-tau}, but without splitting the gaps as in \Cref{eq:_split-by-delta}.
    \end{proof}
\end{lemma}

\begin{lemma}\label{lem:_instance-dependant-good-tau}
Under the good event, the following holds,
\begin{equation*}
    \sum_{i\in [A], \Delta_i > 0} \sum_{j=1,j\in\goodtau}^{i} \frac{\tau_{j}}{A_{j}} \Delta_{i}
    \leq
    \sum_{i\in[A],\Delta_i > 0}\frac{544\logterm}{\graphsize \Delta_i}
    + 47 A\sqrt{\logterm}.
\end{equation*}
\begin{proof}
    The proof follows the same steps as the proof of \Cref{thm:_main}, but treating all non optimal actions the same, and stopping the analysis in \Cref{eq:_bound by delta}, i.e., without bounding the expression with $\sqrt{T\logterm/\graphsize A}$.
\end{proof}
\end{lemma}

\begin{proof}[\bf Proof of \Cref{thm:_instance-dependant-regret}]
    The proof follows by combining the results of \Cref{lem:_instance-dependant-regret-by-tau} and \Cref{lem:_instance-dependant-good-tau}, and with the fact from \Cref{lem:_complementary-good-event} that the complementary event of the good events adds no more than $1$ to the regret. We get,
    \begin{align*}
        \regret
        &\leq 2 \sum_{i\in [A]} \sum_{j=1,j\in\goodtau}^{i} \frac{\tau_{j}}{A_{j}} \Delta_{i}
         + 44 A \logterm
         +1
        \\&\leq
        2\cdot \sum_{i\in[A], \Delta_i > 0}\frac{544\logterm}{\graphsize \Delta_i}
        + 2\cdot 47 A\sqrt{\logterm}
        +44A\logterm
        +1
        \\&\leq
        \sum_{i\in[A], \Delta_i > 0}\frac{1088\logterm}{\graphsize \Delta_i}
        + 94A\logterm
        +44A\logterm
        +1
        \\& = \sum_{i\in[A], \Delta_i > 0}\frac{1088\logterm}{\graphsize \Delta_i}
        + 138A\logterm
        +1
        \\&=
        \left( 1088\sum_{i\in[A], \Delta_i > 0}\frac{\logtermval}{\graphsize \Delta_i}\right)
        + 138A\logtermval
        +1.
    \end{align*}
\end{proof}

\section{Lower Bound}

\begin{theorem}\label{thm:lower-bound-sqrt-a}
    For any algorithm,
    there exists an instance of the cooperative MAB over a communication graph problem, for which the individual regret of any agent is bounded from below by
    \begin{equation*}
        \Omega(\sqrt{A}) \leq \regret.
    \end{equation*}
    \begin{proof}
        Let the graph be a line of length at least $T$. I.e., $m\geq T$.
        Let $A$ be the number of actions such that $\sqrt{A} > 20$.
        Let $a^\star$ be the only best action.
        Let $\Delta_a = 1$ for every $a\neq a^\star$. Namely the reward of $a^*$ is $1$ and the rewards of the other actions $a\neq a^*$ is $0$.

        Let $v$ be an agent in the graph.
        After $t$ timesteps, the maximum number of samples $v$ sees, for all actions together, is no more than $2\cdot(2+t+1)t/2=(t+3)t$ (twice the sum of arithmetic series).
        At timestep $\floor{\sqrt{A}/20}$ the agent sees at most $\frac{A+30\sqrt{A}}{400}$ samples for all the actions together.

        From the assumption on $A$, $\frac{3\sqrt{A}}{40}\leq \frac{A}{200}$.
        It implies that \[
        \frac{A+30\sqrt{A}}{400} \leq \frac{A}{200} + \frac{3\sqrt{A}}{20} \leq \frac{A}{100}.
        \]
        It means that until this timestep, the agent didn't see at least $0.99A$ of the actions.
        
        Let us randomly choose an instantiation of the best action $a^*$.
        Define the random variable $X$ that chooses the best action uniformly.
        I.e., $\mathbb{P}(X=a)=\frac{1}{A}$.
        Denote the event in which the agent doesn't see the best action until timestep $\floor{\frac{\sqrt{A}}{20}}$ with $\mathcal{E}$.
        From the above, event $\mathcal{E}$ happens with probability at least $\frac{99}{100}$.
        I.e., $\bbP(\mathcal{E})\geq \frac{99}{100}$.
        Under event $\mathcal{E}$, from the assumption that $\Delta_a = 1$, the regret until this timestep is $\floor{\frac{\sqrt{A}}{20}}$, and we get $\frac{\sqrt{A}}{20} -1\leq \regret$.

        For any algorithm the agents play, 
        \[
        \bbE_X(\regret) \geq \frac{99}{100}\cdot (\frac{\sqrt{A}}{20} - 1).
        \]
        Therefore, for any algorithm, there exists an instance such that $\regret \geq \frac{99}{100}\cdot (\frac{\sqrt{A}}{20} -1)$.
    \end{proof}
\end{theorem}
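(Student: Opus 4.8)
The plan is to prove the $\Omega(\sqrt{A})$ bound by a Yao-type averaging argument on a deterministic instance over a line graph, following the intuition sketched after \Cref{thm:lower-bound-main}. First I would fix the communication graph to be a path (line) on $T$ vertices, so that information propagates at a rate of one edge per timestep and there are enough agents. I take a deterministic bandit in which a single optimal action $a^\star$ yields reward $1$ and every other action yields reward $0$, so that $\Delta_a = 1$ for all $a \neq a^\star$; hence any round in which the agent does not play $a^\star$ contributes exactly $1$ to its regret. The goal is to show that, averaged over a uniformly random choice of $a^\star$, a fixed agent is forced to play suboptimally throughout the first $\tau := \floor{\sqrt{A}/20}$ rounds with high probability, which by the probabilistic method yields a fixed instance with regret $\Omega(\sqrt{A})$.

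Second, I would bound the total number of reward observations available to a fixed agent $v$ during the first $\tau$ rounds. On the line, an agent at distance $d$ from $v$ has its round-$k$ sample reach $v$ only at time $k+d$, so by the beginning of round $t$ agent $v$ has seen at most $\approx(t-d)$ samples from each agent at distance $d$, and there are at most two such agents for each $d\ge 1$. Summing the resulting arithmetic series on both sides (together with $v$'s own plays) bounds the total by $(t+3)t$ observations through time $t$; this is the $\Theta(\tau^2)$ phenomenon. Plugging in $t=\tau=\floor{\sqrt{A}/20}$ and using the hypothesis $\sqrt{A}>20$, a direct calculation bounds this by $A/100$. Since each observation concerns a single action, $v$ has received information about at most $A/100$ distinct actions, so at least $0.99A$ actions remain completely unobserved by $v$ through round $\tau$.

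Third --- and this is the crux --- I would make rigorous the claim that the agent cannot have played $a^\star$ while it has not observed it. The key device is a coupling: run the algorithm in the ``all-zeros'' world (rewards identically $0$); in the first $\tau$ rounds this produces a random set $O$ of actions observed in $v$'s reachable neighborhood, with $\abs{O}\le A/100$. For the real instance with $a^\star$ drawn uniformly and independently of the algorithm's internal randomness, as long as $a^\star\notin O$ every reward that $v$ sees is $0$, so the real trajectory coincides with the all-zeros trajectory; in particular $v$ never plays $a^\star$ during rounds $1,\dots,\tau$ and hence incurs regret $\tau$. Since $\bbP(a^\star\in O)\le \bbE\abs{O}/A\le 1/100$, the event $\mathcal{E}=\{a^\star\notin O\}$ has probability at least $99/100$, and conditioning on $\mathcal{E}$ gives $\regret\ge\tau=\floor{\sqrt{A}/20}$. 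Taking the expectation over the random $a^\star$ yields $\bbE[\regret]\ge \tfrac{99}{100}(\sqrt{A}/20 - 1)$, so some fixed realization of $a^\star$ gives an instance with $\regret=\Omega(\sqrt{A})$.

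The main obstacle is this third step: asserting that ``not seeing $a^\star$'' forces only suboptimal plays requires decoupling the algorithm's behavior from the identity of $a^\star$. The clean route is the coupling above, which shows the two reward environments are indistinguishable to $v$ until $a^\star$ is observed, so that the set $O$ of observed actions may be analysed in the ($a^\star$-independent) all-zeros world and its size controlled by the sample count. The remaining work --- the arithmetic-series count and verifying $(t+3)t\le A/100$ under $\sqrt{A}>20$ with the floor functions --- is routine bookkeeping.
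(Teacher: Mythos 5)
Your proposal is correct and follows essentially the same route as the paper's proof: a line graph, a deterministic instance with one reward-$1$ action, the $(t+3)t$ arithmetic-series bound on observations implying at most $A/100$ actions seen by time $\floor{\sqrt{A}/20}$, and an averaging argument over a uniformly random $a^\star$. Your third step (the coupling with the all-zeros world) is a welcome addition: it rigorously justifies the step the paper treats informally, namely that the set of actions the agent observes can be analyzed independently of the identity of $a^\star$, so that $\bbP(\mathcal{E}) \geq 99/100$ is legitimate even though the observed set is itself algorithm- and instance-dependent.
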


\section{Bounded Communication}
\label{sec:apx:low-com}
This section relies on the definitions and theorems that are depicted in \Cref{sec:apx:proof-main}.

We introduce a new event type, the aggregated event for many rewards.
\begin{definition}
    A reward-many event is a tuple $(\RWDMANY, v, a, r, n)$ that represents an aggregation of many rewards, where $v$ is the agent's ID, $a$ is the action, $r$ is the reward, and $n$ is the number of samples of this event.
\end{definition}

\begin{remark}
    The good event occurs with probability higher than or equal to $1-1/T^2$, when all agents play \Cref{alg:coop-SE-restricted}.
    Although this algorithm uses the $\RWDMANY$ events, the same proof of \Cref{lem:good-event-1} applies also to them, but the graph is the induced tree.
\end{remark}

\begin{proof}[Proof of restricted communication]
        In \Cref{alg:coop-SE-restricted}, we do not have duplicated messages.
        We achieve this by the tree structure, and by not sending to a neighbor $u$ information that $u$ already sent to $v$.
        The tree structure guarantees that there is only one path from an agent to another.
        This property ensures that a message originating from one agent will reach all other agents exactly once, as it traverses the tree along the single possible route.
        Consequently, the combination of the spanning tree structure and the selective forwarding of messages allows for efficient and duplicate-free communication among all agents.

        The $\coopserestricted$ algorithm aggregate all events regarding an action $a$ into two events: $\RWDMANY$ for rewards and $\ELIM$ for elimination.
        The message contains information about action $a$, its elimination status, observation count, and sum of observed rewards, requiring $O(A\log(A\graphsize))$ bits.
        This is all the information agents need from multiple messages.
        
        Therefore, the agent has exactly the same information if all agents had played \Cref{alg:coop-SE} on that spanning tree.
        The individual regret bound that is induced from $\coopse$ does not depend on the structure of the graph, therefore the same regret bound applied for $\coopserestricted$ as well.

        The agent sends to each neighbor $2A$ events.
        Each event has $O(\log(A\graphsize))$ bits.
        Therefore each message is bounded by $O(A\log(A\graphsize))$ bits.
        This completes the proof.
\end{proof}


\subsection{CONGEST Model: \texorpdfstring{$O\left(\log(A m T)\right)$}{log(AmT)} Bits}

\begin{remark}
The problem independent regret bound for $\coopselowcomclock$ is 
        \begin{align*}
        \regret^v =
        O \bigg(
        \sqrt{\frac{TA\log(mTA)}{\graphsize}}
         +  A^2
         + A \log(mTA)
         \bigg).
    \end{align*}
\end{remark}

\begin{lemma}\label{lem:clocks-dist}
Let $v,u$ be two agents such that either $v$ is a descendant of $u$ or $u$ is a descendant of $v$ (with respect to the root $w$). 
When all agents play \Cref{alg:coop-SE-clock-low} every message sent from $v$ to $u$ arrives $\distancetree(v,u)$ timesteps after it has been sent (and vice versa).
\end{lemma}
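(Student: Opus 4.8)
The plan is to prove the statement by induction on the tree distance $k:=\distancetree(v,u)$, following a single message hop-by-hop along the unique path in $\mathcal{T}$ between $v$ and $u$. Since $v$ and $u$ are comparable (one is an ancestor of the other), this path is monotone: it runs strictly toward the root when $u$ is an ancestor of $v$, and strictly away from the root when $u$ is a descendant of $v$. The core of the argument is an algebraic invariant of the round-robin schedule: along a monotone path, each hop advances the round index $t$ by one and changes the transmitting agent's depth $d$ by exactly one --- decreasing it by one toward the root and increasing it by one away from it. Hence $t+d$ is conserved while moving toward the root and $t-d$ is conserved while moving away from it. Because an agent at depth $d$ sends to its parent the action $\tilde a\equiv t+d\pmod A$ and to its children the action $a'\equiv t-d\pmod A$, the action that the next node on the path is scheduled to relay coincides (mod $A$) with the action of the message it has just received. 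Thus every intermediate node relays the message on the very next round, incurring no round-robin stall.

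First I would fix the transmission round $t$ and the direction. Take $u$ an ancestor of $v$, and let $v=x_0,x_1,\dots,x_k=u$ be the path with $x_{i+1}$ the parent of $x_i$ and $\distancetree(x_i,w)=d-i$, where $d=\distancetree(v,w)$. The message $v$ sends to its parent at round $t$ concerns the action $a\equiv t+d\pmod A$. By the receive-then-process convention of \Cref{alg:coop-SE-clock-low} (a message transmitted in round $s$ is received at the end of round $s$ and acted upon in round $s+1$), $x_1$ has it available in round $t+1$, at which point its scheduled parent-action is $\equiv(t+1)+(d-1)=t+d\equiv a$, so it relays the aggregated information about $a$ to $x_2$. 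Iterating, $x_i$ relays in round $t+i$ and $x_{i+1}$ receives by the end of that round, so the information becomes available at $u=x_k$ exactly $k$ rounds after $v$ transmitted it. The descendant case is symmetric: the path increases in depth, one uses the conserved quantity $t-d$ together with the children-schedule $a'\equiv t-d\pmod A$, and the same telescoping yields a delay of $k$ rounds. I would also note that relaying preserves the relevant information: the forwarding action matches by the invariant, and the ``do not echo back to the sender'' rule together with the aggregation in \updatetreestep only discards the contribution of the neighbor currently being written to, never the payload travelling along the path, and the distinction between an $\ELIM$ and a $\RWDMANY$ event is immaterial to the routing.

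The part demanding the most care is the timing bookkeeping --- cleanly separating the round in which a message is transmitted, the round in which it is received (the end of the same round), and the round in which it is processed and possibly re-sent (the following round) --- so that the accumulated delay is exactly $\distancetree(v,u)$ and not off by one. The second point to keep in view is why the hypothesis that $v$ and $u$ are comparable is essential: the stall-free invariant holds only along a monotone path. For incomparable agents the route first ascends to their least common ancestor and then descends, and at that turning point the conserved offset flips from $t+d$ to $t-d$; these need not agree modulo $A$, so the message may wait up to $A$ rounds until the descending schedule next selects it. This is exactly the extra $O(A)$ delay described in the main text, which is why it lies outside the scope of this lemma.
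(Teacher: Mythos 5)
Your proposal is correct and takes essentially the same approach as the paper's proof: an induction (telescoping) along the monotone path from $v$ to $u$, using the conserved quantity $t+d$ for messages travelling toward the root and $t-d$ for messages travelling away from it, so that each intermediate node relays the message on the very next round without a round-robin stall. The extra remarks on timing conventions and on why comparability of $v$ and $u$ is essential match the paper's surrounding discussion of the additional $O(A)$ delay handled in \Cref{lem:clocks-total-wait}.
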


\begin{proof}
    Let us first assume that $v$ is descendant of $u$.
    Every message contains information about one action. Denote that action by $a$ for the message that has been sent from $v$.
    We will prove the lemma by induction on $\distancetree(v,u)$.
    
    $\distancetree(v,u)=1$: Immediately true.
    
    Let's denote the timestep when the message was sent with $t_0$.
    Let's assume the claim is true for $d$, now assume the distance is $d+1$.
    
    The message is sent toward the root at $t_0 + \distancetree(w,v) \equiv a \pmod{A}.$
    One of $u$'s children, $x$, is on the path between $v$ and $u$ and is with distance $d$ from $v$.
    Since $v$ is descendant of $u$, it is also a descendant of $x$.
    Therefore, from the induction hypothesis, at timestep $t_0+d$, $x$ receives the message.
    The message is sent from $x$ to $u$ at timestep $t$ such that $t + \distancetree(x,w) \equiv a \pmod{A}$.
    $t=t_0+d$, since $t_0 + d + \distancetree(x,w) = t_0 + \distancetree(v,w) \equiv a \pmod{A}$.
    Then $u$ gets the message after $\distancetree(v,x)+1 = \distancetree(v,u)$ timesteps.

    Similarly, assume $u$ is a descendant of $v$.
    We will prove by induction on $\distancetree(v,u)$.
    $\distancetree(v,u)=1$: Immediately true.
    Let's denote the timestep when the message was sent out with $t_0$.
    Let's assume it is true for $d$, now assume the distance is $d+1$.
    
    The message is sent from $v$ outward from the root at $t_0 - \distancetree(w,v) \equiv a \pmod{A}.$
    Let $x$ be $u$'s parent and note that $x$ is also a descendant of $v$.
    Therefore, from the induction hypothesis, at timestep $t_0 + d$, $x$ receives the message.
    The message will be sent from $x$ to $u$ at timestep $t$ such that $t - \distancetree(x,w) \equiv a \pmod{A}$.
    $t=t_0+d$ since $t_0 + d - \distancetree(x,w) = t_0 - \distancetree(v,w) \equiv a \pmod{A}$.
    Then $u$ gets the message after $\distancetree(v,x)+1 = \distancetree(v,u)$ timesteps.

\end{proof}

\begin{lemma}\label{lem:clocks-total-wait}
    When all agents play \Cref{alg:coop-SE-clock-low} every reward information that arrives to one agent $v$ at timestep $t$, and was not produced by another agent $u$, arrives to agent $u$ at most at $t + \distancetree(v,u) + 2A$.
    Where reward information is a reward from some action some agent experienced.
\end{lemma}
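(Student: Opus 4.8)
The plan is to decompose the $v\to u$ route in the tree through the least common ancestor of $v$ and $u$, and to apply \Cref{lem:clocks-dist} on each of the two resulting monotone sub-paths. Let $a$ denote the action to which the reward information pertains, and let $w'$ be the least common ancestor of $v$ and $u$ in $\mathcal{T}$. The unique $v$–$u$ path in the tree passes through $w'$, so it splits into an upward segment $v\to w'$ (on which $w'$ is an ancestor of $v$) and a downward segment $w'\to u$ (on which $u$ is a descendant of $w'$), with $\distancetree(v,w')+\distancetree(w',u)=\distancetree(v,u)$. On each segment the two endpoints stand in an ancestor/descendant relation, so \Cref{lem:clocks-dist} applies and tells us the \emph{transit} time on that segment equals its tree distance with no extra delay; the only additional cost is the round-robin waiting incurred before the information is first forwarded out of $v$ and before it is first forwarded out of $w'$.

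First I would bound the upward phase. The information is available at $v$ at time $t$, and by \Cref{alg:coop-SE-clock-low} agent $v$ forwards action-$a$ information toward its parent exactly at the rounds $t'$ with $t'+d\equiv a \pmod A$, where $d=\distancetree(v,w)$. Since such rounds recur with period $A$ and $v$ may first forward at round $t+1$, the first admissible slot satisfies $t<t'\le t+A$; by \Cref{lem:clocks-dist} the message then reaches $w'$ by time $t'+\distancetree(v,w')\le t+A+\distancetree(v,w')=:t_2$. I would repeat the argument downward: once the information is present at $w'$ (at time $t_2$), agent $w'$ forwards action-$a$ information toward its children at the rounds congruent to $a+\distancetree(w',w)\pmod A$, so the next such round occurs within $A$ steps, and by \Cref{lem:clocks-dist} the message reaches $u$ within a further $\distancetree(w',u)$ steps, i.e. by $t_2+A+\distancetree(w',u)$. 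Adding the phases gives arrival at $u$ by $t+2A+\distancetree(v,w')+\distancetree(w',u)=t+2A+\distancetree(v,u)$, as claimed. The cases where $v$ is an ancestor of $u$ or vice versa are the degenerate instances $w'=v$ or $w'=u$, for which the corresponding distance and waiting term vanish and the bound holds a fortiori.

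The main obstacle is not the geometric decomposition but the bookkeeping that guarantees the reward information is genuinely forwarded (not suppressed) and that each round-robin wait is truly at most $A$ given the exact order of receive/update/send within a round. For the waiting bound I must confirm that information received at the end of a round is eligible to be emitted from the following round onward, so that the first admissible slot lies in $\{t+1,\dots,t+A\}$. For the forwarding, I must track through \updatetreestep that the reward not originating from the direction of $u$ is accumulated into $\mathbf{N}^u_a,\mathbf{R}^u_a$ and is emitted by \sendoneaction; the delicate point is that if action $a$ becomes eliminated somewhere along the route, \sendoneaction transmits an $\ELIM$ event in place of the reward. I would handle this by restricting attention to actions that remain active along the whole path, which is precisely the regime in which reward counts matter for the good event $G^2$, so that the reward component is always the item being forwarded.
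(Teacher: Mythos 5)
Your proof is correct and follows essentially the same route as the paper's: decompose the path through the common ancestor, bound the round-robin wait at $v$ and at the ancestor by $A$ each, and apply \Cref{lem:clocks-dist} to the two monotone segments to get transit times equal to the tree distances, summing to $\distancetree(v,u)+2A$. Your closing remarks about forwarding bookkeeping and elimination events are extra care that the paper's own (terser) proof does not address, but they do not change the argument.
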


\begin{proof}
    Let's denote with $x$ the common ancestor of $v$ and $u$, i.e., the closest agent to the root among all the agents on a shortest path from $v$ to $u$.
    Notice that it is possible that $v=x$, and that $u=x$.
    We have that both $v$ and $u$ are either $x$ itself or descendants of $x$.
    The reward information that reaches $v$ at timestep $t$ can wait $A$ timesteps at $v$ before being sent, since $v$ sends the actions in round robin.
    From \Cref{lem:clocks-dist}, after being sent from $v$ toward the root, the message that contains the reward information arrives to $x$ after $\distancetree(v,x)$.
    At $x$, it might wait again for $A$ timesteps, because of the round-robin sending of the actions.
    After the message is sent from $x$ to $u$, it takes $\distancetree(x,u)$ timesteps to arrive at $u$, as per \Cref{lem:clocks-dist}.
    Overall it took the message to pass from $v$ to $u$ no more than $\distancetree(v,u)+2A$ timesteps.
\end{proof}


\subsubsection{Good Event}




The good event for this section is exactly as in \Cref{sec:apx:proof-main}.

\subsubsection{Adjusting the Proofs}

\begin{definition}
    The "good intervals", $\goodtau'$, from now on are $\tau_j > 32A$.
    $\goodtau' = \{ j | \tau_j > 32A \}$.
\end{definition}

\begin{lemma}\label{lem:low-com-same-policy}
Assume all agents play $\coopselowcomclock$.
Let $j$ be a stage index such that $\tau_j > 32A$.
Then every agent $u\in \sameneigh$ plays the same policy (i.e., has the same set of active actions) at time interval $\lowcomsameinterval$.

\begin{proof}
Let $t\in\lowcomsameinterval$ and let $u\in\sameneighclear$.

Denote the active actions of $v$ in the $j$'th stage as $\calA_j$.
We will show that an action $a$ is active for $u$ at $t$ iff $a\in\calA_j$.

Let $a$ be an active action of $u$ at time $t$.
Since $u\in\sameneighclear$, we have $\distance(u,v)\leq\tau_j/4$.

From \Cref{lem:clocks-total-wait} $u$ gets all $v$'s eliminations (the first $j-1$ eliminated actions) until the beginning of round $t_j+\floor{\tau_j/4}+2A$.
Since $\tau_j>16A$ we get $t_j+\floor{\tau_j/4}+2A < t_j + \tau_j/4 + \tau_j/8 = t_j + 3\tau_j/8$.

By the stage's definition, the agent $v$ does not send any elimination event about one of her active actions until the end of the stage.
Therefore, for any $t'\geq t_j+\ceil{3\tau_j/8}$, $u$ does not have any active action which is not in $\calA_j$.
Hence, $a\in\calA_j$.

Let $a$ be an action in $\calA_j$.
We will show that $a$ is an active action of $u$ at time $t $.
Assume for contradiction that $u$, at timestep $t_j+\floor{\tau_j/2}$ or before, encounters an elimination of $a$.
From \Cref{lem:clocks-total-wait}, the elimination event should arrive to $v$ in no more than $\floor{\tau_j/4+2A}<\floor{3\tau_j/8}$ timesteps, so $v$ should get the elimination event at most at timestep $t_j+\floor{\tau_j/2}+\floor{3\tau_j/8} \leq t_j+\frac{7\tau_j}{8}$.
But $\tau_j > 16A$, then $\tau_j/8 > 2A \geq 2$, so $t_j+\frac{7\tau_j}{8} < t_{j+1} - 2$. Therefore, the elimination event about an action in $\calA_j$ should arrive to $v$ at least $2$ timesteps before stage $j+1$ begin. Contradiction.
Therefore, $a$ is an active action of $u$ at $t$.

We get that for every $t\in\lowcomsameinterval$ and for every $u\in\sameneighclear$, the active actions of $u$ at $t$ are exactly $\calA_j$.
In other words, we get that in time interval $\lowcomsameinterval$ all agents in $\sameneigh$ plays the same policy, i.e., choosing randomly from $\calA_j$.
\end{proof}
\end{lemma}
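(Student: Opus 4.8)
The plan is to mirror the proof of \Cref{lem:same-policy}, substituting the unrestricted per-hop delay by the CONGEST delay bound, which is larger by an additive $2A$ coming from the round-robin schedule of \sendoneaction. All distances here are read on the spanning tree $\mathcal{T}$, so that $\sameneigh$ is a tree-distance neighborhood and \Cref{lem:clocks-dist} and \Cref{lem:clocks-total-wait} apply. Fix $t\in\lowcomsameinterval$ and $u\in\sameneigh$; it suffices to show that $u$'s active set at time $t$ coincides with $\calA_j$, the active set of $v$ throughout stage $j$, since both agents then sample uniformly from $\calA_j$.

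For the inclusion that $u$'s active set is contained in $\calA_j$, I would show $u$ has already received all $j-1$ of $v$'s eliminations by the left endpoint of the window. The last of these is issued at $t_j$, and it reaches $u$ within $\distancetree(v,u)+2A\le \tau_j/4+2A$ steps; since $\tau_j>32A$ forces $2A<\tau_j/8$, this is below $3\tau_j/8$, so $u$ has inherited all of them by $t_j+\ceil{3\tau_j/8}$. By the definition of a stage, $v$ eliminates nothing active during $[t_j,t_{j+1})$, so the only actions $v$ ever removed are the first $j-1$, which are exactly $\Actionbb\setminus\calA_j$; hence $u$'s active set lies in $\calA_j$ for every $t$ in the window.

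For the reverse inclusion I would argue by contradiction. If $u$ eliminated some $a\in\calA_j$ at a time $\le t_j+\floor{\tau_j/2}$, that elimination would reach $v$ within $\distancetree(u,v)+2A<3\tau_j/8$ steps, i.e., by $t_j+\floor{\tau_j/2}+3\tau_j/8\le t_j+7\tau_j/8$. Because $\tau_j>32A$ gives $\tau_j/8>2$, this is strictly before $t_{j+1}$, so $v$ would remove an action of $\calA_j$ inside stage $j$ --- contradicting that stage $j$ ends precisely at the first elimination of an action active in it. The two inclusions give active-set equality, and therefore identical (uniform) policies, on the whole window.

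The only genuine departure from \Cref{lem:same-policy}, and the main thing to get right, is the bookkeeping of the extra $2A$ delay. Concretely, I must confirm that elimination events obey the same $\distancetree+2A$ bound as the reward aggregates of \Cref{lem:clocks-total-wait} --- they do, since \sendoneaction schedules eliminations by the identical round-robin rule --- and check that widening the window from $\ceil{\tau_j/4}$ to $\ceil{3\tau_j/8}$ still leaves it of length $\Theta(\tau_j)$, so that the downstream count bound (\Cref{lem:n geq tauN}) remains applicable. Both reduce to the elementary inequalities $2A<\tau_j/8$ and $\tau_j/8>2$, guaranteed by $\tau_j>32A$; everything else is the same interval arithmetic as in the unrestricted case.
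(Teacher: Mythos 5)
Your proof is correct and takes essentially the same approach as the paper's: the same two inclusions, the same $\distancetree(v,u)+2A$ delay bound for elimination events, and the same contradiction argument showing that an early elimination by $u\in\sameneigh$ would reach $v$ strictly before $t_{j+1}$ and prematurely end stage $j$. You are in fact slightly more explicit than the paper on two points it glosses over --- that all distances and neighborhoods must be read on the spanning tree $\mathcal{T}$ so that \Cref{lem:clocks-dist} and \Cref{lem:clocks-total-wait} apply, and that elimination events incur the same round-robin delay as reward aggregates in \sendoneaction{} --- which only strengthens the argument.
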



\begin{lemma}
    \label{lem:log-bits-coop-se-congest}
When all agent play $\coopselowcomclock$ (\Cref{alg:coop-SE-clock-low}), each sends no more than $O(\log(mA))$ bits per messages.
\end{lemma}
\begin{proof}
    According to $\sendoneaction$ procedure, agents sends only one $\ELIM$ event or one $\RWDMANY$ event.
    An $\ELIM$ message is of size $1+\log(m) + \log(A)$.
    A $\RWDMANY$ message is of size $1+\log(m) + \log(A) + 2\log(m)$.
    Together we get $O(\log(mA))$ bits.
\end{proof}

\begin{lemma}
    \label{lem:low-com n geq tauN}
    For every action $\a$ that was not eliminated before the end of stage $i$, we have
    \begin{align*}
        n_{t_{i+1}-1}(\a) \geq 
        \sum_{j=1,j\in\goodtau}^{i} \frac{\tau_j }{32A_j} \abs{\neigh_{\leq \tau_j / 4}}.
    \end{align*}
\end{lemma}

\begin{proof}
    By \Cref{lem:low-com-same-policy}, all agents $u\in \sameneigh$ have the same active set on the interval $t\in\lowcomsameinterval$. By \cref{lem:clocks-total-wait}, every pull of $u\in \sameneigh$ that is sampled before time $t_j +\floor{\tau_j/2}$ is observed by $v$ by time 
    \begin{align*}
        t_j + \floor{\tau_j/2} + \floor{\tau_j/4} + 2A & \leq t_j + \floor{\tau_j/2} + \floor{\tau_j/4} + \tau_j / 16  \tag{since $j \in \goodtau'$}
        \\
        & \leq t_j + \tau_j = t_{j+1} - 1 \leq t_{i+1} - 1 \tag{for $j\leq i$}
    \end{align*}
    
    The interval $\lowcomsameinterval$ is of size at least $\tau_j/16$, since $t_j +\floor{\tau_j/2} - t_j -\ceil{3\tau_j/8} \geq \frac{\tau_j}{2} - 1 - 3\frac{\tau_j}{8} - 1 = \frac{\tau_j}{8} - 2 \geq \frac{\tau_j}{16}$, when the last inequality follows from the that for every $j\in\goodtau', \tau_j > 32A > 32$. Thus, the number of samples from each active action at stage $j$ that each agent in $\sameneigh$ gathers is at least $\floor{\frac{\tau_j}{16 A_j}} \geq \frac{\tau_j}{16 A_j} - 1 \geq \frac{\tau_j}{32 A_j}$  for $j \in \goodtau'$. Moreover, these samples are observed by $v$ by time $t_{i+1} - 1$. 
    In total,
    \begin{align*}
        n_{t_{i+1}-1}(\a) \geq 
        \sum_{j=1,j\in\goodtau}^{i} \frac{\tau_j }{32A_j} \abs{\neigh_{\leq \tau_j / 4}}.
    \end{align*}
\end{proof}

\begin{lemma}
    \label{lem:low com expected obs leq delta-2}
    When all agent play \Cref{alg:coop-SE-clock-low}, for every action $\a$ that was not eliminated before the end of stage $i$,
    \begin{align*}
        \sum_{j=1,j\in\goodtau}^{i}\frac{\tau_{j}}{A_{j}} \abs{\neigh_{\leq\tau_{j}/4}} 
        & \leq\frac{1088 \logterm}{\Delta_{a}^{2}}.
    \end{align*}
\end{lemma}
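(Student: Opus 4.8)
The plan is to mirror exactly the proof of \Cref{lem:expected obs leq delta-2}, substituting the low-communication sampling guarantee \Cref{lem:low-com n geq tauN} wherever the unrestricted guarantee \Cref{lem:n geq tauN} was invoked. Fix an action $\a$ that survives through the end of stage $i$, and set $t' = t_{i+1} - 1$. Since $\a$ is still active for agent $v$ at time $t'$, the elimination rule gives $UCB_{t'}^v(\a) \geq LCB_{t'}^v(\a^\star)$. Under the good event $G^1$ (which holds here as well, since $\tilde G = G^1 \cup \tilde G^2 \cup G^3$), the confidence bounds contain the true means, so $\mu_\a + 2\lambda^v_{t'}(\a) \geq \mu_{\a^\star} - 2\lambda^v_{t'}(\a^\star)$, which rearranges to
\begin{align*}
    \Delta_\a \leq 2\sqrt{\frac{2\logterm}{n_{t'}(\a)}} + 2\sqrt{\frac{2\logterm}{n_{t'}(\a^\star)}}.
\end{align*}

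\textbf{Applying the low-communication sampling bound.}

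Next I would invoke \Cref{lem:low-com n geq tauN} on both $\a$ and $\a^\star$ (recall $\a^\star$ is never eliminated under $G^1$), which lower-bounds each count $n_{t'}(\cdot)$ by $\sum_{j=1,j\in\goodtau}^{i} \frac{\tau_j}{32 A_j}\abs{\neigh_{\leq\tau_j/4}} - 2\logterm$. The crucial bookkeeping difference from \Cref{lem:expected obs leq delta-2} is the constant in the denominator: here we have $32A_j$ rather than $16A_j$, because the CONGEST schedule costs us a factor of two in the effective sampling rate (the shorter implicit-synchronization window of length $\tau_j/16$ rather than $\tau_j/8$ in \Cref{lem:low-com-same-policy}). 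Substituting this bound into the displayed inequality and squaring gives
\begin{align*}
    \Delta_\a^2 \leq 32 \frac{\logterm}{\sum_{j=1,j\in\goodtau}^{i} \frac{\tau_j}{32A_j}\abs{\neigh_{\leq\tau_j/4}} - 2\logterm}.
\end{align*}

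\textbf{Final rearrangement.}

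Rearranging to isolate the sum yields $\sum_{j=1,j\in\goodtau}^{i} \frac{\tau_j}{32A_j}\abs{\neigh_{\leq\tau_j/4}} \leq \frac{32\logterm}{\Delta_\a^2} + 2\logterm \leq \frac{34\logterm}{\Delta_\a^2}$, and multiplying through by $32$ gives the stated $\sum_{j=1,j\in\goodtau}^{i} \frac{\tau_j}{A_j}\abs{\neigh_{\leq\tau_j/4}} \leq \frac{1088\logterm}{\Delta_\a^2}$. The factor $1088 = 32 \cdot 34$ is exactly double the $544 = 16 \cdot 34$ of \Cref{lem:expected obs leq delta-2}, which is the single substantive change and the source of the doubled constants appearing throughout \Cref{thm:clock-main}. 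I do not anticipate any genuine obstacle here, since the entire argument is a transcription of the unrestricted case; the only point demanding care is tracking the $32$-versus-$16$ constant consistently so that it propagates correctly into the downstream regret sum, and confirming that the $\goodtau$ used here is really $\goodtau'$ (the $\tau_j > 32A$ threshold) so that \Cref{lem:low-com-same-policy} applies on every term of the sum.
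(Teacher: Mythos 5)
Your proposal is correct and matches the paper's own proof exactly: the paper proves this lemma by repeating the steps of \Cref{lem:expected obs leq delta-2} with \Cref{lem:low-com n geq tauN} in place of \Cref{lem:n geq tauN}, which changes the denominator factor from $16A_j$ to $32A_j$ and hence the constant from $544 = 16\cdot 34$ to $1088 = 32\cdot 34$, precisely as you compute. Your closing remark that the sum should be read over $\goodtau'$ (the $\tau_j > 32A$ threshold) so that \Cref{lem:low-com-same-policy} applies is a valid and careful observation about a notational looseness the paper itself glosses over.
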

\begin{proof}
    The proof follows the same steps as the proof of \Cref{lem:expected obs leq delta-2} but employs \Cref{lem:low-com n geq tauN} instead of \Cref{lem:n geq tauN}. The claim involves a slightly different constants due to the factor of $1/32$ in \Cref{lem:low-com n geq tauN} as opposed to $1/16$ in \Cref{lem:n geq tauN}.
\end{proof}

\begin{lemma}\label{lem:clock-regret-by-tau}
    When all agent play \Cref{alg:coop-SE-clock-low}, the regret of agent $v$ (under the good event) is bounded by
    
    \begin{equation}
        \regret \leq 2 \sum_{i\in\largedelta} \sum_{j=1,j\in\goodtau}^{i} \frac{\tau_{j}}{A_{j}} \Delta_{i} + \sqrt{\frac{TA\logterm}{m}} + 16 A^2.
         \label{eq:clock-regret-by-tau}
    \end{equation}
\end{lemma}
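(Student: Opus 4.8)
The plan is to transcribe the proof of \Cref{lem:regret-by-tau} almost verbatim, since the only structural change in the CONGEST setting is that the set of ``good intervals'' becomes $\goodtau'=\{j\mid\tau_j>32A\}$ rather than $\{j\mid\tau_j>16\}$. The starting point is the good event $G^3$, which is a statement purely about agent $v$'s own plays and is therefore unaffected by the communication protocol; under $\tilde{G}$ it still yields $\indcount_{t_{i+1}}(a_i)\le 2\sum_{j=1}^{i}\tfrac{\tau_j}{A_j}+12\logterm$, using that during stage $j$ agent $v$ samples uniformly among its $A_j$ active actions. Writing $\regret=\sum_{i\in[A]}\indcount_{t_{i+1}}(a_i)\Delta_i$ and substituting this bound gives $\regret\le 2\sum_{i\in[A]}\sum_{j=1}^{i}\tfrac{\tau_j}{A_j}\Delta_i+12A\logterm$, which already produces the $12A\logterm$ additive term.

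Next I would isolate the small gaps exactly as in the original proof. For $i\notin\largedelta$ we have $\Delta_i<\sqrt{A\logterm/(T\graphsize)}$, so $\sum_{i\notin\largedelta}\indcount_{t_{i+1}}(a_i)\Delta_i\le\sqrt{A\logterm/(T\graphsize)}\sum_i\indcount_{t_{i+1}}(a_i)\le T\sqrt{A\logterm/(T\graphsize)}=\sqrt{TA\logterm/\graphsize}$, where I used that the total number of plays across all actions is $T$. This yields the $\sqrt{TA\logterm/\graphsize}$ term and restricts the remaining double sum to $i\in\largedelta$, which I then split according to whether $j\in\goodtau'$ or $j\notin\goodtau'$. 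The $j\in\goodtau'$ part is precisely the first term of \Cref{eq:clock-regret-by-tau}.

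The one genuinely new step, and the place where the $A^2$ term originates, is bounding the non-good stages, which now satisfy only $\tau_j\le 32A$ instead of $\tau_j\le 16$. Bounding $\sum_j\tau_j/A_j$ term-by-term would cost a spurious $\log A$ through $\sum_j 1/A_j$; to avoid this I would swap the order of summation,
\[
\sum_{i\in\largedelta}\sum_{\substack{j\le i\\ j\notin\goodtau'}}\frac{\tau_j}{A_j}\Delta_i=\sum_{j\notin\goodtau'}\frac{\tau_j}{A_j}\sum_{\substack{i\ge j\\ i\in\largedelta}}\Delta_i\le\sum_{j\notin\goodtau'}\frac{\tau_j}{A_j}\cdot A_j=\sum_{j\notin\goodtau'}\tau_j,
\]
where I used $\Delta_i\le 1$ together with the fact that at most $A_j$ actions are still active in stage $j$, so $\sum_{i\ge j}\Delta_i\le A_j$, which cancels the $1/A_j$. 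Since there are at most $A$ stages and each non-good one has $\tau_j\le 32A$, this sum is $O(A^2)$ and produces the stated $16A^2$ additive term. Collecting the good-stage term, the small-gap term, the non-good $A^2$ term, and the $G^3$ term yields \Cref{eq:clock-regret-by-tau}.

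I expect the main obstacle to be exactly this non-good-stage accounting: the cancellation of $1/A_j$ via the swap of summation is what keeps the additive contribution at $A^2$ rather than $A^2\log A$, which is what the statement requires. Everything else is a faithful port of the non-CONGEST argument, carried out under the good event $\tilde{G}$ in place of $G$ and with $\goodtau'$ in place of $\goodtau$; in particular this lemma is one-sided and uses only $G^3$, so neither the implicit-synchronization lemma nor $\tilde{G}^2$ enters here.
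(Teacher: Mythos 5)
Your proof follows the same skeleton as the paper's: under $G^3$ you bound $\indcount_{t_{i+1}}(a_i)\le 2\sum_{j\le i}\tau_j/A_j+12\logterm$, write $\regret=\sum_{i}\indcount_{t_{i+1}}(a_i)\Delta_i$, peel off the small-gap indices to obtain $\sqrt{TA\logterm/\graphsize}$, and split the remaining double sum over $i\in\largedelta$ according to $j\in\goodtau'$ versus $j\notin\goodtau'$; you are also correct that only $G^3$ (not $\tilde{G}^2$ or the synchronization lemma) is needed here. The one step where you depart from the paper is the treatment of the non-good stages, and there your argument is actually the sounder one. The paper bounds each term by $\frac{\tau_j}{A_j}\Delta_i\le\frac{32A_j}{A_j}=32$ and sums the triangular array to get $32\cdot\frac{A(A-1)}{2}\le 16A^2$; this tacitly uses $\tau_j\le 32A_j$, which does not follow from $j\notin\goodtau'$ (the definition only gives $\tau_j\le 32A$, and $A_j$ can be far smaller than $A$). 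Your summation swap with $\sum_{i\ge j,\,i\in\largedelta}\Delta_i\le A_j$ is exactly the right way to avoid both that issue and the spurious $\log A$ that a term-by-term bound of $32A/A_j$ would cost.

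However, your final count does not deliver the constant in the statement. You get $\sum_{j\notin\goodtau'}\tau_j\le A\cdot 32A=32A^2$, and since in your own decomposition the double sum enters the regret with a factor $2$, the resulting additive term is $64A^2$ --- not the stated $16A^2$. So as written your argument proves the lemma with $16A^2$ replaced by $32A^2$ (or $64A^2$ once the factor $2$ is tracked); the sentence claiming it ``produces the stated $16A^2$ additive term'' is an arithmetic slip. This is only a constant-level matter --- \Cref{thm:clock-main} survives with correspondingly larger constants --- but since the lemma asserts an explicit $16$, your proof does not establish it as stated (and, for the reason noted above, the paper's own derivation of the $16$ is not valid either).
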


\begin{proof}
    Similar to the proof of \Cref{lem:regret-by-tau} and \Cref{lem:_regret-by-tau},
    \begin{align*}
    \regret &\leq 2 \sum_{i\in\largedelta} \sum_{j=1,j\in\goodtau}^{i} \frac{\tau_{j}}{A_{j}}\Delta_{i} + \sum_{i\in\largedelta}\sum_{j=1,j\notin\goodtau}^{i}\frac{\tau_{j}}{A_{j}}\Delta_{i} + \sqrt{\frac{TA\logterm}{m}} \\
    &\leq 2 \sum_{i\in\largedelta} \sum_{j=1,j\in\goodtau}^{i} \frac{\tau_{j}}{A_{j}} \Delta_{i} + \sqrt{\frac{TA\logterm}{m}} + 16 A^2.
    \end{align*}
    where the last is since,
    \begin{align*}
        \sum_{i\in\largedelta}\sum_{j=1,j\notin\goodtau}^{i}\frac{\tau_{j}}{A_{j}}\Delta_{i} 
            \leq \sum_{i\in\largedelta} \sum_{j=1}^{i}\frac{32A_j}{A_{j}} 
            \leq 32 \sum_{i\in\largedelta} \sum_{j=1}^{i}1 
            \leq 32 \frac{A(A-1)}{2} \leq 16 A^2.
    \end{align*}
\end{proof}

\begin{proof}[\bf Proof of \Cref{thm:clock-main}]
    Let us write again the Right-Hand-Side of \Cref{eq:clock-regret-by-tau}
    \begin{equation*}
        2 \sum_{i\in\largedelta} \sum_{j=1,j\in\goodtau}^{i} \frac{\tau_{j}}{A_{j}} \Delta_{i}
        +\sqrt{\frac{TA\logterm}{\graphsize}}
         .
    \end{equation*}
    Similarly to the proof of \Cref{thm:main}, we'll break the first sum in the Right-Hand-Side of \Cref{eq:clock-regret-by-tau} as
    \begin{align}
        \sum_{i\in\largedelta} \sum_{j=1,j\in\shorttau}^{i} \frac{\tau_{j}}{A_{j}} \Delta_{i}
        + \sum_{i\in\largedelta} \sum_{j=1,j\in\goodtau\setminus\shorttau}^{i} \frac{\tau_{j}}{A_{j}} \Delta_{i}
        \label{eq:clock-regret-by-tau-short-long}.
    \end{align}
    We can adjust \Cref{lem:true-tau-hold-constrains}, the only change is the constant of $1088$ instead of $256$. Using this adjusted lemma, we get that for every $i\in\largedelta$,
    \begin{align}
    \label{eq:clock bound by delta}
    \nonumber
    	\sum_{j=1,j\in\goodtau\setminus\shorttau}^{i}
            \frac{\tau_{j}}{A_{j}}\Delta_{i}
            & = \frac{\Delta_{i}}{m} \sum_{j=1,\tau_{j} \in \goodtau\setminus\shorttau}^{i} \frac{\tau_{j}}{A_{j}} m
            \\& \leq \frac{1088 \logterm}{m\Delta_{i}}  
            \\& \leq 1088 \sqrt{\frac{T\logterm}{mA}} \nonumber.
    \end{align}
    
    Similarly to the proof of \Cref{thm:main}, for the second term, for every $i$, using Cauchy–Schwarz inequality
    \begin{align*}
        \sum_{j=1,j\in\shorttau}^{i}
        \frac{\tau_{j}}{A_{j}}\Delta_{i} & \leq \Delta_{i}\sqrt{\sum_{j=1,j\in\shorttau}^{i} \frac{\tau_{j}^{2}}{A_{j}}} \sqrt{\sum_{j=1,j\in\shorttau}^{i} \frac{1}{A_{j}}}\\
                                         & \leq \Delta_{i} \sqrt{\frac{4\cdot 1088 \logterm}{\Delta_{i}^{2}} } \sqrt{\sum_{j=1}^{i} \frac{1}{A_{j}}}\\
                                         & \leq \sqrt{4352\logterm} \sqrt{\sum_{j=1}^{i} \frac{1}{A_{j}}},
    \end{align*}
    Using \Cref{lem:sum 1 over sqrt Aj}, summing over all actions we get the second term in \Cref{eq:clock-regret-by-tau-short-long} is bounded by $66 A \sqrt{\logterm}$. Combining this with the other terms in \Cref{eq:clock-regret-by-tau} yields the part of the bound corresponding to the good event.
    From \Cref{lem:complementary-good-event}, the complementary event of the good events adds no more than $1$ to the regret.
    We get,
    \begin{align*}
        \regret
        &\leq
        2\cdot 1088\sqrt{\frac{TA\logterm}{\graphsize}} + 2 \cdot 66 A \sqrt{\logterm}
        +\sqrt{\frac{TA\logterm}{\graphsize}}
         + 16 A^2
         +1
         \\& =
         2177 \sqrt{\frac{TA\logterm}{\graphsize}} + 132 A \sqrt{\logterm}
        +\sqrt{\frac{TA\logterm}{\graphsize}}
         + 16 A^2
         +1
    \end{align*}
\end{proof}

A problem-specific flavor of an individual regret bound can also be found:
The proof is similar to the proof of \Cref{thm:instance-dependant-regret}.

\begin{lemma}\label{lem:clock-instance-dependant-regret-by-tau}
    Under the good event, the regret of agent $v$ is bounded by
    
    \begin{equation}
        \regret
        \leq 2 \sum_{i\in [A]} \sum_{j=1,j\in\goodtau}^{i} \frac{\tau_{j}}{A_{j}} \Delta_{i}
        + 16 A^2.
    \end{equation}
    \begin{proof}
        The proof follows the same steps as \Cref{lem:clock-regret-by-tau}, but without splitting the analysis for small gaps and large gaps.

    \end{proof}
\end{lemma}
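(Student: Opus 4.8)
The plan is to reproduce the argument of \Cref{lem:clock-regret-by-tau} essentially verbatim, simply omitting the step where the elimination indices are partitioned into large- and small-gap sets. Concretely, I would begin from the exact regret decomposition $\regret = \sum_{i\in[A]} \indcount_{t_{i+1}}(a_i)\Delta_i$, which expresses the expected regret as a sum, over each action $a_i$ eliminated at the end of stage $i$, of the number of times $v$ plays $a_i$ multiplied by its gap $\Delta_i$ (the optimal action contributes zero, so the sum is effectively over the at most $A-1$ suboptimal actions).

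The first key step is to control the per-action play count $\indcount_{t_{i+1}}(a_i)$. Under the good event $G^3$ we have $\indcount_{t_{i+1}}(a_i) \le 2\sum_{k=1}^{t_{i+1}-1} p_k^{v}(a_i) + 12\logterm = 2\sum_{j=1}^{i}\frac{\tau_j}{A_j} + 12\logterm$, exactly as in the opening display of \Cref{lem:regret-by-tau}, since within stage $j$ the policy $p_k^{v}$ is uniform over the $A_j$ active actions. Crucially, this bound is unaffected by the CONGEST restriction: $G^3$ relates only $v$'s own play counts to $v$'s own policy, so nothing about the reduced message size enters here. Substituting gives $\regret \le 2\sum_{i\in[A]}\sum_{j=1}^{i} \frac{\tau_j}{A_j}\Delta_i + 12A\logterm$.

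Next I would split the inner stage sum according to the ``good interval'' criterion, which in this section means $\tau_j > 32A$. The good-interval part is left intact and constitutes the leading term $2\sum_{i\in[A]}\sum_{j=1,j\in\goodtau}^{i} \frac{\tau_j}{A_j}\Delta_i$. For the complementary short stages ($\tau_j \le 32A$) I would use $\Delta_i \le 1$ together with the identical crude counting estimate carried out at the end of \Cref{lem:clock-regret-by-tau}, which bounds $\sum_{i\in[A]}\sum_{j=1,j\notin\goodtau}^{i}\frac{\tau_j}{A_j}\Delta_i \le 8A^2$ because only $O(A^2)$ stage–index pairs occur and each short stage contributes $O(1)$; doubling this yields the additive $16A^2$ term.

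The only genuine difference from \Cref{lem:clock-regret-by-tau} is that, because I never restrict the outer sum to the large-gap set $\largedelta$, I never isolate the small-gap contribution into a separate $\sqrt{TA\logterm/m}$ term; keeping the full sum $\sum_{i\in[A]}$ directly produces the claimed bound $2\sum_{i\in[A]}\sum_{j=1,j\in\goodtau}^{i}\frac{\tau_j}{A_j}\Delta_i + 16A^2 + 12A\logterm$. I expect no real obstacle here, as the derivation is almost a line-by-line copy; the single point worth verifying is that the $G^3$ estimate transfers unchanged to the low-communication protocol, which it does precisely because it concerns an agent's own sampling frequency rather than the information she receives from others.
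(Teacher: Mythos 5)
Your overall route is exactly the paper's: the paper's proof of this lemma is literally ``follow \Cref{lem:clock-regret-by-tau} without the large/small-gap split,'' and your steps (the decomposition $\regret=\sum_{i\in[A]} \indcount_{t_{i+1}}(a_i)\Delta_i$, the $G^3$ bound $\indcount_{t_{i+1}}(a_i)\le 2\sum_{j\le i}\tau_j/A_j+12\logterm$ together with the correct observation that $G^3$ concerns only $v$'s own plays and is therefore untouched by the CONGEST restriction, and the split of the inner sum at $\goodtau$) reproduce it faithfully.

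However, your justification of the $16A^2$ term has a gap. In this section the good intervals are redefined as $\{j:\tau_j>32A\}$, so a short stage only satisfies $\tau_j\le 32A$, not $\tau_j\le 32A_j$; hence a single pair $(i,j)$ contributes $\frac{\tau_j}{A_j}\Delta_i\le \frac{32A}{A_j}$, which can be as large as $32A$ when few actions remain active --- not $O(1)$ as you claim. Counting $O(A^2)$ pairs times $O(1)$ therefore does not go through as stated. (The paper's own proof of \Cref{lem:clock-regret-by-tau} has the same blemish: it writes $\tau_j/A_j\le 32A_j/A_j$, which its definition of the good intervals does not support.) The clean fix is to swap the order of summation: since $\Delta_i\le 1$ and there are exactly $A_j$ indices $i\ge j$,
\begin{align*}
\sum_{i\in[A]}\sum_{j=1,\,j\notin\goodtau}^{i}\frac{\tau_j}{A_j}\Delta_i
=\sum_{j\notin\goodtau}\frac{\tau_j}{A_j}\sum_{i\ge j}\Delta_i
\le \sum_{j\notin\goodtau}\frac{\tau_j}{A_j}\,A_j
=\sum_{j\notin\goodtau}\tau_j\le 32A^2,
\end{align*}
which restores the $O(A^2)$ additive term, albeit with constant $32$ (or $64$ after the factor of $2$) rather than the stated $16$; that stated constant is inherited from the paper's flawed per-pair step and should be adjusted in any careful write-up. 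With this replacement your proof is complete and otherwise identical to the paper's.
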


\begin{lemma}\label{lem:clock-instance-dependant-good-tau}
Under the good event, the following holds,
\begin{equation*}
    \sum_{i\in [A], \Delta_i > 0} \sum_{j=1,j\in\goodtau}^{i} \frac{\tau_{j}}{A_{j}} \Delta_{i}
    \leq
    \sum_{i\in[A],\Delta_i > 0}\frac{1088\logterm}{\graphsize \Delta_i}
    + 66 A \sqrt{\logterm}.
\end{equation*}
\begin{proof}
    The proof follows the same steps as the proof of \Cref{thm:clock-main}, but treating all non-optimal actions the same, and stopping the analysis in \Cref{eq:clock bound by delta}, i.e., without bounding the expression with $\sqrt{T\logterm/\graphsize A}$.
\end{proof}
\end{lemma}

\begin{proof}[\bf Proof of \Cref{thm:clock-main} (instance-dependent bound)]
    The proof follows by combining the results of \Cref{lem:clock-instance-dependant-regret-by-tau} and \Cref{lem:clock-instance-dependant-good-tau}, and with the fact from \Cref{lem:complementary-good-event} that the complementary event of the good events adds no more than $1$ to the regret. We get,
    \begin{align*}
        \regret
        &\leq 2 \sum_{i\in [A]} \sum_{j=1,j\in\goodtau}^{i} \frac{\tau_{j}}{A_{j}} \Delta_{i}
         + 16 A^2
         + 12 A \logterm
         + 1
        \\& \leq
        2\cdot \sum_{i\in[A], \Delta_i > 0}\frac{1088\logterm}{\graphsize \Delta_i}
        + 2\cdot 66 A\sqrt{\logterm}
        + 16 A^2
        +1
        \\&\leq
        \sum_{i\in[A], \Delta_i > 0}\frac{2176\logterm}{\graphsize \Delta_i}
        + 132A\sqrt{\logterm}
        + 16 A^2
        +1.
    \end{align*}

\end{proof}

\newpage
\subsection{Communication Cost - Low Number of Messages}

\begin{algorithm}[h]
\begin{algorithmic}[1]
\caption{\createclusters}
\label{alg:clusters}
\STATE \textbf{Input:} Communication tree $\mathcal{T} = (V,E)$ with root $w\in V$, stage $i$
\STATE \textbf{Define: } $V_u := \{v\in V \mid d_\mathcal{T}(u,v) \leq 2^i - 1\}$ \label{create-clusters: V_w}
\STATE Initialize the set of clusters: $\clustering \leftarrow \emptyset$
\FOR{each $u$ such that $\distancetree(w,u) = 2^i$}
    \STATE Let $\mathcal{T}_u = (V_u,E_u)$ be the sub-tree of $\mathcal{T}$ rooted at $u$
    \IF{$|V_u| < 2^i$}
        \STATE $V_w \leftarrow V_w \cup V_u$
    \ELSE
        \STATE // then $|V_u| = 2^i$
        \STATE run $\createclusters$ on $\mathcal{T}_u$ with root $u$ and stage $i$, and get output $\clustering_u$
        \STATE Add $\clustering_u$, the set of clusters when $u$ is the root, to the set of all clusters: $\clustering \leftarrow \clustering \cup \clustering_u$ 
    \ENDIF
\ENDFOR

\STATE $\clustering \leftarrow \clustering \cup \{V_w\}$
\STATE \textbf{return} $\clustering$
\end{algorithmic}
\end{algorithm}

\begin{definition}
    \label{def:clustering}
    Let $\calT$ be a tree, $\calT = (V,E)$. We say that $\clustering$ is a \textit{clustering} of $\calT$ if $\clustering$ is a partition of $V$ and each set of nodes $W \in \clustering$ is connected.
\end{definition}

\begin{definition}
    \label{def:cluster-root}
    Let $\calT$ be a tree, $\calT = (V,E)$ and $\clustering$ a clustering of $\calT$.
    For each cluster $W \in \clustering$ we define the \textit{cluster's root} of $W$ to be the root node of the sub-tree that is defined by $W$.
    The cluster's root of the cluster in which agent $v$ resides in the phase $i$ is denoted with $\clusterroot_i(v)$, or simply $\clusterroot$ when the context is clear.
    Every connected sub-graph of a tree is a tree, so the existence and uniqueness immediate follows.
\end{definition}

\begin{definition}
    \label{def:cluster-boundary}
    Let a tree $\calT = (V,E)$ and $\clustering$ a clustering of $\calT$. For each cluster $W \in \clustering$, we say that $u\in W$ is a \textit{cluster boundary} if $u$ has no children in $W$.
    When $u$ is a cluster boundary node for a cluster in which agent $v$ resides in the phase $i$, we denote it with
    $\clusterboundary_i(v)$, or simply $\clusterboundary$ if the context is clear.
\end{definition}

\begin{lemma}
\label{lem:cluster-size}
    Let $\clustering$ be the clustering that $\createclusters$ outputs with stage $i$. The following properties hold:
    \begin{enumerate}
        \item Each cluster $W \in \clustering$ is at a size of at least $\min\{2^i,m\}$.
        \item For each cluster $W \in \clustering$ and its associated cluster root $w$, for any agent $u\in W$, $d_\mathcal{T}(w,u) \leq 2^{i+1}$.
    \end{enumerate}
    
\end{lemma}

\begin{proof}
    We will prove by induction on the number of times $\createclusters$ is called recursively. For the base case, assume that $\createclusters$ is called only once.
    For the first property, note that since $\createclusters$ is not called again, the only cluster added is $V_w$, which at this point is $V$ itself, and the size is $m$.

    For the second property, note that the only cluster can either contain nodes that are at most $2^i - 1$ distance from the root $w$ (added in line 2), or nodes that are added if there are not enough nodes to create a full cluster in the tail (when $|V_u| < 2^i$). For the latter, 
    since $|V_u| < 2^i$ for any $v \in V_u$, it follows that $d_\mathcal{T}(v,u) \leq 2^i$. Also $d_\mathcal{T}(w,u) = 2^i$ by definition, and thus $d_\mathcal{T}(w,v) \leq 2^{i+1}$.
    
    Moving to the induction step, assume that the properties hold whenever $\createclusters$ is called at most $n$ times. Now, consider a run where $\createclusters$ is called $n+1$ times.
    It means that every call for $\createclusters$ in this run which is not the outer call, holds the two properties.
    Now we will show that the outer call (the first call) to the algorithm holds these two conditions as well.
    For the first property, if a cluster is added in line 14, then since $V_w$ is at least of size $\min\{m,2^i\}$ (line 2) and can only increase in line 7, it satisfies the property. If it is added in line 11, then since we call $\createclusters$ on a sub-tree with at least $2^i$ nodes in line 10, by the induction assumption any cluster that it outputs is of size at least $2^i$.
    
    For the second property, similar to the base case, for the cluster added in line 14, any node is added either in line 2 (in which case it satisfies the condition) or in line 7, in which case since $|V_u| < 2^i$, for any $v \in V_u$, $d_\mathcal{T}(v,u) \leq 2^i$. And for the latter $d_\mathcal{T}(w,v) \leq d_\mathcal{T}(w,u) + d_\mathcal{T}(u,v)\leq 2^{i+1}$. Since the induction assumption hold for $\createclusters$ in line 10, the all clusters that are added in line 11 satisfy the second property.

\end{proof}

\begin{lemma}
    \label{lemma: clusters root char}
    Let $\clustering$ be the clustering that $\createclusters$ outputs at phase $i$ when run on the tree $\calT$ rooted at $w$. 
    $u\ne w$ is a cluster root \textbf{\textit{if and only if}} $d_{\calT}(u,w)= k\cdot 2^i$ for some $k\in\mathbb{N}$ and it has at least ${2^i - 1}$ descendants.
\end{lemma}

\begin{proof}

    Cluster roots are at distance $d_{\calT}(u,w)= k\cdot 2^i$ for some $k\in\mathbb{N} \cup \{0\}$ by construction: The initial cluster root is $w$ itself. At the first recursive level, cluster roots are $2^{i}$ away from $w$. At the second level, they are $2^{i}$ from a cluster root at the first level, that is, at a distance of $2\cdot 2^{i}$ from $w$. And so on for subsequent levels. Hence, if $u \ne w$ is a cluster root, then $d_{\calT}(u,w)= k\cdot 2^i$ for some $k\in\mathbb{N}$. Furthermore, $u$ must have served as a root at line 10 of the algorithm. Specifically, under the ``else" condition, $u$ is guaranteed to have at least $2^i - 1$ descendants.
    
    Conversely, if $d_{\calT}(u,w)= k\cdot 2^i$ for some $k\in\mathbb{N}$, it will be iterated over at some recursion level in line 4. If it also has ${2^i - 1}$ descendants, then it will reach line 10 as the tree root and thus will be a cluster root.

\end{proof}

\begin{lemma}
    \label{lemma: clusters are containd}
    Let $\clustering_i$ and $\clustering_{i+1}$ be the clustering that $\createclusters$ outputs at phases $i$ and $i+1$ respectively. If $U \in \clustering_i$ then $U$ is contained in some cluster in $\clustering_{i+1}$.
\end{lemma}

\begin{proof}
    Let $U \in \clustering_i$, let $u$ its cluster root and let some $v\in U$.

    \begin{itemize}
        \item If $u$ is also a cluster root at $\clustering_{i+1}$. We will show that $v$ is in $u$'s cluster also at phase $i+1$.
        \begin{itemize}
            \item If $d(u,v) \leq 2^i - 1$, then since $u$'s cluster at phase $i$ contains $\{v'\in V \mid d_\mathcal{T}(u,v') \leq 2^{i+1} - 1\}$, it also contains $v$.
            \item Otherwise, $v\in U$ is of distance at least $2^{i}$ from $u$. Let $v'\in U$ be the node between $v$ and $u$ that is of distance exactly $2^i$ from $u$. $v'$ is not a cluster root (since there is a unique cluster root in $U$). Thus, by \Cref{lemma: clusters root char}, it has no more than $2^i - 1$ decedents. Therefore, $v'$ is not a cluster root also in phase $i+1$. We conclude that there are no cluster roots between $v$ and $u$ also at phase $i+1$, meaning that $v$ must be in $u$'s cluster also in phase $i+1$.
        \end{itemize}
        \item If $u$ is not a cluster root at $\clustering_{i+1}$
        \begin{itemize}
            \item If $u$ is at distance $k'\cdot 2^{i+1}$ ($k'\in\mathbb{N}$) from $w$ then since $u$ is not a cluster root at phase $i+1$ we know that it has less than $2^{i+1} - 1$ descendants (otherwise, \Cref{lemma: clusters root char}, it was still be a cluster root). In this case, none of the descendants of $u$ are cluster roots. In particular, they are all descendants of the same cluster root at phase $i+1$ and thus contained in the same cluster as $u$ itself.
            \item Otherwise, we want to claim that each $v\in U$ is not a cluster root in phase $i$. Assume by contradiction that $v\in U$ is a cluster root at phase $i+1$, then by \Cref{lemma: clusters root char},  $d(v,w) = k'\cdot 2^{i+1}$ ($k'\in\mathbb{N}$) and has at least $2^{i+1} - 1$ descendants. In this case, again by \Cref{lemma: clusters root char}, $v$ was also a cluster root at phase $i$. In particular $v\notin U$ since $u$ is the unique cluster root in $U$. A contradiction.
            Since $U$ does not contain cluster roots at phase $i+1$ they all must be descendants of the same cluster root at phase $i+1$ and thus contained in the same cluster as $u$ itself.
        \end{itemize}

    \end{itemize}
\end{proof}


\begin{remark}
    Throughout this section, we assume that, with a lack of other context, we always refer to a specific agent $v$.
\end{remark}

\begin{remark}
    The good event in this section is the same as $G^1$ in \Cref{def:good-event-1}, and from now on we assume it holds.
\end{remark}

\begin{definition}
    The length of the cluster is the distance between the $\clusterroot$ and the farthest $\clusterboundary$.
\end{definition}

\begin{definition}
    A phase in the context of $\coopsecommcost$ is all the timesteps between the changes of the counter $i$.
    Specifically, a phase $i$ is all timesteps $[3\cdot(2^{i}-1), 3\cdot(2^{i+1}-1))$, and for the last phase, it is $[3\cdot(2^{i}-1), T]$ (for $i = \ceil{\log(T/6)}-1$).
\end{definition}

\begin{definition}
    The active set of actions of $v$ at phase $i$ is the set of actions that $\clusterroot$ sent in that phase, and it is denoted by $\calA_i$. We denote its size by $\abs{\calA_i} = A_i$.
    An action $a$ is denoted as "active" in phase $i$ if it belongs to $\calA_i$.
    The last phase of $a$ is sometimes denoted with $i_a$.
    We denote $A_0:=A$ and $A_{i'+1}:=0$ where $i'$ is the global last phase.
\end{definition}

\begin{remark}
    Notice that while some actions that are not active for $v$ in phase $i$ may be played in phase $i$ by $v$. This is since the active set of actions didn't propagate yet to $v$.
    Eventually, the set of active actions will arrive $v$, and in the last third of the phase $v$ will play only active actions.
\end{remark}

\begin{lemma}
    \label{lemma:active set decrease}
    The set of active actions (for vertex $v$) is non-increasing. I.e., for any phase $i$, $\calA_i \supseteq \calA_{i+1}$.
\end{lemma}
\begin{proof}
    Let $u_i$ and $u_{i+1}$ be $v$'s cluster root at phases $i$ and $i+1$, respectively. 
    
    If $u_i = u_{i+1}$ then  by definition $\calA_{i+1} = \calA_i \backslash B$ where $B$ is the set of actions eliminated in that phase (either due to the elimination step of due to elimination messages). In particular $\calA_{i+1} \subseteq A_i$. 
    
    If $u_{i+1}\ne u_i$, then by \Cref{lemma: clusters are containd}, since $v$ and $u_i$ are on the same cluster at phase $i$, they are also on the same cluster in phase $i+1$. That is $u_{i+1}$ is also the cluster root of $u_i$ in phase $i+1$. In particular, at the beginning of the phase $u_i$ pass a message to $u_{i+1}$ with the aggregated eliminations which contain at least all of the inactive actions at phase $i$. In particular, if an action was inactive at phase $i$ it will be also inactive at phase $i+1$. That is, $\calA_{i+1} \subseteq A_i$.
\end{proof}

\begin{lemma}
\label{lem:comm-cost-each-agent-play}
    When all agents play $\coopsecommcost$ (\Cref{alg:coop-SE-comm-cost}) then for every phase $i\geq \log_2(A)$ except the last phase, and for every active action in that phase $a$, each agent in the cluster of $v$ plays action $a$ at least $2^i/A_i$ times in phase $i$.
\end{lemma}

\begin{proof}
    In the middle of each phase, there are $2^{i+1}$ rounds in which the agent plays the active actions $\mathcal{A}_i$.
    The number of times the agent plays active action is at least $\floor{2^{i+1}/A_i}$.
    We get
    \[
        \floor{\frac{2^{i+1}}{A_i}}\geq \frac{2^{i+1}}{A_i} - 1\geq \frac{2^i}{A_i},
    \]
    where the last inequality is from $i\geq \log_2(A_i)$.
\end{proof}

\begin{lemma}
\label{lem:comm-cost-bound-number-of-samples}
    When all agents play $\coopsecommcost$ (\Cref{alg:coop-SE-comm-cost}) then  after the gathering part of phase $i$ s.t. $i\geq \log_2(A)+1$, the root of the cluster has at least $ 2^{i-1}\cdot \min\{2^{i-1},m\}/A_{i-1}$ samples for every active action in that phase.
\end{lemma}

\begin{proof}
    Let $a$ be an active action in phase $i$, i.e., $a\in \calA_i$.
    From \cref{lemma:active set decrease}, $a$ was active at $\calA_{i-1}$.
    From \Cref{lem:comm-cost-each-agent-play}, at phase $i-1$ each agent in the same cluster of $v$ in the previous stage $i-1$ plays at least $2^{i-1}/A_{i-1}$ times action $a$.
    The size of the cluster at phase $i-1$ is at least $\min\{2^{i-1},m\}$, from \Cref{lem:cluster-size}.
    By 2. in \Cref{lem:cluster-size} the distance between each agent in the cluster and the cluster root  is at most $2^{i+1}$, hence all agents contribute all their samples.
    Together it completes the proof.
\end{proof}

\begin{lemma}
\label{lem:comm-cost-gap-bound}
    Assume that $i$ is a phase in which action $a$ is active, and $i\geq \log_2(A)+1$.
    Then
    \[
        \Delta_a \leq 4 \sqrt{2\logtermval\cdot \frac{A_{i-1}}{2^i\cdot \min\{2^{i},m\}}}.
    \]
\end{lemma}

\begin{proof}
    Since action $a$ wasn't eliminated at phase $i$ for the agent $v$, it means the $\clusterroot$ of $v$ at this phase didn't eliminate it in the beginning of phase $i$.

    From \Cref{lem:comm-cost-bound-number-of-samples}, we know that the counters of $\clusterroot$, when the $\clusterroot$ is doing the eliminations at phase $i$, are at least $n(a)\geq 2^{i-1}\cdot \min\{2^{i-1},m\}/A_{i-1}$, $n(a^\star)\geq 2^{i-1}\cdot \min\{2^{i-1},m\}/A_{i-1}$.
    Notice that $2^{i-1}\cdot \min\{2^{i-1},m\}/A_{i-1}\geq \frac{1}{4}\cdot 2^{i}\cdot \min\{2^{i},m\}/A_{i-1} $.
    
    The root didn't eliminate the action $a$, hence
    \[
        \mu_a + \sqrt{\frac{2\logterm}{n(a)}}\geq \mu_{a^\star} - \sqrt{\frac{2\logterm}{n(a)}}.
    \]
    We get
    \[
    \Delta_a \leq 4 \sqrt{2\logterm\cdot \frac{A_{i-1}}{2^i\cdot \min\{2^{i},m\}}}.
    \]

\end{proof}

\begin{lemma}
\label{lem:comm-cost-num-play-each-action}
    Assume that $i$ is the last phase in which action $a$ is active, and $i \geq \log_2(A)$.
    Then the number of times $v$ plays action $a$ in phases $j\geq \log_2(A) $ is at most
    \[
        48\frac{2^i}{A_i}.
    \]
\end{lemma}

\begin{proof}
    Let us analyze each sub-phase part of a phase $j$ in which action $a$ is active.
    In the first third of the phase, where agents gather the information in the cluster and send it to the root, the agent plays action $a$ at most $\ceil{2^{j+1}/A_{j-1}}$. From \cref{lemma:active set decrease} we have $A_{j}\leq A_{j-1}$, than in this sub-phase the agent plays action $a$ at most $\ceil{2^{j+1}/A_{j}}$ times.
    In the second third of the phase the agent get the new active set of actions, $\calA_j$.
    So part of this third is with $\calA_j$ and part with $\calA_{j-1}$. Then we can bound the number of plays in this part with $\ceil{2^{j+1}/A_{j}}$.
    In the last third the agent plays actions from $\calA_j$, then this third contributes no more than $\ceil{2^{j+1}/A_{j}}$ samples.
    We get that in each phase $j$ where action $a$ is active, agent $v$ plays action $a$ at most $3\ceil{2^{j+1}/A_{j}}$.
    
    If $i$ is not the global last phase, it is possible that $v$ plays action $a$ at phase $i+1$.
    In this case, $v$ plays only this action no more than $2\ceil{2^{i+2}/A_i}\leq 3\ceil{2^{i+2}/A_i} $. Notice that the denominator has $A_i$ and not $A_{i+1}$, since $v$ didn't eliminate any action in this round yet.

    Since $j\geq \log_2(A)$ we get $2^{j+1}/A_j \geq A/A_j \geq 1$.
    Then we get $\ceil{2^{j+1}/A_j}\leq 2\cdot 2^{j+1}/A_j\leq 2\cdot 2^{j+1}/A_i$.
    Hence, the agents play this action in phases $j\geq \log_2(A)$ no more than
    \[
    3(\sum_{j=1}^{i}2\frac{2^{j+1}}{A_i} + 2\frac{2^{i+2}}{A_i})
    \leq 3(\frac{4}{A_i}2^{i+1}+4\frac{2^{i+1}}{A_i})
    \leq 48\frac{2^i}{A_i}.
    \]
\end{proof}

\begin{lemma}
\label{lem:comm-cost-regret-by-action-no-cases}
    Let $i$ be the last phase in which action $a$ is active, and $i\geq \log_2(A) + 1$.
    The regret that action $a$ contributes for phases $j\geq \log_2(A)$ is bounded by
    \begin{equation}
        \label{eq:comm-cost-regret-of-action}
        64\sqrt{2\logtermval}\frac{1}{A_i}\sqrt{\frac{2^i \cdot A_{i-1}}{ \min\{2^{i},m\}}}.
    \end{equation}
\end{lemma}

\begin{proof}
    Let $i$ be the last phase in which action $a$ is active.

    From \Cref{lem:comm-cost-num-play-each-action}, the regret is bounded by
    \[
        \Delta_a \cdot 48\frac{2^i}{A_i}.
    \]
    From \Cref{lem:comm-cost-gap-bound},
    \[
    \Delta_a \leq 4 \sqrt{2\logterm\cdot \frac{A_{i-1}}{2^i\cdot \min\{2^{i},m\}}}.
    \]
    we get
    \[
    \Delta_a \cdot 16\frac{2^i}{A_i}\cdot 4 \sqrt{2\logterm\cdot \frac{A_{i-1}}{2^i\cdot \min\{2^{i},m\}}} = \Delta_a\cdot 64\sqrt{2\logterm}\frac{1}{A_i}\sqrt{\frac{2^i \cdot A_{i-1}}{ \min\{2^{i},m\}}}.
    \]
    Since $\Delta_a\leq 1$, we get the full result.
\end{proof}

\begin{lemma}
\label{lem:comm-cost-regret-of-phase}
    Let $i\geq \log_2(A) + 1$ be a phase.
    The regret from phases $j\geq\log_2(A)$ of all actions that $i$ was their last phase is bounded by
    \begin{equation}
    \label{eq:comm-cost-regret-all-last-active-phase}
    64\sqrt{2\logtermval}\frac{A_i-A_{i+1}}{A_i}\sqrt{\frac{2^i \cdot A_{i-1}}{ \min\{2^{i},m\}}}.
    \end{equation}
\end{lemma}

\begin{proof}
    From \Cref{lem:comm-cost-regret-by-action-no-cases},
    the regret from phases $j\geq \log_2(A)$ of an action that $i$ was its last active phase is bounded by
    \[
    64\sqrt{2\logterm}\frac{1}{A_i}\sqrt{\frac{2^i \cdot A_{i-1}}{ \min\{2^{i},m\}}}.
    \]
    There are $A_i-A_{i+1}$ such actions.
    Hence, we get the results.
\end{proof}

\begin{definition}
    Let us denote the regret bound of an action from phases $j\geq\log_2(A)$, \Cref{eq:comm-cost-regret-of-action}, with $b_j$.
    I.e.,
    \[
    b_j := 64\sqrt{2\logtermval}\frac{1}{A_j}\sqrt{\frac{2^j \cdot A_{j-1}}{ \min\{2^{j},m\}}}.
    \]
\end{definition}

\begin{definition}
    Denote the expression in \Cref{eq:comm-cost-regret-all-last-active-phase} (the regret bound of all actions that their last active phase is $j$) with $B_j$.
    I.e.,
    \begin{equation}
    \label{eq:comm-cost-bj-bound-phase-j}
        B_j := (A_j-A_{j-1})b_j = 64\sqrt{2\logtermval}\frac{A_j-A_{j+1}}{A_j}\sqrt{\frac{2^j \cdot A_{j-1}}{ \min\{2^{j},m\}}}.
    \end{equation}
\end{definition}

The following lemma captures the core insight of our amortized analysis.
It bounds the regret in high-ratio phase with low-ratio phase.
Formally,

\begin{lemma}
\label{lem:comm-cost-regret-amortizing-bi-bj}
    Let $j\geq \log_2(A)+1$ be a phase such that $A_{j-1} > 2A_j$.
    Let $i$ be the first phase such that
    \begin{enumerate}[label=(\roman*)]
        \item $i\geq \log_2(A)$.
        \item The phase $i$ is the first such that there exists a sequence $i$ to $j$ such that $A_{i} > 2A_{i+1} > \dots > 2^{j-i-1}A_{j-1} > 2^{j-i}A_{j}$.
    \end{enumerate}
    Then,
    \[
        B_j\leq 4B_i.
    \]    
\end{lemma}

\begin{proof}
First we prove that $i\neq j$. 
The existence of $i$ is immediate, from its definition.
It is either $i$ is the closest phase to $j$ that has $A_{i-1}/A_{i} \leq 2$ (breaking the sequence, while $i\leq j$) or that $i=\ceil{\log_2(A)}$.
Assume by contradiction that $i=j$.
We will see that $j-1$ holds these two conditions.
First, $j\geq 1 + \log_2(A)$, then $j-1 \geq \log_2(A)$.
Second, $A_{j-1}> 2A_j$ by the definition of $j$, a contradiction to the condition that says that $i$ is the first phase to start this sequence.
Therefore, $i=j$.

Since $i<i+1\leq j$, we get that $A_i/A_{i+1} > 2$.
Then $A_{i+1}/A_i\leq 1/2$, and $1- A_{i+1}/A_i \geq 1/2$. We will use this inequality later in the proof.

The ratio $B_j/B_i$ is
\begin{align*}
    \frac{\frac{A_{j} - A_{j + 1}}{A_{j}}\cdot\sqrt{\frac{2^{j}A_{j - 1}}{\min\{2^j,m\}}}}
        {\frac{A_{i} - A_{i + 1}}{A_{i}}\cdot\sqrt{\frac{2^{i}A_{i - 1}}{\min\{2^i,m\}}}}
    &  = \frac{ 1 - \frac{A_{j + 1}}{A_{j}}}
                { 1 - \frac{A_{i + 1}}{A_{i}} }
        \sqrt{2^{j - i} \frac{A_{j - 1}\min\{2^i,m\}}{A_{i - 1}\min\{2^j,m\}}}
    \\& \leq \frac{ 1 - \frac{A_{j + 1}}{A_{j}}}
                { 1 - \frac{A_{i + 1}}{A_{i}} }
        \sqrt{2^{j - i} \frac{A_{j - 1}}{A_{i - 1}}} \tag{a} \label{ineq:comm-cost-first}
    \\&  \leq 2\sqrt{2^{j - i} \frac{A_{j - 1}}{A_{i - 1}}} \tag{b} \label{ineq:comm-cost-second}
    \\&  \leq 2\sqrt{2\frac{A_{i}}{A_{i - 1}}} \leq 4 \tag{c} \label{ineq:comm-cost-third}
\end{align*}

Where the first inequality \eqref{ineq:comm-cost-first} is since $\frac{\min\{2^i,m\}}{\min\{2^j,m\}}\leq 1$ as $i<j$ and $\min\{2^x,m\}$ is increasing with $x$.

The next inequality \eqref{ineq:comm-cost-second} used the facts that $1-\frac{A_{i+1}}{A_{i}}\geq\frac{1}{2}$,
$1-\frac{A_{j+1}}{A_{j}}\leq1$, and $2^{j-i-1}A_{j-1}\leq A_{i}$.
We showed earlier that $1-\frac{A_{i+1}}{A_{i}}\geq\frac{1}{2}$.
The inequality $1-\frac{A_{j+1}}{A_{j}}\leq1$ holds since $\frac{A_{j+1}}{A_{j}}\geq 0$ (even where $j$ is the global last phase, there we defined $A_{j+1}:= 0$),
and $2^{j-i-1}A_{j-1}\leq A_{i}$ holds because of the definition of $i$ and from the fact that $i\neq j$.
The last inequality \eqref{ineq:comm-cost-third} uses the fact  that $A_{i}\leq A_{i-1}$ by \cref{lemma:active set decrease}.
\end{proof}

The following lemma bounds the regret of all high ratio phases ($A_{i-1}/A_i > 2$), with other low ratio phases.
Each phase that has low ratio between the previous and the current number of actions ($A_{i-1}/A_i\leq 2$) is paying on a phase with high ratio. But there might be a sequence of phases that has high ratio, so one phase with low ratio can't pay for the rest by multiplying just with constant. Since there are at most $\log_2(A)$ high ratio phases, we can bound the regret of all high ratio phases with low ratio phases.
Formally we get,

\begin{lemma}
\label{lem:comm-cost-logA-mult}
    Let us denote the set of phases with low ratio between the number of actions.
    Specifically, denote $\mathcal{I} := \{ i \in \NaturalOne | i\geq \log_2(A), A_{i-1}/A_i \leq 2 \}\cup \{\ceil{\log_2(A)}\}$.
    Then,
    \[
    \sum_{j\geq \log_2(A)} B_j \leq \sum_{i\in\mathcal{I}}(5\log_2(A)\cdot B_i),
    \]
    where 
    \[
        B_j := (A_j-A_{j-1})b_j.
    \]
\end{lemma}

\begin{proof}
    From \Cref{lem:comm-cost-regret-amortizing-bi-bj}, for each $j\geq \log_2(A)+1$ such that $A_{j-1}/A_{j} > 2$, there exists $i\in \mathcal{I}$ such that $B_j\leq 4 B_i$.
    The number of remaining actions decreases by more than a half each round.
    We have $A \geq A_i > 2^{j-i}A_j$, then $j-i < \log_2(A)$.
    Let $i,i+1,\dots,i+j$ be a sequence as defined in \Cref{lem:comm-cost-regret-amortizing-bi-bj}. I.e., $i\geq \log_2(A)$. And the phase $i$ is the first such that there exists a sequence $i$ to $j$ that holds $A_{i} > 2A_{i+1} > \dots > 2^{j-i-1}A_{j-1} > 2^{j-i}A_{j}$.
    Then 
    \[
        \sum_{k=i}^{j} B_k \leq B_i  + \sum_{k=i+1}^{j}4B_i \leq B_i + \log_2(A)4B_i \leq 5\log_2(A)B_i.
    \]
    Hence, each such sum of $\sum_{k=i}^{j} B_k$ a sequence $A_i > 2 A_{i+1}>\dots  > 2 A_j $ is bounded by the $5\log_2(A)B_i$, where $i$ is the beginning of the sequence.
    We can break the sum $\sum_{j\geq \log_2(A)}B_j$ into sums of such sequences, and the results follows.
\end{proof}

\begin{lemma}
\label{lem:comm-cost-actions-eliminated-first-phases}
    All the actions for which their last active phase $i$ is smaller or equal to $\log_2(A)$ contribute to the regret no more than $24\cdot A$.
\end{lemma}

\begin{proof}
    In the phase $j$ there are $3\cdot 2^{j+1}$ timesteps.
    An action that its last active phase is $i$ can be played until the stage $i+1$, included.
    So until the phase $i+1$ the number of timesteps is at most
    \[
        3\sum_{j=1}^{i+1}2^{j+1}=6\sum_{j=1}^{i+1}2^j=6\cdot (2^{i+2}-1).
    \]
    Therefore, for actions that were eliminated at phases smaller than $\log_2(A)$ we get that the regret that is contributed from all these actions is bounded by
    \[
        6\cdot (2^{\log_2(A)+2}-1) \leq 6\cdot 2^{\log_2(A)+2}=24\cdot A.
    \]
\end{proof}

\begin{lemma}
\label{lem:comm-cost-bound-regret-actions-low-ratio}
    Let us denote with $\mathcal{I}^+$ the set of low-ratio phases, without the phase $\ceil{\log_2(A)}$.
    Specifically, denote $\mathcal{I}^+ := \{ i \in \NaturalOne | i\geq \log_2(A)+1, A_i/A_{i-1}\leq 2 \}$.
    Let $\actionlowratio$ be the set of action such their last active phase is in $\mathcal{I}^+$.
    Then we get
    \[
    \sum_{a\in\actionlowratio}b_{i_a} \leq \sum_{a\in\calA}(\frac{1024\logterm}{\Delta_a\cdot m}) + 157 A \logtermval.
    \]
\end{lemma}

\begin{proof}
    
    We first focus on actions $a$ for which $2^{i_a}<m$.
    We get that
    \[
    64\sqrt{2\logterm}\frac{1}{A_{i_a}}\sqrt{\frac{2^{i_a} \cdot A_{i_a-1}}{ \min\{2^{i_a},m\}}} =
    64\sqrt{2\logterm}\frac{\sqrt{A_{i_a-1}}}{A_{i_a}}
    \]
    Let us order all the actions in a weak linear order of which they were eliminated.
    For the simplicity of the notation, assume it is their order in $A$ (and $\calA=[A]$).
    Let us define two vectors of length $A$ each.
    
    \[
    u = \left(\sqrt{\frac{A_{i_a-1}}{A_{i_a}}} \right)_{a\in [A]},
    \]
    and
    \[
    v = \left(\sqrt{\frac{1}{A_{i_a}}}\right)_{a\in [A]}.
    \]

    With these vector notations we get
    \begin{align*}
        \sum_{a\in\actionlowratio,2^{i_a}< m} 64\sqrt{2\logterm}\frac{\sqrt{A_{i_a-1}}}{A_{i_a}}
        &\leq \sum_{a\in[A],2^{i_a}< m}64\sqrt{2\logterm}\frac{\sqrt{A_{i_a-1}}}{A_{i_a}}
        \\& \leq 64\sqrt{2\logterm} \abs{\langle u,v \rangle}.
    \end{align*}

    For $u$ we get
    \[
    \mynorm{u}_2^2 = \sum_{a}\frac{A_{i_a-1}}{A_{i_a}} \leq \sum_{a}A_{i_a-1}\leq A^2.
    \]
    For $v$ we get
    \[
    \mynorm{v}_2^2 = \sum_j\frac{1}{A_{i_a}}\leq 1 + \log A\leq 3\log A .
    \]
    where the first inequality is since
    \begin{align*}
     \sum_{a\in\calA} \frac{1}{A_{i_a}} 
                                    &\leq \sum_{i=1}^{A} \frac{1}{i}
                                    = 1 + \sum_{i=2}^{A} \frac{1}{i}
                                    \leq 1 + \intop_{1}^{A} \frac{1}{x}dx
                                    = 1 + \log A.
    \end{align*}
    The first inequality holds since if phase $i$ was the last active phase for $x$ actions, then $x/A_i \leq 1/(A_{i})+1/(A_{i}-1)+\dots + 1/(A_{i}-(x-1))$.
    The last inequality is since we can assume $A\geq2$.
    
    From Cauchy-Schwartz we get
    \[
    \abs{\langle u,v\rangle} \leq \mynorm{u}_2\cdot \mynorm{v}_2
    \leq\sqrt{A^2\cdot 3\log A} = A\sqrt{3\log A}.
    \]
    For all actions which their last active phase is smaller than $m$ we have that their contribution to the regret is no more than
    \begin{align*}
    64\sqrt{2\logterm}\cdot A\sqrt{3\log A}
    &\leq 157 A \logtermval.
    \end{align*}
    
    We now focus on actions $a$ for which $2^{i_a}\geq m$.

    \[
    b_{i_a} := 64\sqrt{2\logtermval}\frac{1}{A_{i_a}}\sqrt{\frac{2^{i_a} \cdot A_{i_a-1}}{ \min\{2^{i_a},m\}}} = 
    64\sqrt{2\logterm}\frac{1}{A_{i_a}}\sqrt{\frac{2^{i_a} \cdot A_{i_a-1}}{m}}
    \]
    
    Assume $2^i \geq m$.
    Then, from \Cref{lem:comm-cost-gap-bound}
    \[
    \Delta_a \leq 4 \sqrt{2\logterm\cdot \frac{A_{i-1}}{2^i\cdot m}}.
    \]
    Square everything and we will get 
    \[
    \Delta_a^2 \leq 32 \logterm\cdot \frac{A_{i-1}}{2^i\cdot m},
    \]

    \[
    2^i \leq 32 \logterm\cdot \frac{A_{i-1}}{\Delta_a^2\cdot m}.
    \]
    Substituting $2^{i_a}$ with the previous term and we get,
    \begin{align*}
        64\sqrt{2\logterm}\frac{1}{A_{i_a}}\sqrt{\frac{2^{i_a} \cdot A_{i_a-1}}{m}}
        &\leq 64\sqrt{2\logterm}\frac{1}{A_{i_a}}\sqrt{\frac{32\logterm\cdot A_{i_a-1}}{\Delta_a^2\cdot m}\frac{A_{i_a-1}}{m}}
        \\& = 64\logterm\sqrt{64}\frac{A_{i_a-1}}{A_{i_a}}\frac{ 1}{\Delta_a\cdot m}
        \\&\leq 64\logterm\sqrt{64}\cdot2\frac{1}{\Delta_a\cdot m}
        \\& = \frac{1024\logtermval}{\Delta_a\cdot m}.
    \end{align*}
    where the last inequality is since $a\in\actionlowratio$.
    Overall we get
    \begin{align*}
        \sum_{a\in\actionlowratio} 64\sqrt{2\logterm}\frac{1}{A_{i_a}}\sqrt{\frac{2^{i_a} \cdot A_{i_a-1}}{ \min\{2^{i_a},m\}}}
        & \leq \sum_{a\in\actionlowratio}(\frac{1024\logterm}{\Delta_a\cdot m}) + 157 A \logterm
        \\& \leq \sum_{a\in\calA}(\frac{1024\logtermval}{\Delta_a\cdot m}) + 157 A \logtermval.
    \end{align*}
\end{proof}

\begin{proof}[\bf Proof of \Cref{thm:comm-cost-regret}]
    We can bound the overall regret in the following way.
    First, the regret of the first $\floor{\log_2(A)}$ phases is bounded by $24A$, from \Cref{lem:comm-cost-actions-eliminated-first-phases}.
    For actions that were eliminated at phases $i\geq \log_2(A)$, from 
    \Cref{lem:comm-cost-regret-of-phase} the regret from phases $j\geq \log_2(A)$ of all these actions is bounded by
    \[
        \sum_{i\geq \log_2(A)}\left( 64\sqrt{2\logtermval}\frac{A_i-A_{i+1}}{A_i}\sqrt{\frac{2^i \cdot A_{i-1}}{ \min\{2^{i},m\}}}\right).
    \]
    
    Notice that this is is exactly $B_i$ from \Cref{eq:comm-cost-bj-bound-phase-j}.
    From \Cref{lem:comm-cost-logA-mult}, we get a bound for this part of the sum,

    \[
    \sum_{i\geq \log_2(A)}64\sqrt{2\logtermval}\frac{A_i-A_{i+1}}{A_i}\sqrt{\frac{2^i \cdot A_{i-1}}{ \min\{2^{i},m\}}} \leq \sum_{i\in\mathcal{I}}5\log_2(A)\cdot B_i.
    \]
    
    Converting the analysis into actions-based analysis and we get 
    \[
    \sum_{i\in\mathcal{I}}5\log_2(A)\cdot B_i = 5\log_2(A)\left( \sum_{a\in\actionlowratio} 64\sqrt{2\logterm}\frac{1}{A_i}\sqrt{\frac{2^i \cdot A_{i-1}}{ \min\{2^{i},m\}}} + 64\sqrt{2\logterm}\frac{1}{A_{\ceil{\log_2(A)}}}\sqrt{\frac{2^{\ceil{\log_2(A)}} \cdot A_{\ceil{\log_2(A)}-1}}{ \min\{2^{\ceil{\log_2(A)}},m\}}}\right),
    \]

    Where $\actionlowratio$ is the set of action in that their last active phase is in $\{ i \in \NaturalOne | i\geq \log_2(A) + 1, A_i/A_{i-1}\leq 2 \}$.

    Focusing on the part that belongs the phase $i=\ceil{\log_2(A)}$ we get
    \begin{align*}
    64\sqrt{2\logterm}\frac{1}{A_i}\sqrt{\frac{2^{\ceil{\log_2(A)}} \cdot A_{\ceil{\log_2(A)}-1}}{ \min\{2^{\ceil{\log_2(A)}},m\}}}
    & \leq 64\sqrt{2\logterm}\frac{1}{1}\sqrt{\frac{2A \cdot A}{ \min\{A,m\}}}
    \\& \leq 64\sqrt{2\logterm}\cdot\sqrt{2}A
    \\& = 256 A \sqrt{\logterm}
    \\&\leq 256 A \logtermval.
    \end{align*}
    
    For the rest of the actions, from \Cref{lem:comm-cost-bound-regret-actions-low-ratio} we know that
    \[
    \sum_{a\in\actionlowratio} 64\sqrt{2\logtermval}\frac{1}{A_{i_a}}\sqrt{\frac{2^{i_a} \cdot A_{i_a-1}}{ \min\{2^{i_a},m\}}}
    \leq \sum_{a\in\calA}(\frac{1024\logtermval}{\Delta_a\cdot m}) + 157 A \logtermval.
    \]
    
    Multiplying everything with $5\log_2(A)$ we get

    \begin{align*}
    &5\log_2(A)(\sum_{a\in\calA}(\frac{1024\logtermval}{\Delta_a\cdot m}) + (157 +256) A \logtermval)
    \\&= \sum_{a\in\calA}(\frac{5120\log_2(A)\logtermval}{\Delta_a\cdot m}) + 2065 A \log_2(A)\logtermval.
    \end{align*}
    The complementary event to the good event adds no more than $1$ to the regret. The overall regret is bounded by
    \[
    \regret \leq \sum_{a\in\calA}(\frac{5120\log_2(A)\logtermval}{\Delta_a\cdot m}) + 2065 A \log_2(A)\logtermval + 24A + 1.
    \]
    
\end{proof}

\begin{lemma}
\label{lem:comm-cost-number-messages}
    When all agents play $\coopsecommcost$ (\Cref{alg:coop-SE-comm-cost}) with spanning tree $\calT$ the number of messages each agent $v$ sends is no more than
    \[
        \lceil\log_2(T/6)\rceil \cdot \deg_{\calT}(v),
    \]
    where $\deg_{\calT}(v)$ is the degree of $v$ in the spanning tree graph (the number of neighbors).
\end{lemma}

\begin{proof}
    In each phase each agent sends messages in no more than $2$ timesteps: One message of the collected information from the boundary vertices ($\clusterboundary$) of the cluster. This message is sent to the parent agent. Note that the agent waits until all messages from her descendants arrive before she sends the message to the parent agent. This adds $1$ message for each phase.
    The other timesteps the agent sends messages is when she tells her descendants the set of active actions. This adds $\deg_{\calT}(v)-1$ messages for each phase.
    There are $\ceil{\log_2(T/6)}$ phases, and the result follows.
\end{proof}



\newpage
\section{Auxiliary Lemmas}

\begin{lemma}[Lemma F.4 in \cite{dann2017unifying}]
    \label{lem:dann}
     Let $\{ X_t \}_{t=1}^T$ be a sequence of Bernoulli random variables and a filtration $\calF_1 \subseteq \calF_2 \subseteq...\calF_T$ with $\bbP(X_t = 1\mid \calF_t) = P_t$, $P_t$ is $\calF_{t}$-measurable and $X_t$ is $\calF_{t+1}$-measurable. Then, for all $t\in [T]$ simultaneously, with probability $1-\delta$,
     \[
        \sum_{k=1}^t X_k \geq \frac{1}{2}\sum_{k=1}^t P_k -\log \frac{1}{\delta}.
     \]
     
\end{lemma}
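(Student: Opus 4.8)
The plan is to prove this via an exponential supermartingale combined with a maximal (Ville-type) inequality, which is the standard device for upgrading a one-shot Chernoff bound to a statement that holds uniformly over all $t$. First I would fix a parameter $\lambda > 0$ and introduce the process
\[
M_t = \exp\Big(-\lambda \sum_{k=1}^t X_k + (1 - e^{-\lambda})\sum_{k=1}^t P_k\Big), \qquad M_0 = 1 .
\]
The key computation is the conditional moment generating function of a Bernoulli variable: since $\bbP(X_k = 1 \mid \calF_k) = P_k$ with $P_k$ being $\calF_k$-measurable, we have $\bbE[e^{-\lambda X_k}\mid \calF_k] = 1 - P_k(1 - e^{-\lambda})$, and applying the elementary inequality $1 - x \leq e^{-x}$ yields $\bbE[e^{-\lambda X_k}\mid \calF_k] \leq \exp\big(-(1 - e^{-\lambda})P_k\big)$.

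Next I would verify that $(M_t)$ is a supermartingale. Because $X_k$ is $\calF_{k+1}$-measurable while $P_k$ is $\calF_k$-measurable, the correct filtration for $M_t$ is the shifted one $\mathcal{G}_t := \calF_{t+1}$, with respect to which $M_t$ is $\mathcal{G}_t$-measurable and $M_{t-1}$ is $\mathcal{G}_{t-1} = \calF_t$-measurable. Using the bound above together with the $\calF_t$-measurability of both $M_{t-1}$ and $P_t$,
\[
\bbE[M_t \mid \calF_t] = M_{t-1}\, e^{(1 - e^{-\lambda})P_t}\, \bbE[e^{-\lambda X_t}\mid \calF_t] \leq M_{t-1},
\]
so $(M_t)$ is a nonnegative supermartingale with $\bbE[M_0] = 1$.

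Then I would invoke Ville's maximal inequality for nonnegative supermartingales, which gives $\bbP(\sup_t M_t \geq 1/\delta) \leq \delta\,\bbE[M_0] = \delta$. Hence, with probability at least $1 - \delta$, we have $M_t < 1/\delta$ for every $t$ simultaneously; taking logarithms and rearranging yields, for all $t$,
\[
\sum_{k=1}^t X_k \geq \frac{1 - e^{-\lambda}}{\lambda}\sum_{k=1}^t P_k - \frac{1}{\lambda}\log\frac{1}{\delta}.
\]
Finally I would set $\lambda = 1$: the error term coefficient $1/\lambda$ becomes $1$, and since $1 - e^{-1} \geq \tfrac{1}{2}$ and $\sum_{k=1}^t P_k \geq 0$, the leading term is at least $\tfrac{1}{2}\sum_{k=1}^t P_k$, giving exactly $\sum_{k=1}^t X_k \geq \tfrac{1}{2}\sum_{k=1}^t P_k - \log(1/\delta)$, as claimed.

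The main obstacle is obtaining the ``for all $t$ simultaneously'' quantifier without paying an extra factor: a naive union bound over $t \in [T]$ would cost an additional $\log T$ term, so it is essential to route the argument through the maximal inequality applied to the single supermartingale $(M_t)$, rather than through a per-time Markov bound. A secondary point demanding care is the filtration bookkeeping --- the one-step-ahead measurability of $X_k$ forces the supermartingale to be indexed against the shifted filtration $\calF_{k+1}$ --- but once this is set up correctly, the remaining steps (the Bernoulli MGF bound, the supermartingale verification, and the choice $\lambda=1$) are entirely routine.
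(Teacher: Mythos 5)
Your proof is correct. The paper does not prove this lemma itself --- it imports it as Lemma F.4 of \cite{dann2017unifying} and uses it as a black box --- and your argument (the exponential supermartingale built from the conditional Bernoulli moment generating function, Ville's maximal inequality to obtain the ``for all $t$ simultaneously'' claim without a union bound, then $\lambda = 1$ together with $1 - e^{-1} \geq \tfrac{1}{2}$) is essentially the standard proof given in that reference, with the one-step-ahead measurability of $X_t$ correctly handled via the shifted filtration $\calF_{t+1}$.
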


\begin{lemma}[Consequence of Freedman’s Inequality, e.g., Lemma E.2 in \cite{cohen2021minimax}]
    \label{lem:cons-freedman}
     Let $\{ X_t \}_{t\geq 1}$ be a sequence of random variables, supported in $[0,R]$, and adapted to a filtration $\calF_1 \subseteq \calF_2 \subseteq...\calF_T$. For any $T$, with probability $1-\delta$,
     \[
        \sum_{t=1}^T X_t \leq 2 \bbE[X_t \mid \calF_t] + 4R \log\frac{1}{\delta}.
     \]
     
\end{lemma}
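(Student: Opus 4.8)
The target \Cref{lem:cons-freedman} is a standard variance-adaptive (multiplicative) tail bound, and the plan is to obtain it from the raw Freedman inequality applied to the centered sequence. (As stated the right-hand side is missing a sum; I read it as $2\sum_{t=1}^{T}\bbE[X_t\mid\calF_t]+4R\log\frac1\delta$, and I take $X_t$ to be $\calF_{t+1}$-measurable so that $\bbE[X_t\mid\calF_t]$ is its predictable mean, exactly as in the application to event $\lnot G^3$.) Write $\mu_t:=\bbE[X_t\mid\calF_t]$, $M_T:=\sum_{t=1}^{T}\mu_t$, and $Y_t:=X_t-\mu_t$. Then $\{Y_t\}$ is a martingale difference sequence with $|Y_t|\le R$, and $\sum_{t=1}^{T}X_t=M_T+\sum_{t=1}^{T}Y_t$, so it suffices to bound the martingale $\sum_t Y_t$ from above.

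The crucial structural fact is that the predictable quadratic variation is dominated by the predictable mean: since $0\le X_t\le R$ we have $X_t^2\le R X_t$, whence $\bbE[Y_t^2\mid\calF_t]\le\bbE[X_t^2\mid\calF_t]\le R\mu_t$, and summing gives $V_T:=\sum_{t=1}^{T}\bbE[Y_t^2\mid\calF_t]\le R M_T$. Feeding this into Freedman's inequality yields, with probability $1-\delta$, a bound of the form $\sum_t Y_t\le\sqrt{2V_T\log\frac1\delta}+\tfrac{2R}{3}\log\frac1\delta\le\sqrt{2RM_T\log\frac1\delta}+\tfrac{2R}{3}\log\frac1\delta$. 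I would then linearize the square root by AM--GM, $\sqrt{2RM_T\log\frac1\delta}=\sqrt{M_T\cdot 2R\log\frac1\delta}\le\tfrac12 M_T+R\log\frac1\delta$, and combine: $\sum_t X_t=M_T+\sum_t Y_t\le\tfrac32 M_T+\tfrac53 R\log\frac1\delta\le 2M_T+4R\log\frac1\delta$, which is the claim with room to spare in the constants.

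The one delicate point is that $V_T$ is itself random yet appears inside the deviation term, so the ``$\sqrt{2V_T\log\frac1\delta}$'' form is not literally the raw Freedman bound (which fixes a variance budget $\sigma^2$ in advance). The standard fix --- and what the cited Lemma E.2 of \cite{cohen2021minimax} packages --- is a short peeling argument: since $V_T\le RM_T\le RT$ is bounded, partition its range into $O(\log(RT))$ geometric buckets, apply the fixed-budget Freedman bound with a union bound over the buckets, and absorb the resulting $\log\log$ / $\log(RT)$ overhead into the (generous) constants. Because the statement is for a fixed horizon $T$ rather than uniform over all $t$, no further time-uniformity is needed, and the exact constant appearing in whichever Freedman variant one starts from is immaterial given the slack to $2$ and $4$.
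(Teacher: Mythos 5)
The paper does not actually prove \Cref{lem:cons-freedman}; it imports it verbatim (as Lemma E.2 of \cite{cohen2021minimax}), so your attempt can only be judged on its own merits. Your setup is the standard one and is correct as far as it goes: reading the right-hand side as $2\sum_{t=1}^T\bbE[X_t\mid\calF_t]+4R\log\frac{1}{\delta}$, centering $Y_t=X_t-\mu_t$, and the key domination $\bbE[Y_t^2\mid\calF_t]\le\bbE[X_t^2\mid\calF_t]\le R\mu_t$, hence $V_T\le RM_T$. The gap is exactly at the point you flag as delicate, and the fix you propose does not work. Peeling the range of $V_T$ into $K=\Theta(\log (RT))$ geometric buckets forces a union bound, so every occurrence of $\log\frac{1}{\delta}$ becomes $\log\frac{K}{\delta}=\log\frac{1}{\delta}+\log K$; after the AM--GM step this leaves an additive error of order $R\log K=\Theta\left(R\log\log (RT)\right)$. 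That term grows with $T$, so it cannot be ``absorbed into the constants'': the lemma asserts the \emph{absolute} constants $2$ and $4$, uniformly in $T$ and $R$. Your argument therefore proves only a strictly weaker statement, with an extra $T$-dependent additive term (which would also corrupt the paper's explicit constants downstream).

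The standard proof avoids the square-root (variance-optimized) form of Freedman altogether, precisely because that form needs a fixed variance budget. Use instead the fixed-$\lambda$ supermartingale form: for any fixed $\lambda\in(0,1/R]$, the process $\exp\bigl(\lambda\sum_{k\le t}Y_k-(e-2)\lambda^2\sum_{k\le t}\bbE[Y_k^2\mid\calF_k]\bigr)$ is a supermartingale (this is the core of Freedman's proof), so by Markov's inequality, with probability $1-\delta$,
\[
\sum_{t=1}^T Y_t\le(e-2)\lambda V_T+\frac{1}{\lambda}\log\frac{1}{\delta},
\]
and this \emph{is} valid with $V_T$ random, because $V_T$ enters linearly rather than through an optimized choice of $\lambda$. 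Plugging in $V_T\le RM_T$ and taking $\lambda=1/R$ gives $\sum_t X_t\le M_T+(e-2)M_T+R\log\frac{1}{\delta}=(e-1)M_T+R\log\frac{1}{\delta}\le 2M_T+4R\log\frac{1}{\delta}$, with no AM--GM and no peeling. Even more directly: for $X_t\in[0,R]$, convexity gives $e^{X_t/R}\le 1+\frac{X_t}{R}(e-1)$, hence $\bbE[e^{X_t/R}\mid\calF_t]\le\exp\left(\frac{(e-1)\mu_t}{R}\right)$, and Markov applied to the resulting supermartingale yields the same bound without ever introducing variances. Either of these one-line routes recovers the absolute constants that the peeling argument loses.
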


\newpage

\section{Detailed Algorithms}

\begin{algorithm}[ht]
\caption{Elimination Step ($\ElimStep$)}
\label{alg:elimstep}
\begin{algorithmic}[1]

\STATE \textbf{Input:}
active actions $\calA$,
number of samples $n(\a)$ for each active action $\a$,
empirical mean for every active action $\hat{\mu}(\a)$.
\STATE $E = \emptyset$
\FOR{$\a \in \calA$}
    \STATE
    \begin{align}
        \lambda(\a) = \sqrt{\frac{2\logterm}{n(\a) \vee 1}},\quad UCB(\a) = \hat{\mu}(\a) + \lambda(\a),\quad LCB(\a) = \hat{\mu}(\a) - \lambda(\a)
        \label{alg:elimstep:cb}
    \end{align}
     where $\logterm := \logtermval$.
\ENDFOR
\FOR{$\a \in \calA$}
    \IF{exists $\a'$ with $UCB(\a) < LCB(\a')$} 
    \STATE $E = E \cup \{\a\}$ 
\ENDIF
\ENDFOR
\STATE Return $E$
\end{algorithmic}
\end{algorithm}

\begin{algorithm}[h]
\caption{Cooperative Successive Elimination ($\coopse$) - detailed}
\label{alg:coop-SE}
\begin{algorithmic}[1]
\STATE \textbf{Input:} number of rounds $T$, neighbor agents $\neigh$, number of actions $\actions$, ID of current agent $\id$.
\STATE \textbf{Initialization:}
$t\leftarrow 1$;
Set of \textit{active} actions $\calA = \Actionbb$;
$R_t(\a)=0,n_t(\a)=0$ for every action $\a$;
$M_{\texttt{in}} = \emptyset$;
$M_{\texttt{updates}} = \emptyset$;
$M_{\texttt{sent}} = \emptyset$;
$M_{\texttt{seen}} = \emptyset$;
\FOR{$t = 1, ..., T$}
    
    \FOR{$event \in M_{\texttt{updates}}$}
    \IF{$event \notin M_{\texttt{seen}}$}
        \STATE $M_{\texttt{seen}} = M_{\texttt{seen}} \cup event$
        \IF{$event$ is $\ELIM$-event} 
        \STATE $\calA = \calA\setminus event_a$
        \ELSIF{$event_\a \in \calA $}
               \STATE $n_t(\a) = n_t(\a) + 1$, $R_t(\a) = R_t(\a) + event_r$
        \ENDIF
    \ENDIF
    \ENDFOR \label{algline:end-update-phase}
    \STATE $E=\ElimStep(\calA,n_{t}, \hat{\mu}_t=R_t/n_t)$, $\calA = \calA \setminus E$
    \STATE  Choose action $\a_t$ in round robin from $\calA$, and get reward $r_t(\a_t)$
    \STATE \text{\color{gray}// Send and receive messages}
    \STATE $M_{\texttt{me}} = \{(\RWD,t,\id,\a_t,r_t(\a_t)\}\cup \{(\ELIM,\id,\a) |\exists \a \in E \}$, $M^{\id}_t=(M_{\texttt{me}}\cup M_{\texttt{in}})\setminus M_{\texttt{sent}}$ 
    \STATE Send message $M^{\id}_t$ to all neighbors, receive messages $M^{\id'}_t$ from each neighbor $\id'\in \neigh$
    \STATE $M_{\texttt{sent}} = M_{\texttt{sent}} \cup M^{\id}_t$, $M_{\texttt{updates}} =  M_{\texttt{me}} \cup_{\id'\in\neigh} M^{\id'}_t$, $M_{\texttt{in}} = M_{\texttt{in}} \cup_{\id'\in\neigh} M^{\id'}_t$
\ENDFOR
\end{algorithmic}
\end{algorithm}

\begin{algorithm}[t]
\caption{Successive Elimination with Suspended Act for agent $\id$ ($\susact$)}
\label{alg:coop-SE-sus-detailed}
\begin{algorithmic}[1]
\STATE \textbf{Input:} number of rounds $T$, number of actions $\actions$, diameter of the graph $D$, number of agents $\graphsize$, neighbor agents $\neigh$, factor for the confidence bound $L$.
\STATE \textbf{Initialization:} $t\leftarrow 1$; set of \textit{active} actions $\calA \leftarrow \Actionbb$; $M_{\texttt{sent}} = \emptyset$; Set incoming messages $M_{\texttt{in}} = \emptyset$; Set of seen messages $M_{\texttt{seen}} = \emptyset$.
\WHILE{$t < T$}
    \STATE Calculate suspended counts and empirical means for each active action from the $M_{\texttt{seen}}$ messages
    \[
        n_t(\a) = \sum_{\tau = 1}^{t - \diam} \sum_{v} \I\{\a_{\tau}^v = \a\} \,;\quad \hat{\mu}_t(\a) = \frac{1}{n_t(a) \vee 1} \sum_{\tau = 1}^{t - \diam} \sum_{v} r_{\tau}^v (\a) \I\{\a_{\tau}^v = \a\} 
    \]
    \STATE $\calA = \calA \setminus \ElimStep(\calA,\{n_t(\a) | \a \in \calA\}, \{\hat{\mu}_t(\a) | \a \in \calA\})$
    \STATE Choose one action $\a_t \in \calA$ in round robin and receive $r_{\a_t}$
    \STATE Let $M_{\texttt{me}}=\{(RWD,t,\id,\a_t,r_{\a_t})\} $ 
    \STATE Let $M^{\id}_t=(M_{\texttt{me}}\cup M_{\texttt{in}})\setminus M_{\texttt{sent}}$ 
    \STATE Send message $M^{\id}_t$ to all neighbors
    \STATE $M_{\texttt{sent}} = M_{\texttt{sent}} \cup M^{\id}_t$
    \STATE Receive messages $M^{\id'}_t$ from each neighbor $\id'$
    \STATE Set incoming messages $M_{\texttt{in}} = M_{\texttt{in}} \cup \{M^{\id'}_t \mid v' \text{ is a neighbor of }v\}$
    \STATE Set seen messages $M_{\texttt{seen}} = M_{\texttt{seen}} \cup M_{\texttt{in}} \cup M_{\texttt{me}}$
    \STATE $t=t+1$
\ENDWHILE
\end{algorithmic}
\end{algorithm}

\begin{algorithm}[t]
\caption{Update Step Tree - Update the counters with the received information and prepare them for sending in a tree graph (\updatetreestep)}
\begin{algorithmic}[1]
    \STATE \textbf{Input:} Neighbor agents $N$;
    Set of active actions $\calA$;
    $M_{\texttt{updates}}$.
    \FOR{$a\in\calA$}
        \STATE $n(a)=0,R(a)=0$
        \FOR{$u\in N$}
            \STATE $\mathbf{N}_a^u=0$, $\mathbf{R}_a^u=0$
        \ENDFOR
    \ENDFOR
    \FOR{$event \in M_{\texttt{updates}}$}
        \IF{$event_a \in \calA \And event \text{ is } \RWDMANY$} 
        \STATE\text{// $event=(\RWDMANY, id, a, r, n)$.}
            \STATE $n(\a) = n(\a) + event_{n}$, $R(\a) = R(\a) + event_r$
            \FOR{$u\in N$}
                \IF{$event_{id}\neq u$}
                    \STATE $\mathbf{N}_a^u = \mathbf{N}_a^u + event_{n}$; $\mathbf{R}_a^u = \mathbf{R}_a^u + event_r$
                \ENDIF
            \ENDFOR
        \ENDIF
    \ENDFOR
    \STATE // Return the self counters and the values to send.
    \STATE Return $n(a),R(a)$ for each $a\in \calA$; Return $\mathbf{N}_a^u,\mathbf{R}_a^u$ for each $a\in \calA$ and for each $u\in N$.
\end{algorithmic}

\end{algorithm}

\begin{algorithm}[t]
\caption{Cooperative Successive Elimination with Restricted Communication ($\coopserestricted$)}
\label{alg:coop-SE-restricted}
\begin{algorithmic}[1]
\STATE \textbf{Input:} number of rounds $T$, neighbor agents $\neigh$, number of actions $\actions$, id of current agent $\id$, a spanning tree, $\mathcal{T}$, of the communication tree $\Graphcomm$ (identical to all agents).
\STATE \textbf{Initialization:}
$t\leftarrow 1$;
Set of \textit{active} actions $\calA = \Actionbb$;
$R_t(\a)=0,n_t(\a)=0$ for every action $\a$;
$M_{\texttt{in}} = \emptyset$;
$M_{\texttt{updates}} = \emptyset$;
$M_{\texttt{sent}} = \emptyset$;

\STATE Set $N$ to be the agent's neighbors in $\mathcal{T}$.
\FOR{$t = 1, ..., T$}
    \STATE $E_{\texttt{received}} = \{ event_a | \exists event \in M_{\texttt{updates}},  event\text{ is }\ELIM\text{-event}\}$
    \STATE $\calA = \calA \setminus E_{\texttt{received}}$
    \STATE $n_t,R_t,\mathbf{N},\mathbf{R} = \updatetreestep(N,\calA,M_{\texttt{updates}})$
    \STATE $M_{\texttt{updates}}=\emptyset$
    \\
    \STATE $E = \ElimStep(\calA,n_{t}, R_t/n_t)$
    \STATE $\calA = \calA \setminus E$
    \STATE  Choose action $\a_t$ uniformly from $\calA$, and get reward $r_t(\a_t)$
    \STATE $n_t(a_t) = n_t(a_t) + 1$, $R_t(a_t) = R_t(a_t) + r_t(a_t)$
    \FOR{$u\in N'$}
        \STATE $\mathbf{N}_{a_t}^u = \mathbf{N}_{a_t}^u +1$, $\mathbf{R}_{a_t}^u = \mathbf{R}_{a_t}^u + \mathbf{R}_t(a_t)$
    \ENDFOR
    \\
    \FOR{$u\in N'$}
        \STATE $M_{\texttt{elim}}(u) = \{(\ELIM,\id,\a) | \exists \a \in E \} \cup \{ (\ELIM,v,event_a) | \exists event\in E_{\texttt{received}}, event_{id}\neq u\}$
        \STATE $M_{\texttt{rwd}}(u) = \{(\RWDMANY,v,a,\mathbf{R}_a^u,\mathbf{N}_a^u) | a\in \calA \}$
        \STATE $M^v_t(u) = M_{\texttt{elim}}(u) \cup M_{\texttt{rwd}}(u)$
        \STATE Send $M^v_t(u)$ and receive $M^u_t(v)$
        \STATE $M_{\texttt{updates}} = M_{\texttt{updates}} \cup M^u_t(v)$
    \ENDFOR
\ENDFOR
\end{algorithmic}
\end{algorithm}

\begin{algorithm}[t]
\caption{Send one action - CONGEST (\sendoneaction)}
\begin{algorithmic}[1]
    \STATE \textbf{Input:}
    Neighbor agent $u$;
    Received eliminations-events $E_{\texttt{received}}$;
    New eliminations $E$;
    Action $a'$;
    Counters to send $\mathbf{N}_{a'}^u$;
    Rewards to send $\mathbf{R}_{a'}^u$.
    \IF{$\exists a'\in E$}
        \STATE send $(\ELIM,\id,\a')$ to $u$; return.
    \ELSIF{$\exists event\in E_{\texttt{received}}, event_{id}\neq u,event_a = a'$}
    \STATE send $(\ELIM,\id,\a')$ to $u$; return.
    \ELSE 
        \STATE send $(\RWDMANY,v,a',\mathbf{R}_{a'}^u,\mathbf{N}_{a'}^u)$ to $u$; return.
    \ENDIF
\end{algorithmic}
\end{algorithm}

\begin{algorithm}[t]
\caption{Cooperative Successive Elimination CONGEST (\coopselowcomclock) - detailed}
\label{alg:coop-SE-clock-low}
\begin{algorithmic}[1]
\STATE \textbf{Input:} number of rounds $T$, neighbor agents $\neigh$, number of actions $\actions$, id of current agent $\id$,
a spanning tree, $\mathcal{T}$, of the connected communication graph $\Graphcomm$, with a root agent $w$ (same node for all agents).

\STATE \textbf{Initialization:}
$t\leftarrow 1$;
Set of \textit{active} actions $\calA = \Actionbb$;
$R_0(\a)=0,n_0(\a)=0$ for every action $\a$;
$M_{\texttt{updates}} = \emptyset$;

\STATE Calculate the distance between the root $w$ and the current agent $v$, $d := \distancetree(v,w)$, where $\distancetree$ is the distance in the tree.

\STATE Set $N$ to be the agent's neighbors in $\mathcal{T}$.
\STATE Set $N' \subseteq N$ to be the set of $v$'s children on the tree $\mathcal{T}$ rooted at $w$.
\STATE Set $\tilde{u} \in N$ to be $v$'s parant in the tree $\mathcal{T}$ rooted at $w$. I.e., $\{\tilde{u} \} = N \setminus N'$. Notice that $\tilde{u}$ exists only if $v$ is not the root.

\FOR{$t = 1, ..., T$}
    \STATE $E_{\texttt{received}} = \{ event_a | \exists event \in M_{\texttt{updates}},  event\text{ is }\ELIM\text{-event}\}$
    \STATE $n,R,\mathbf{N},\mathbf{R} = \updatetreestep(N,\calA,M_{\texttt{updates}})$
    \STATE $n_t = n_{t-1} + n, R_t = R_{t-1}+R$
    \STATE $E = \ElimStep(\calA,n_{t}, \hat{\mu}_t=R_t/n_t)$
    \STATE $\calA = \calA \setminus E$
    \\
    \STATE  Choose action $\a_t$ in round-robin from $\calA$, and get reward $r_t(\a_t)$
    \STATE $n_t(a_t) = n_t(a_t) + 1$, $R_t(a_t) = R_t(a_t) + r_t(a_t)$
    \FOR{$u \in N$}
        \STATE $\mathbf{N}^u_t(a_t) = \mathbf{N}^u_t(a_t) + 1$, $\mathbf{R}^u_t(a_t) = \mathbf{R}^u_t(a_t) + r_t(a_t)$
    \ENDFOR
    \\
    \STATE Choose the action $a'$, the action to send to $v$'s children: $a' \equiv t - d \pmod{A}$ 
    \STATE Choose the action $\tilde{a}$, the action to send to $v$'s parent: $  \tilde{a} \equiv  t + d\pmod{A}$
    \FOR{$u\in N'$} 
    \STATE// Send the messages outward from the root.
        \STATE \sendoneaction($u$, $E_{\texttt{received}}$, $E$, $a'$,$\mathbf{N}_{a'}^u$,$\mathbf{R}_{a'}^u$)
    \ENDFOR
    
    \STATE \sendoneaction($\tilde{u}$, $E_{\texttt{received}}$, $E$, $\tilde{a}$,$\mathbf{N}_{\tilde{a}}^u$,$\mathbf{R}_{\tilde{a}}^u$) // Send the messages toward the root.
    \STATE $M_{\texttt{updates}} = \emptyset$
    \FOR{$u \in N$}
        \STATE Receive $M^u_t(v)$
        \STATE $M_{\texttt{updates}} = M_{\texttt{updates}} \cup M^u_t(v)$
    \ENDFOR
\ENDFOR
\end{algorithmic}
\end{algorithm}

\begin{algorithm}[t]
\caption{Play round robin and increment the timestep counter $(\playroundrobin)$}   
\begin{algorithmic}[1]
\IF{$t=T+1$}
    \STATE Terminate the program
\ENDIF
\STATE Play action $a_t$ in round robin from the set of active actions, get reward $r_t(a_t)$
\STATE $t = t + 1$
\STATE return $a_t, r_t(a_t)$
\end{algorithmic}
\end{algorithm}

\begin{algorithm}[t]
\caption{Cooperative Successive Elimination with Communication Cost ($\coopsecommcost$)}
\label{alg:coop-SE-comm-cost}
\begin{algorithmic}[1]
\STATE Input: A spanning tree $\calT$ (same tree for all vertices);
Children $C$, and parent $p$ in the tree;
Actions $A$.
\STATE Initialize: $t = 1$;
$R^v(a) = 0, n^v(a)=0$ for each $a\in A$;
Set of active actions $\calA = A$;
\texttt{forward=false}.

\FOR{phase $i = 0, \dots, \ceil{\log_2(T/6)}-1$}
    \STATE Based on $\calT$, compute for this phase $i$ if this agent $v$ is $\clusterroot$, $\clusterboundary$ or none of these
    \STATE 
    \STATE \textbf{Sub-Phase 1: Gather and Aggregate Information}
    \FOR{$k = 1,\dots, 2^{i+1}$}
        \STATE $a_t, r_t = \playroundrobin$ \textit{\quad \color{gray} // Play one action from the current active actions}
        \STATE Update rewards and counters $R^v(a_t) \leftarrow R^v(a_t) + r_t$, $n^v(a_t) \leftarrow n^v(a_t)+1$
        
        \IF{$k=1$ \AND $v$ is $\clusterboundary$ for $i$}
            \STATE Send to parent agent:
            \STATE \hspace{1em}1. Set of active actions: $\{\texttt{is-active$(a)$}=(a\in\calA)\mid a \in A\}$
            \STATE \hspace{1em}2. Rewards and counts for each action: \{$R^v(a),n^v(a)\mid a\in A \}$
        \ELSIF{received messages from all children}
            \STATE Aggregate children's rewards with local rewards: $R^v(a) \leftarrow R^v(a) + \sum_{u\in C} R^u(a) $
            \STATE Aggregate eliminations, set \texttt{is-active$(a)$=false} if at least one message contains an elimination about this action
            \STATE \texttt{forward=true}
        \ELSIF{\texttt{forward=true}}
            \STATE \texttt{forward=false}
            \STATE Forward aggregated data, $\{(\texttt{is-active$(a)$}, R^v(a),n^v(a)) \mid a\in A\}$, to the parent agent
            \STATE Set $R^v(a) = 0, n^v(a)=0$ for each $a\in A$
        \ENDIF
    \ENDFOR
    \STATE 
    \STATE \textbf{Sub-Phase 2: Synchronize Active Actions}
    \FOR{$k = 1,\dots, 2^{i+1}$}
        \STATE $a_t, r_t = \playroundrobin$
        \STATE Update rewards and counters $R^v(a_t) \leftarrow R^v(a_t) + r_t$, $n^v(a_t) \leftarrow n^v(a_t)+1$
        
        \IF{$k=1$ \AND $v$ is $\clusterroot$ for $i$}
            \STATE $\calA \leftarrow \calA\setminus \{a\in A \mid \texttt{is-active$(a)$=false}\}$ \text{\quad \color{gray} Eliminate actions based on aggregated eliminations}
            \STATE $\calA \leftarrow \calA\setminus \ElimStep(\calA,n^v, \hat{\mu}_t=R^v/n^v)$ \text{\quad \color{gray} Eliminate actions based on aggregated rewards}
            \STATE Send $\calA$, the new active action set, to descendants
        \ELSIF{received active-actions message}
            \STATE Update local active actions set
            \IF{$v$ is not $\clusterboundary$}
                \STATE Forward message to descendants in the next timestep
            \ENDIF
        \ENDIF
    \ENDFOR
    \STATE 
    \STATE \textbf{Sub-Phase 3: Execute with Updated Actions}
    \FOR{$k = 1,\dots, 2^{i+1}$}
        \STATE $a_t, r_t = \playroundrobin$
        \STATE Update rewards and counters $R^v(a_t) \leftarrow R^v(a_t) + r_t$, $n^v(a_t) \leftarrow n^v(a_t)+1$
    \ENDFOR
\ENDFOR
\end{algorithmic}
\end{algorithm}

\end{document}